\newtheorem{mydef}{Definition}
\newtheorem{theorem}{Theorem}
\theoremstyle{definition}
\newtheorem{corollary}{Corollary}[theorem]
\newtheorem{lemma}[theorem]{Lemma}
\newtheorem{asu}{Assumption}
\newcommand{\subparagraph}{}
\titlespacing{\section}{0pt}{5pt}{-\parskip}
\titlespacing{\subsection}{0pt}{5pt}{-\parskip}
\begin{document}
\title{Dynamic Differential Privacy for Distributed Machine Learning over Networks}

\author{Tao Zhang, Student Member, and Quanyan Zhu, Member \thanks{T. Zhang and Q. Zhu are with the Department of Electrical and Computer Engineering, Tandon School of Engineering, New York University, Brooklyn, NY; Email:\{tz636,qz494\}@nyu.edu\}}}

\maketitle


\IEEEpeerreviewmaketitle

\begin{abstract}
Privacy-preserving distributed machine learning becomes increasingly important due to the recent rapid growth of data. This paper focuses on a class of regularized empirical risk minimization (ERM) machine learning problems, and develops two methods to provide differential privacy to distributed learning algorithms over a network. 
%
We first decentralize the learning algorithm using the alternating direction method of multipliers (ADMM), and propose the methods of \textit{dual variable perturbation} and  \textit{primal variable perturbation} to provide dynamic differential privacy. 
%
%
The two mechanisms lead to algorithms that can provide privacy guarantees under mild conditions of the convexity and differentiability of the loss function and the regularizer. We study the performance of the algorithms, and show that the dual variable perturbation outperforms its primal counterpart.
%
To design an optimal privacy mechanisms, we analyze the fundamental tradeoff between privacy and accuracy, and provide guidelines to choose privacy parameters. Numerical experiments using customer information database are performed to corroborate the results on privacy and utility tradeoffs and design. 
\end{abstract}

\section{Introduction}

\textit{Distributed machine learning} is a promising way to manage deluge of data that has been witnessed recently. With the training data of size
ranging from 1$TB$ to 1$PB$ \cite{li2014scaling}, a centralized machine learning approach that collects and processes the data can lead to significant computational complexity and communications overhead. Therefore,  a decentralized approach to machine learning is imperative to provide the scalability of the data processing and improve the quality of decision-making, while reducing the computational cost.

One suitable approach to decentralize a centralized machine learning problem is \textit{alternating direction method of multiplier} (ADMM). It enables distributed training over a network of collaborative nodes who exchange their results with the neighbors. However, the communications between two neighboring nodes create serious privacy concerns for nodes who process sensitive data including social network data, web search histories, financial information, and medical records. An adversary can observe the outcome of the learning and acquire sensitive information of the training data of individual nodes. The adversary can be either a member of the learning network who observes its neighbors or an outsider who observes the entire network. A privacy-preserving mechanism needs to automatically build into the distributed machine learning scheme to protect the internal and external adversaries throughout the entire dynamic learning process. Differential privacy is a suitable concept that provides a strong guarantee that the removal or addition of a single database item does not allow an adversary to distinguish (substantially) an individual data point \cite{dwork2006calibrating}. 

In this work, we focus on a class of distributed ADMM-based \textit{empirical risk minimization} (ERM) problems, and  develop randomized algorithms that can provide differential privacy \cite{dwork2006calibrating, nissim2007smooth} while keeping the learning procedure accurate. We extend the privacy concepts to dynamic differential privacy to capture the nature of distributed machine learning over networks, and propose two privacy-preserving schemes of the regularized ERM-based optimization. The first method is \textit{dual variable perturbation} (DVP), in which we perturb the dual variable of each node at every ADMM iteration. The second is the \textit{primal variable perturbation} (PVP) which leverages the \textit{output perturbation} technique developed by Dwork et al. \cite{dwork2006calibrating} by adding noise to the update process of primal variable of each node of the ADMM-based distributed algorithm before sharing it to neighboring nodes.

 We investigate the performance of the algorithms, and show that the DVP outperforms PVP. We characterize the fundamental tradeoffs between privacy and accuracy by formulating an optimization problem and use numerical experiments to demonstrate the optimal design of privacy mechanisms. The main contributions of the paper are summarized as follows: 


 
\begin{description}
  \item[(i)] We use ADMM to decentralize regularized ERM algorithms to achieve distributed training of large datasets. Dynamic differential privacy is guaranteed for the distributed algorithm using the DVP, which adds noise to the update of the dual variable.
  
  \item[(ii)] We develop PVP method to add noise to the primal variables when they are transmitted to neighboring nodes. This approach guarantees dynamic differential privacy in which privacy is preserved at each update. 
  
  \item[(iii)] We provide the theoretical performance guarantees of the PVP perturbations of the distributed ERM with $l_2$ regularization. The performance is measured by the number of sample data points required to achieve a certain criteria. Our theoretical results show that DVP is prefered for more difficult learning problems that is non-separable or with small margin.  
  
  \item[(iv)] 
  We propose a design principle to  select the optimal privacy parameters by solving an optimization problem. Numerical experiments show that the PVP outperforms the DVP at managing the privacy-accuracy tradeoff. 
\end{description}

\subsection{Related Work}

There has been a significant amount of literature on the distributed classification learning algorithms. These works have mainly focused on either enhancing the efficiency of the learning model, or on producing a global classifier from multiple distributed local classifier trained at individual nodes. Researchers have focused on making the distributed algorithm suitable to large-scale datasets, e.g., MapReduce has been used to explore the performance improvements \cite{dean2008mapreduce}. In addition, methods such as ADMM methods \cite{forero2010consensus}, voting classification \cite{collins2002discriminative}, and mixing parameters \cite{mcdonald2010distributed} have been used to achieve distributed computation. Our approach to distributed machine learning is based on ADMM, in which the centralized problem acts as a group of coupled distributed convex optimization subproblems with the consensus constraints on the decision parameters over a network. 

In privacy-preserving data mining research, the privacy can be pried through, for example, \textit{composition attacks}, in which the adversary has some prior knowledge. Other works on data perturbation for privacy (e.g., \cite{evfimievski2004privacy},\cite{ kim2003multiplicative}) have focused on additive or multiplicative perturbations of individual samples, which might affect certain relationships among different samples in the database.
A body of existing literature also have studied the differential-private machine learning. For example, Kasiviswanathan et al. have derived a general method for probabilistically approximately correct (PAC, \cite{valiant1984theory}) in \cite{kasiviswanathan2011can}. 
Many works have investigated the tradeoff privacy and accuracy while developing and exploring the theory of differential privacy (examples include \cite{dwork2006calibrating, mcsherry2007mechanism, blum2005practical}). In this work, we extend the notion of differential privacy to a dynamic setting, and define dynamic differential privacy to capture the distributed and iterative nature of the ADMM-based distributed ERM. 

\subsection{Organization of the Paper}
The rest of the paper is organized as follows. Section 2  presents the ADMM approach to decentralize a centralized ERM problem, and describe the privacy concerns associated with the distributed machine learning. In Section 3, we present dual and primal variable perturbation algorithms to provide dynamic differential privacy. The analysis of privacy guarantee for the algorithms is discussed. Section 4 studies the performance of the privacy-preserving algorithms. Section 5 presents numerical experiments to corroborate the results and optimal design principles to tradeoff between privacy and accuracy. Finally, Section 6 presents concluding remarks and future research directions.

\section{Problem Statement}
%
%
Consider a connected network, which contains $P$ nodes described by one undirected graph $G(\mathcal{P}, \mathcal{E})$ with the set of nodes $\mathcal{P} = \{1,2,3,..., P\}$, and a set of edges $\mathcal{E}$ represented by lines denoting the links between connected nodes. A particular node $p \in \mathcal{P}$ only exchanges information between its neighboring node $j\in \mathcal{N}_p$, where $j\in \mathcal{N}_p$ is the set of all neighboring nodes of node $p$, and $N_p = |\mathcal{N}_p|$ is the number of neighboring nodes of node $p$.  
Each node $p$ contains a dataset $D_{p} = \{(x_{ip},y_{ip})\subset X \times Y : i = 0,1,...,B_{p} \},$ which is
of size $B_p$ with data vector $x_{ip} \in X \subseteq \mathbb{R}^d$, and the corresponding label $y_{ip} \in Y := \{-1, 1\}$. The entire network therefore has a set of data  $\hat{D} = \bigcup_{p \in \mathcal{P}} D_{p}.$

The target of the centralized classification algorithm is to find a classifier $f: X\rightarrow Y$ using all available data $\hat{D}$ that enables the entire network to classify any data $x'$ input to a label $y' \in \{-1, 1\}$.
Let $Z_{C_1}(f|\hat{D})$ be the objective function of a regularized empirical risk minimization problem (CR-ERM), defined as follows:
\begin{equation}\label{CRERM}  
Z_{C_1}(f|\hat{D}) := \frac{C^R}{B_p}\sum_{p=1}^{P}\sum_{i=1}^{B_{p}}\mathcal{\hat{L}}(y_{ip},\:\: f^T x_{ip})+\rho R(f),
\end{equation}
where $C^R\leq B_p$ is a regularization parameter, and $\rho>0$ is the parameter that controls the impact of the regularizer.
Suppose that $\hat{D}$ is available to the fusion center node, then we can choose the global classifier $f:X\rightarrow Y$ that minimizes the CR-ERM.

The \textit{loss function} $\mathcal{\hat{L}}(y_{ip},\:\: f^T x_{ip}): \mathbb{R}^d\rightarrow \mathbb{R}$, is used to measure the quality of the classifier trained. In this paper, we focus on the specific loss function
$ \mathcal{\hat{L}}(y_{ip},\:\: f^T x_{ip})=\mathcal{L}(y_{ip} f^T x_{ip})$.
The function $R(f)$ in (\ref{CRERM}) is a regularizer that prevents overfitting. 
In this paper, we have the following assumptions on the loss, regularization functions, and the data.
\begin{asu} 
The loss function $\mathcal{L}$ is strictly convex and doubly differentiable of $f$ with $| \mathcal{L}'| \leq 1$ and $|\mathcal{L}''|\leq c_1$, where $c_1$ is a constant. Both $\mathcal{L}$ and $\mathcal{L}'$ are continuous.
\end{asu}
\begin{asu} 
The regularizer function $R(\cdot)$ is continuous differentiable and 1-strongly convex. Both $R(\cdot)$ and $\nabla R(\cdot)$ are continuous.
\end{asu}
\begin{asu} 
We assume that $\lVert {x_{ip}} \lVert \leq 1$. Since $y_{ip}\in\{-1,1\}$, $|y_{ip}|=1$.
\end{asu}

\subsection{Distributed ERM}


To decentralize CR-ERM, we introduce decision variables $\{f_{p}\}_{p=1}^{P}$, where node $p$ determines its own classifier $f_p$,  and impose consensus constraints $f_1=f_2=...=f_P$ that guarantee global consistency of the classifiers. Let  $ \{w_{jp}\}$ be the auxiliary  variables to decouple $f_p$ of node $p$ from its neighbors $j\in \mathcal{N}_p$. 
Then, the consensus-based reformulation of (\ref{CRERM}) becomes 
\begin{equation}\label{equiCRERM}
\begin{split}
\min_{\{f_{p}\}_{p=1}^{P}}\:\: Z_{C_2}:=\frac{C^R}{B_p}\sum_{p=1}^{P}\sum_{i=1}^{B_{p}}\mathcal{L}(y_{ip} f_{p}^T x_{ip})+\sum_{p=1}^{P}\rho R(f_p).\\ 
\textrm{s.t. \ \ } f_{p} =w_{pj},\: w_{pj}=f_{j}, p = 1,...,P, j \in \mathcal{N}_p
\end{split}
\end{equation}
where  $Z_{C_2}(\{f_p\}_{p\in\mathcal{P}}|\hat{D})$ is the reformulated objective as a function of $\{f_{p}\}_{p=1}^{P}$. According to Lemma $1$ in \cite{forero2010consensus}, if $\{f_{p}\}_{p=1}^{P}$ presents a feasible solution of (\ref{equiCRERM}) and the network is connected, then problems (\ref{CRERM}) and (\ref{equiCRERM}) are equivalent, i.e., $f = f_{p}, p = 1,...,P$, where $f$ is a feasible solution of CR-ERM. Problem (\ref{equiCRERM}) can be solved in a distributed fashion using the alternative direction method of multiplier (ADMM) with each node $p\in\mathcal{P}$ optimizing the following distributed regularized empirical risk minimization problem (DR-ERM):
\begin{equation}\label{equiObjp}
Z_p(f_p|D_p) := \frac{C^R}{B_p}\sum_{i=1}^{B_p}\mathcal{L}(y_{ip}f_p^Tx_{ip}) + \rho R(f_p).
\end{equation}
The augmented Lagrange function associated with the DR-ERM is:
\begin{equation}\label{equiDLag}
\begin{aligned}
&L^D_p(f_{p},w_{pj}, \lambda^{k}_{pj})\\
&=Z_p+\sum_{i\in \mathcal{N}_p}\big(\lambda_{pi}^{a}\big)^T(f_{p}-w_{pi}) +\sum_{i\in \mathcal{N}_p}\big(\lambda_{pi}^{b}\big)^T(w_{pi}-f_{i}) \\
&+\frac{\eta}{2}\sum_{i\in \mathcal{N}_p}( \parallel f_{p}-w_{pi} \parallel^2 + \parallel w_{pi}-f_{i} \parallel^2).
\end{aligned}
\end{equation}

The distributed iterations solving (\ref{equiObjp}) are:
\begin{equation}\label{equifp1}
f_{p}(t+1)= \arg\min_{f_{p}}L_p^D\big(f_{p},w_{pj}(t), \lambda^k_{pj}(t)\big),
\end{equation}
\begin{equation}\label{equiAuxi} 
\begin{aligned}
w_{pj}(t+1) = \arg\min_{w_{pj}}L_p^D\big(f_{p}(t+1),w_{pj}, \lambda^k_{pj}(t)\big),
\end{aligned}
\end{equation}
\begin{equation}\label{equiLamb_1_1}
\begin{aligned}
\lambda^a_{pj}(t+1)=\lambda^a_{pj}(t)&+\eta(f_{p}(t+1)-w_{pj}(t+1)),\\
& p\in\mathcal{P},\:\:j\in\mathcal{N}_p,
\end{aligned}
\end{equation}
\begin{equation}\label{equiLambd_1_2}
\begin{aligned}
\lambda^b_{pj}(t+1)=\lambda^b_{pj}(t)&+\eta(w_{pj}(t+1)-f_{p}(t+1)),\\
& p\in\mathcal{P},\:\:j\in\mathcal{N}_p.
\end{aligned}
\end{equation}

According to Lemma 2 in \cite{forero2010consensus}, iterations (\ref{equifp1}) to (\ref{equiLambd_1_2}) can be further simplified by  
initializing the dual variables  $\lambda_{pj}^k=\mathbf{0}_{d\times d}$, and letting $\lambda_{p}(t) =  \sum_{j\in \mathcal{N}_p}\lambda^k_{pj} $, $p\in \mathcal{P}$, $j\in \mathcal{N}_p$, $k=a$, $b$, we can combine (\ref{equiLamb_1_1}) and (\ref{equiLambd_1_2}) into one update. Thus, we simplify (\ref{equifp1})-(\ref{equiLambd_1_2}) by introducing the following:
Let $L_p^{N}(t)$ be the short-hand notation of $L_p^{N}(\{f_{p}\},\{f_p(t)\}, \{\lambda_{p}(t)\})$ as :
\begin{equation}\label{equiLag_p}
\begin{aligned}
L_p^{N}(t):=&\frac{C^R}{B_p}\sum_{i=1}^{B_{p}}\mathcal{L}(y_{ip} f_{p}^T x_{ip})+\rho R(f_p) +2\lambda_{p}(t)^{T}f_{p} \\
&+\eta\sum_{i\in \mathcal{N}_p}\parallel f_{p}-\frac{1}{2}(f_p(t)+f_i(t)) \parallel^2.
\end{aligned}
\end{equation}
The ADMM iterations (\ref{equifp1})-(\ref{equiLambd_1_2}) can be reduced to
\begin{equation}\label{equifp_2}
f_{p}(t+1)= \arg\min_{f_{p}}L_p^N(f_{p},f_p(t), \lambda_{p}(t)),
\end{equation}
\begin{equation}\label{equiLamb2}
\lambda_{p}(t+1) = \lambda_{p}(t)+ \frac{\eta}{2}\sum_{j\in \mathcal{N}_p}[ f_{p}(t+1)-f_{j}(t+1)].
\end{equation}
%

\begin{algorithm}[tb]
   \caption{Distributed ERM}
 
   \label{alg:DistributedERM}
\begin{algorithmic}
  {\small \STATE {\bfseries Required:} Randomly initialize $f_{p}, \lambda_{p} = \mathbf{0}_{d\times 1}$ for every $p\in\mathcal{P}$ 
   \STATE {\bfseries Input:} $\hat{D}$
   \FOR{$t = 0,1,2,3,...$}
   \FOR{$p=0$ {\bfseries to} $P$}
   \STATE {Compute $f_{p}(t+1)$ via (\ref{equifp_2}).}
   \ENDFOR
   \FOR{$p=0$ {\bfseries to} $P$}
   \STATE {Broadcast $f_{p}(t+1)$ to all neighbors $j\in\mathcal{N}_p$.}
   \ENDFOR
   \FOR{$p=0$ {\bfseries to} $P$}
   \STATE {Compute $\lambda_{p}(t+1)$ via (\ref{equiLamb2}).}
   \ENDFOR
   \ENDFOR
   \STATE{\bfseries Output:} $f^*$.}
\end{algorithmic}
\end{algorithm}

ADMM-based distributed ERM iterations (\ref{equifp_2})-(\ref{equiLamb2}) is and summarized in Algorithm 1. Every node $p\in\mathcal{P}$ updates its local $d\times1$ estimates $f_p(t)$ and $\lambda_p(t)$. At iteration $t+1$, node $p$ updates the local $f_p(t+1)$ through (\ref{equifp_2}). Next, node $p$ broadcasts the latest $f_p(t+1)$ to all its neighboring nodes $j\in\mathcal{N}_p$. Iteration $t+1$ finishes as each node updates the $\lambda_p(t+1)$ via (\ref{equiLamb2}).

Every iteration of our algorithm is still a minimization problem similar to the centralized problem (\ref{CRERM}). However, the number of variables participating in solving (\ref{equifp_2}) per node per iteration is $N_p$, which is much smaller than the one in the centralized problem, which is $\sum_{p=1}^P N_p$. There are several methods to solve (\ref{equifp_2} ). For instance, projected gradient method, Newton method, and Broyden-Fletcher-Goldfarb-Shanno (BFGS) method \cite{dai2013perfect} that approximates the Newton method, to name a few. 

ADMM-based distributed machine learning has benefits due to its high scalability. It also provides a certain level of privacy since nodes do not communicate data directly but their decision variable $f_p$. However, the privacy arises when an adversary can make intelligent inferences at each step and extract the sensitive information based on his observation of the learning output of his neighboring nodes. 
 Simple anonymization is not sufficient to address this issue as discussed in Section 1. In the following subsection, we will discuss the adversary models, and present differential privacy solutions. 

\subsection{Privacy Concerns}
Although the data stored at each node is not exchanged during the entire ADMM algorithm, the potential privacy risk still exists. Suppose that the dataset $D_p$ stored at node $p$ contains sensitive information in data point $(x_i, y_i)$ that is not allowed to be released to other nodes in the network or anyone else outside. Let $K:\mathbb{R}^d\rightarrow \mathbb{R}$ be the randomized version of Algorithm 1, and let $\{f^*_p\}_{p\in\mathcal{P}}$ be the output of $K$ at all the nodes. Then, the output $\{f^*_p\}_{p\in\mathcal{P}}$ is random.  In the distributed version of the algorithm, each node optimizes its local empirical risk based on its own dataset $D_p$. Let $K^t_p$ be the node-$p$-dependent stochastic sub-algorithm of $K$ at iteration $t$, and let $f_p(t)$ be the output of $K^t_p(D_p)$ at iteration $t$ inputing $D_p$. Hence the output $f_p(t)$ is stochastic at each $t$. In this work, we consider the following attack model. The adversary can access the learning outputs of intermediate ADMM iterations as well as the final output. This type of adversary aims to obtain sensitive information about the private data point of the training dataset by observing the output $f_p(t)$ of $K^t_p$ or $f^*_p$ of $K$ for all $p\in \mathcal{P}$ at every stage $t$ of the training.
We protect the privacy of distributed network using the definition of \textit{differential privacy} in  \cite{dwork2006calibrating}. Specifically, we require that a change of any single data point in the dataset might only change the distribution of the output of the algorithm slightly, which is visible to the adversary; this is done by adding randomness to the output of the algorithm. Let $D_p$ and $D'_p$ be two datasets differing in one data point; i.e., let $(x_{ip},y_{ip})\subset D_p$, and $(x'_{ip},y'_{ip})\subset D'_p$, then $(x_{ip},y_{ip})\neq (x'_{ip},y'_{ip})$. In other words, their \textit{Hamming Distance}, which is defined as 
$H_d(D_p, D'_p) = \sum_{i=0}^{B_p}\textbf{1}\{i:x_i\neq x'_i\}$,
equals $1$; i.e., $H_d(D_p, D'_p) =1$. 

To protect the privacy against the adversary, we propose the concept of dynamic differential privacy, which enables the dynamic algorithm to be privacy-preserving at every stage of the learning.
\begin{mydef}(Dynamic $\alpha(t)$-Differential Privacy (DDP))
Consider a network of $P$ nodes $\mathcal{P}=\{1,\:2,\:...,\:P\}$, and each node $p$ has a training dataset $D_p$, and $\hat{D} = \bigcup_{p \in\mathcal{P}} D_{p}$.
Let $ K: \mathbb{R}^{d} \rightarrow \mathbb{R}$ be a randomized version of Algorithm 1.
Let $\alpha(t) = (\alpha_1(t), \alpha_2(t),..., \alpha_P(t))\in \mathbb{R}^P_{+}$, where $\alpha_p(t)\in \mathbb{R}_+$ is the privacy parameter of node $p$ at iteration $t$.
 Let $K^t_p$ be the node-$p$-dependent sub-algorithm of $K$, which corresponds to an ADMM iteration at $t$ that outputs $f_p(t)$.
Let $D'_p$ be any dataset with $H_d(D'_p, D_p)=1$, and $g_p(t)= K^t_p(D'_p)$.
We say that the algorithm $K$ is \emph{dynamic $\alpha_p(t)$-differential private} (DDP) if for any dataset $D'_p$, and for all $p\in\mathcal{P}$ that can be observed by an adversary of Type 2, and for all possible sets of the outcomes $S\subseteq \mathbbm{R}$, the following inequality holds:
\begin{equation}
\Pr[f_p(t)\in S]\leq e^{\alpha_p(t)} \cdot\Pr[g_p(t)\in S],
\end{equation}
for all  $t\in\mathbb{Z}$ during a learning process. The probability is taken with respect to $f_p(t)$, the output of $K^t_p$ at every stage $t$. The algorithm $K$ is called \emph{dynamic $\alpha(t)$-differential private} if the above conditions are satisfied.
\end{mydef}

Definition 1 provides a suitable differential privacy concept for the adversary. For dynamic $\alpha_p(t)$-differential private algorithms, the adversaries cannot extract additional information by observing the intermediate updates of $f_p(t)$ at each step.
Clearly, the algorithm with ADMM iterations shown in (\ref{equifp_2}) to (\ref{equiLamb2}) is not dynamic $\alpha_p(t)$-differential private. This is because the intermediate and final optimal output $f_p$'s are deterministic given dataset $D_p$. For $D'_p$ with $H_d(D_p, D'_p)=1$, the classifier will change completely, and the probability density $\Pr([f_p|D'_p])=0$, which leads to the ratio of probabilities $\frac{\Pr[f_p|D_p]}{\Pr[f_p|D'_p]}\rightarrow \infty$. 
In order to provide the DDP, we propose two algorithms, \textit{dual variable perturbation} and \textit{primal variable perturbation}, which are described in Section 3.1 and 3.2, respectively. 

\section{Dynamic Private Preserving}

\subsection{Dual Variable Perturbation}

In this subsection, we describe two algorithms that provide dynamic $\alpha$-differential privacy defined in Section 2.2. 
We protect the first algorithm based on  \textit{dual variable perturbation} (DVP), in which the dual variables $\{\lambda_p(t)\}_{p=1}^{P}$ are perturbed with a random noise vector $\epsilon_p(t)\in\mathbb{R}^d$ with the probability density function 
$
\mathcal{K}_p(\epsilon) \sim e^{-\zeta_p(t) \parallel \epsilon \parallel},
$
where $\zeta_p(t)$ is a parameter related to the value of $\alpha_p(t)$, and $\parallel \cdot\parallel$ denotes the $l_2$ norm.  At each iteration, we first perturb the dual variable $\lambda_p(t)$, obtained from the last iteration, and store it in a new variable $\mu_p(t)=\lambda_p(t)+\epsilon_p(t)$. Now the corresponding node-$p$-based augmented Lagrange function $L_p^{N}(t)$ becomes $L_p^{dual}\big(f_{p},f_p(t), \mu_{p}(t+1),\{f_i(t) \}_{i\in\mathcal{N}_p}\big)$, defined as follows, and $L_p^{dual}(t)$ is used as a short-hand notation: 

\begin{equation}\label{equiLagDual}
\begin{aligned}
L_p^{dual}(t)=& \frac{C^R}{B_p}\sum_{i=1}^{B_{p}}\mathcal{L}(y_{ip} f_{p}^T x_{ip})+\rho R(f_p) \\
&+2\mu_{p}(t+1)^{T}f_{p} +\frac{\Phi}{2}\parallel f_p \parallel^2 \\
&+\eta\sum_{i\in \mathcal{N}_p}\parallel f_{p}-\frac{1}{2}(f_p(t)+f_i(t)) \parallel^2,
\end{aligned}
\end{equation}
where $\frac{\Phi}{2}\parallel f_p \parallel^2$ is an additional penalty. As a result, the minimizer of $L_p^{dual}(t)$ is random. At each iteration, we first perturb the dual variable $\lambda_p(t)$, obtained from the last iteration, and store it in a new variable $\mu_p(t+1)$. 

Now, the iterations (\ref{equifp_2})-(\ref{equiLamb2}) becomes follows:

\begin{equation}\label{equiLamb_dual_pert}
\mu_{p}(t+1) = \lambda_{p}(t)+ \frac{C^R}{2B_p}\epsilon_p(t+1),
\end{equation}
\begin{equation}\label{equifp_Dual}
f_{p}(t+1) = \arg\min_{f_{p}}L_p^{dual}(t),
\end{equation}
\begin{equation}\label{equiLamb_dual}
\lambda_{p}(t+1) = \lambda_{p}(t)+ \frac{\eta}{2}\sum_{j\in \mathcal{N}_p}[ f_{p}(t+1)-f_{j}(t+1)].
\end{equation}

\begin{figure*}%
\centering
\begin{subfigure}{0.8\columnwidth}
\includegraphics[width=\columnwidth]{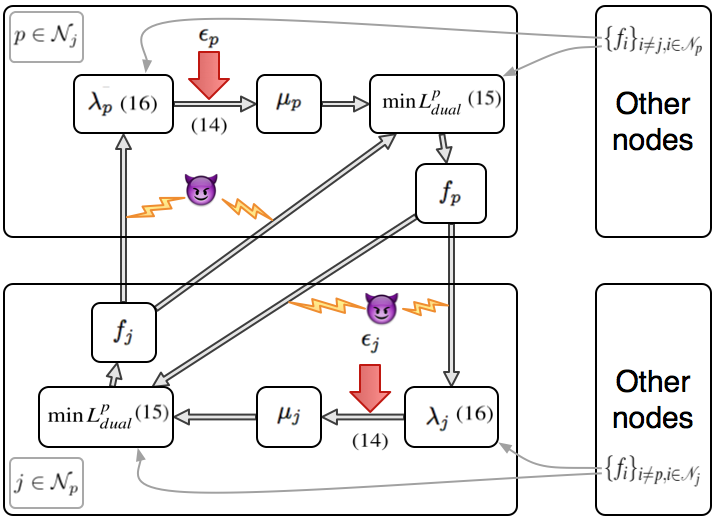}%
\caption{DVP during intermediate iterations}%
\label{subfigb}%
\end{subfigure}\hfill%
\begin{subfigure}{0.8\columnwidth}
\includegraphics[width=\columnwidth]{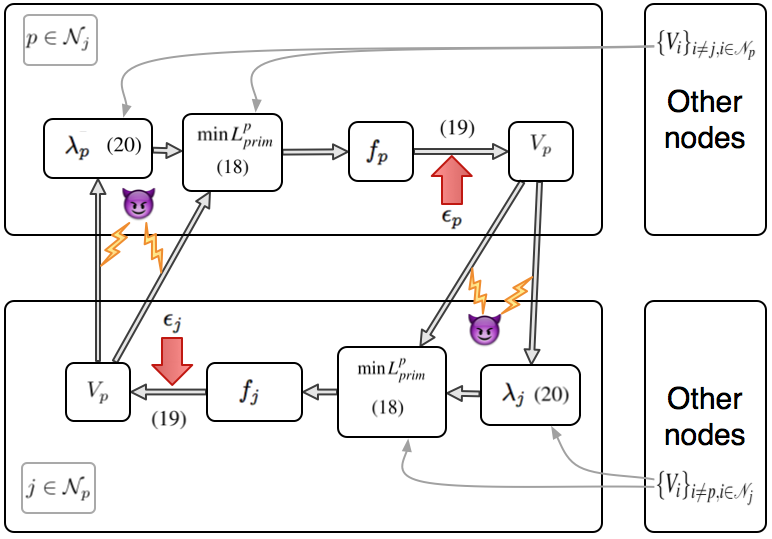}%
\caption{PVP during intermediate iterations}%
\label{subfigc}%
\end{subfigure}\hfill%
\caption{(a): DVP during intermediate iterations. The pertured $\mu_p$ participates in the $(15)$. As a result, the output $f_p$ at each iteration is a random variable, and the transmission of $f_p$ is differential private. (b): PVP during intermediate iterations. The pertured $V_p$ is a random variable. As a result, the transmission of $V_p$ is differential private.  }
\label{figabc}
\end{figure*}

\begin{algorithm}[tb]
   \caption{Dual Variable Perturbation}
   \label{alg:DVP}
\begin{algorithmic}
   {\small \STATE {\bfseries Required:} Randomly initialize $f_{p}, \lambda_{p} = \mathbf{0}_{d\times 1}$ for every $p\in\mathcal{P}$
   \STATE {\bfseries Input:} $\hat{D}$, $\{[\alpha_p(1),\alpha_p(2),...]\}_{p=1}^{P}$
   \FOR{$t = 0,1,2,3,...$}
   \FOR{$p=0$ {\bfseries to} $P$}
   \STATE {Let $\hat{\alpha}_p = \alpha_p(t) - \ln\Big(1+\frac{c_1}{ \frac{B_p}{C^R}\big(\rho+2\eta N_{p}\big)}\Big)^2$.}
   \IF{$\hat{\alpha}_p >0$}
   \STATE{$ \Phi = 0$.}
   \ELSE
   \STATE{$\Phi= \frac{c_1}{\frac{B_p}{C^R}(e^{\alpha_p(t)/4}-1)}-$
   $\rho-2\eta N_p$ and $\hat{\alpha}_p  = \alpha_p(t)/2$.}
   \ENDIF
   \STATE{Draw noise $\epsilon_{p}(t)$ according to $\mathcal{K}_p(\epsilon) \sim e^{-\zeta_p(t) \parallel \epsilon \parallel}$ with $\zeta_p(t) = \hat{\alpha}_p $.}
   \STATE{
    Compute $\mu_p(t+1)$  via (\ref{equiLamb_dual_pert}).}
    \STATE{
Compute $f_{p}(t+1)$ via (\ref{equifp_Dual}) with augmented Lagrange function as (\ref{equiLagDual}).}
   \ENDFOR
   \FOR{$p=0$ {\bfseries to} $P$}
   \STATE {Broadcast $f_{p}(t+1)$ to all neighbors $j \in \mathcal{N}_p$.}
   \ENDFOR
   \FOR{$p=0$ {\bfseries to} $P$}
   \STATE {Compute $\lambda_{p}(t+1)$ via (\ref{equiLamb_dual}).}
   \ENDFOR
   \ENDFOR
   \STATE{\bfseries Output:} $\{f_p^*\}_{p=1}^{P}$.}
\end{algorithmic}
\end{algorithm}

The iterations (\ref{equiLamb_dual_pert})-(\ref{equiLamb_dual}) are summarized as Algorithm 2, and are illustrated in Figure 1 and 3. All nodes have its corresponding value of $\rho$. Every node $p\in\mathcal{P}$ updates its local estimates $\mu_p(t)$, $f_p(t)$ and $\lambda_p(t)$ at time $t$; at time $t+1$, node $p$ first perturbs the dual variable $\lambda_p(t)$ obtained at time $t$ to obtain $\mu_p(t+1)$ via (\ref{equiLamb_dual_pert}), and then uses training dataset $D_p$ to compute $f_p(t+1)$ via (\ref{equifp_Dual}). Next, node $p$ sends $f_p(t+1)$ to all its neighboring nodes. The $(t+1)$-th update is done when each node updates its local $\lambda_p(t+1)$ via (\ref{equiLamb_dual}). We then have the following theorem.

\begin{figure}[htpb]
\includegraphics[scale=0.30]{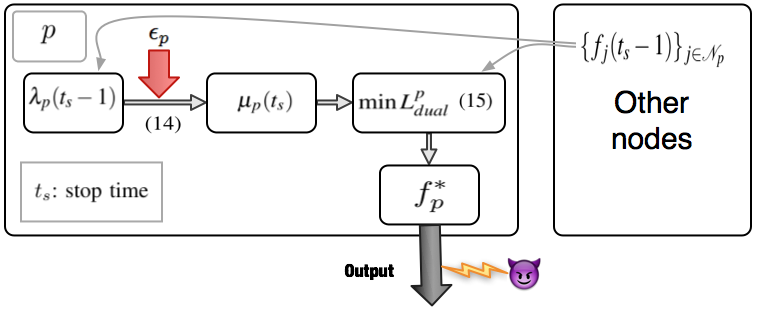}
\centering
\caption{The final iteration of both DVP and PVP. The pertured $\mu_p$ participates in the $(15)$. As a result, the output $f^*_p$ is a random variable, and the final output is differential private.}
\end{figure}

\begin{theorem} 
Under Assumption 1, 2 and 3, if the DR-ERM problem can be solved by Algorithm 2, then Algorithm $2$ solving this distributed problem is dynamic $\alpha$-differential private with $\alpha_p(t)$ for each node $p\in\mathcal{P}$ at time $t$. Let $Q(f_p(t)|D)$ and $Q(f_p(t)|D'_p)$ be the probability density functions of $f_p(t)$ given dataset $D$ and $D'_p$, respectively, with $H_d(D, D'_p)=1$. The ratio of conditional probabilities of $f_p(t)$ is bounded as follows:
\begin{equation}\label{equi_Theorem1_bound}
\frac{Q(f_p(t)|D)}{Q(f_p(t)|D'_p)}\leq e^{\alpha_p(t)}.
\end{equation}  

\textbf{Proof: See Appendix B.}
\end{theorem}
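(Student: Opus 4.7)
My plan is to adapt the classical objective-perturbation argument for differentially-private ERM (Chaudhuri--Monteleoni--Sarwate style) to the ADMM setting, proving the per-iteration bound (\ref{equi_Theorem1_bound}) separately for each $t$. Fix iteration $t+1$ and condition on everything already transmitted or updated before it, namely $\lambda_p(t)$, $f_p(t)$, and $\{f_i(t)\}_{i\in\mathcal{N}_p}$; the only fresh randomness is $\epsilon_p(t+1)$ and the only new observable is the broadcast $f_p(t+1)$. Writing the first-order optimality condition for the minimizer of $L_p^{dual}(t)$ in (\ref{equiLagDual}) and substituting $\mu_p(t+1)=\lambda_p(t)+\tfrac{C^R}{2B_p}\epsilon_p(t+1)$ gives
\begin{equation*}
\begin{aligned}
0 =\ &\tfrac{C^R}{B_p}\!\sum_{i=1}^{B_p} y_{ip}x_{ip}\mathcal{L}'(y_{ip}f^T x_{ip}) + \rho\nabla R(f) + 2\lambda_p(t) \\
 &+ \tfrac{C^R}{B_p}\epsilon_p(t+1) + \Phi f + \eta\!\sum_{i\in\mathcal{N}_p}\!(2f-f_p(t)-f_i(t)).
\end{aligned}
\end{equation*}
Because the right-hand side is the gradient of a strongly convex function of $f$, this relation defines a smooth bijection $\epsilon \leftrightarrow f$, so the change-of-variables formula gives $Q(f\mid D)=\mathcal{K}_p(\epsilon(f,D))\,\bigl|\det(\partial\epsilon/\partial f)\bigr|$, and the ratio of interest factorises as
\[
\frac{Q(f\mid D)}{Q(f\mid D'_p)} \;=\; \underbrace{\frac{\mathcal{K}_p(\epsilon)}{\mathcal{K}_p(\epsilon')}}_{\text{(I)}} \cdot \underbrace{\frac{|\det H(f,D)|}{|\det H(f,D'_p)|}}_{\text{(II)}},
\]
where $H$ is the Hessian of $L_p^{dual}$ in $f$ (the common $(B_p/C^R)^d$ Jacobian prefactors cancel).

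Factor (I) is the sensitivity term. Subtracting the first-order conditions for $\epsilon$ and $\epsilon'$ under $D$ and $D'_p$, only the single disagreeing record survives and leaves
\[
\epsilon-\epsilon' \;=\; -\bigl(y_{i_0}x_{i_0}\mathcal{L}'(y_{i_0}f^T x_{i_0}) - y'_{i_0}x'_{i_0}\mathcal{L}'(y'_{i_0}f^T x'_{i_0})\bigr),
\]
whose norm is bounded by a universal constant via Assumptions 1 and 3. The Laplace-type density $\mathcal{K}_p\propto e^{-\zeta_p(t)\|\cdot\|}$ combined with the reverse triangle inequality $|\|\epsilon\|-\|\epsilon'\|| \le \|\epsilon-\epsilon'\|$ then bounds (I) exponentially in $\zeta_p(t)$. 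Factor (II) is the curvature term. Swapping a single record changes $H$ by a matrix $E$ which is the difference of two rank-one outer products of the form $\mathcal{L}''(\cdot)xx^T$, so $\operatorname{rank}(E)\le 2$. Writing $H'=H+E$, the matrix-determinant lemma gives $\det(H+E)/\det H=\det(I_2+V^T H^{-1}U)$ for a rank-2 factorisation $E=UV^T$, so (II) depends only on the two nontrivial eigenvalues of $H^{-1}E$. Assumption 2 (1-strong convexity of $R$), the extra quadratic $\tfrac{\Phi}{2}\|f_p\|^2$, and the $2\eta N_p$ piece of the ADMM penalty force the lower bound $\lambda_{\min}(H)\ge \rho+\Phi+2\eta N_p$, while $\|E\|_{op}$ is controlled by $c_1 C^R/B_p$ through $|\mathcal{L}''|\le c_1$, $\|x\|\le 1$ and $|y|=1$. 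Together these yield an (II)-bound of the form $\bigl(1+\tfrac{c_1}{(B_p/C^R)(\rho+\Phi+2\eta N_p)}\bigr)^{2}$.

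The main obstacle, and the reason for the two-branch structure of Algorithm~2, is that the (II)-bound is a fixed function of the problem data $c_1,\rho,\eta,N_p,B_p,C^R$ and therefore cannot be shrunk below the privacy budget without intervention. In the easy regime $\hat\alpha_p>0$ the intrinsic strong convexity of $R$ already leaves sufficient slack, so I would set $\Phi=0$ and take $\zeta_p(t)=\hat\alpha_p$; by the definition of $\hat\alpha_p$ encoded in Algorithm~2 the product of the bounds on (I) and (II) then collapses to $e^{\alpha_p(t)}$. In the hard regime $\hat\alpha_p\le 0$ the native curvature is insufficient, so I would absorb the deficit into the explicit quadratic penalty, choosing $\Phi=\frac{c_1}{(B_p/C^R)(e^{\alpha_p(t)/4}-1)}-\rho-2\eta N_p$ so that the parenthesised curvature factor equals $e^{\alpha_p(t)/4}$ and (II)$\le e^{\alpha_p(t)/2}$, with the remaining privacy budget absorbed by $\zeta_p(t)=\alpha_p(t)/2$ in (I). Either way the per-stage ratio bound (\ref{equi_Theorem1_bound}) holds at every $t$, which is precisely dynamic $\alpha_p(t)$-differential privacy in the sense of Definition~1. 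Executing the rank-$2$ determinant step with exactly the constants hard-coded into the definitions of $\hat\alpha_p$ and $\Phi$ in Algorithm~2 is the most delicate part of the argument.
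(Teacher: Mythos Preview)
Your proposal is correct and follows essentially the same route as the paper's proof: first-order optimality to obtain a bijection $\epsilon\leftrightarrow f$, change of variables splitting the ratio into a noise-density factor and a Jacobian-ratio factor, a rank-$2$ determinant bound on the latter (the paper handles this via the trace-norm inequality $|h_1(M)|+|h_2(M)|\le\|M\|_\Sigma\le 2c_1$ rather than an operator-norm bound, which is what delivers the constant $c_1$ rather than $2c_1$ in the parenthesised factor), and the two-branch choice of $\Phi$ to balance the budget. The only point to reconcile in execution is that the sensitivity $\|\epsilon-\epsilon'\|$ evaluates to $2$, so the paper's proof actually sets $\zeta_p(t)=\hat\alpha_p/2$ (not $\hat\alpha_p$ as printed in Algorithm~2) to make factor~(I) $\le e^{\hat\alpha_p}$; with your stated choice $\zeta_p(t)=\hat\alpha_p$ the product would overshoot by $e^{\hat\alpha_p}$.
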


%
%
%
%
%
%
%
%

\subsection{Primal Variable Perturbation}

In this subsection, we provide the algorithm based on the \textit{primal variable perturbation} (PVP), which perturbs the primal variable $\{f_p(t+1)\}_{p=0}^P$ before sending the decision to the neighboring nodes. This algorithm can also provide dynamic differential privacy defined in Definition 1 and 2. Let the node-$p$-based augmented Lagrange function $L_p^{prim}\big(f_{p}, f_p(t),\epsilon_p(t),\lambda_{p}(t), \{V_i(t)\}_{i\in\mathcal{N}_p}\big)$ be defined as follows, and use $L_p^{prim}(t)$ as its short hand notation:
$$
\begin{aligned}
L_p^{prim}(t)=&\frac{C^R}{B_p}\sum_{i=1}^{B_{p}}\mathcal{L}(y_{ip} f_{p}^T x_{ip})+\rho R(f_p) +2\lambda_{p}(t)^{T}f_{p} \\
&+\eta\sum_{i\in \mathcal{N}_p}\parallel f_{p}-\frac{1}{2}(f_p(t)+V_i(t)-\epsilon_p(t)) \parallel^2.
\end{aligned}
$$
In this method, we divide the entire training process into two parts: (i) the intermediate iterations, and (ii) the final interation. 
During the intermediate iterations, we use the unperturbed primal $f_p(t)$ obtained at time $t$ in the augmented Lagrange function and subtract the noise vector $\epsilon_p(t)$ added at time $t$ to reduce the noise in the minimization in (\ref{equiVp}). Note that the noise $\epsilon_p(t)$ at time $t$ is known at time $t+1$. The privacy of releasing primal variable is not affected.

The corresponding ADMM iterations  that can provide dynamic $\alpha_p(t)$-differential privacy at time $t$ are as follows:
\begin{equation}\label{equifp_prim}
f_{p}(t+1) = \arg\min_{f_{p}}L_p^{prim}(t),
\end{equation}
\begin{equation}\label{equiVp}
V_{p}(t+1) = f_{p}(t+1)+ \epsilon_p(t+1),
\end{equation}
\begin{equation}\label{equiLamb_prim}
\lambda_{p}(t+1) = \lambda_{p}(t)+ \frac{\eta}{2}\sum_{j\in \mathcal{N}_p}[ V_{p}(t+1)-V_{j}(t+1)],
\end{equation}
where $\epsilon_p(t+1)$ is the random noise vector with the density function $\mathcal{K}_p(\epsilon) \sim e^{-\zeta_p(t) \parallel \epsilon \parallel}$. The  augmented Lagrange function is (\ref{equiLag_p}). 
Let $t_s$ be the time when we enter the final iteration.
When $t=t_s$ we enter the final iteration at $t_s$, we apply the DVP to update the variables.
Specifically, we input the data sets $\hat{D}$ to DVP and use the $\{f_p(t_s-1)\}_p$ and $\{\lambda_p(t_s-1)\}_p$, obtained from (\ref{equifp_prim}) and (\ref{equiLamb_prim}), in iteration (\ref{equiLamb_dual_pert})-(\ref{equiLamb_dual}):
\begin{equation}\label{equiPrim_Last_up}
\mu_{p}(t_s+1) = \lambda_{p}(t_s)+ \frac{C^R}{2B_p}\epsilon_p(t_s+1),
\end{equation}    
\begin{equation}\label{equiPrim_Last_fp}
f_{p}(t+1) = \arg\min_{f_{p}}L_p^{dual}(t_s),
\end{equation}
\begin{equation}\label{equiPrim_Last_lambda_p}
\lambda_{p}(t_s+1) = \lambda_{p}(t_s)+ \frac{\eta}{2}\sum_{j\in \mathcal{N}_p}[ f_{p}(t_s+1)-f_{j}(t_s+1)].
\end{equation}
$\{f_p(t_s+1)\}_{p\in\mathcal{P}}$ is the final output of the PVP algorithm. 

The iterations (\ref{equifp_prim})-(\ref{equiLamb_prim}) and (\ref{equiPrim_Last_up})-(\ref{equiPrim_Last_lambda_p}) are summarized in Algorithm 3, and are illustrated in Figure 4 and 5. Each node $p\in\mathcal{P}$ updates $f_p(t)$, $V_p(t)$ and $\lambda_p(t)$ at time $t$. Then, at time $t+1$, the training dataset is used to compute $f_p(t+1)$ via  (\ref{equifp_prim}), which is then perturbed to obtain $V_p(t+1)$ via (\ref{equiVp}). Next, $V_p(t+1)$ is distributed to all the neighboring nodes of node $p$. Finally, $\lambda_p(t+1)$ is updated via (\ref{equiLamb_prim}). The final iteration follows the DVP. We then have the following theorem.

\begin{algorithm}[tb]
   \caption{Primal Variable Perturbation}
   \label{alg:PVP}
\begin{algorithmic}
   {\small \STATE {\bfseries Required:} Randomly initialize $f_{p}, \lambda_{p} = \mathbf{0}_{d\times 1}$ for every $p\in\mathcal{P}$
   \STATE {\bfseries Input:} $\hat{D}$, $\{[\alpha_p(1),\alpha_p(2),...]\}_{p=1}^{P}.$
   \FOR{$t = 0,1,2,3,...$}
   \FOR{$p=0$ {\bfseries to} $P$}
   \STATE { Draw noise $\epsilon_{p}(t)$ according to $\mathcal{K}_p(\epsilon) \sim e^{-\zeta_p(t) \parallel \epsilon \parallel}$ with $\zeta_p(t) =  \frac{\rho B_p\alpha_p(t)}{2C^R}$.}
   \STATE{Compute $f_{p}(t+1)$ via (\ref{equifp_prim})
   with augmented Lagrange function as (\ref{equiLag_p}).}
  \STATE{
Compute $V_{p}(t+1)$ via (\ref{equiVp}).}
   \ENDFOR
   \FOR{$p=0$ {\bfseries to} $P$}
   \STATE {Broadcast $f_{p}(t+1)$ to all neighbors $j \in \mathcal{N}_p$.}
   \ENDFOR
   \FOR{$p=0$ {\bfseries to} $P$}
   \STATE {Compute $\lambda_{p}(t+1)$ via (\ref{equiLamb_prim}).}
   \IF {$t=$ \textit{stop time}}
   \STATE {Use the latest $\{f_p(t)\}_p$ and $\{\lambda_p(t)\}_p$ obtained as initial values, and input $\hat{D}$ to Algorithm 2 to iterate the loop once.}
   \ENDIF
   \ENDFOR
   \ENDFOR
   \STATE{\bfseries Output:} $\{f_p^*\}_{p=1}^{P}$.}
\end{algorithmic}
\end{algorithm}

\begin{theorem}
Under Assumption 1, 2 and 3, if the DR-ERM problem can be solved by Algorithm 3, then Algorithm 3 solving this distributed problem is dynamic $\alpha(t)$-differential private. The ratio of conditional probabilities of $f_p(t)$ is bounded as in (\ref{equi_Theorem1_bound}).

\textbf{Proof: See Appendix C}.
\end{theorem}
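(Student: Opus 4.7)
The plan splits Algorithm~3 along its two phases. The final iteration at $t=t_s$ is by construction a single DVP step with freshly drawn dual noise (iterations (\ref{equiPrim_Last_up})--(\ref{equiPrim_Last_lambda_p})), so Theorem~1 applied to that step delivers the ratio bound immediately. The remaining work is the intermediate iterations, where the quantity actually released to neighbors is $V_p(t+1)=f_p(t+1)+\epsilon_p(t+1)$ from (\ref{equiVp}); I will argue privacy for this release via an \emph{output perturbation} argument in the spirit of the classical sensitivity method, adapted to the ADMM state.

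I first condition on the full history $\mathcal{H}_t=\{V_i(s),\lambda_p(s),\epsilon_p(s)\}_{s\le t,\,i\in\mathcal{N}_p}$ observable at time $t$. Under this conditioning the primal update (\ref{equifp_prim}) becomes a deterministic map from the local dataset to $\mathbb{R}^d$; let $\hat f(D_p)=\arg\min_{f} L_p^{\mathrm{prim}}(t)$ and define $\hat f(D'_p)$ analogously for any $D'_p$ with $H_d(D_p,D'_p)=1$. The change of variables $\epsilon\leftrightarrow v-\hat f(\cdot)$ has unit Jacobian, so the density of $V_p(t+1)$ at a point $v$ equals $\mathcal{K}_p(v-\hat f(\cdot))$, and the reverse triangle inequality gives
\[
\frac{Q(v\mid D_p)}{Q(v\mid D'_p)} \;=\; \exp\!\bigl(-\zeta_p(t)\bigl(\|v-\hat f(D_p)\|-\|v-\hat f(D'_p)\|\bigr)\bigr) \;\le\; \exp\!\bigl(\zeta_p(t)\,\Delta\bigr),
\]
where $\Delta:=\|\hat f(D_p)-\hat f(D'_p)\|$ is the sensitivity of the primal minimizer.

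To bound $\Delta$, I would combine the first-order optimality conditions $\nabla L_p^{\mathrm{prim}}(\hat f(D_p);D_p)=0=\nabla L_p^{\mathrm{prim}}(\hat f(D'_p);D'_p)$ with $\rho$-strong convexity of $L_p^{\mathrm{prim}}(\cdot;D)$ in $f$ (inherited from Assumption~2, since the consensus quadratic only adds more strong convexity and the loss is convex). Strong monotonicity of $\nabla L_p^{\mathrm{prim}}(\cdot;D_p)$ together with Cauchy--Schwarz yields
\[
\rho\,\Delta^2 \;\le\; \bigl\langle \hat f(D_p)-\hat f(D'_p),\; \nabla L_p^{\mathrm{prim}}(\hat f(D'_p);D_p)-\nabla L_p^{\mathrm{prim}}(\hat f(D'_p);D'_p)\bigr\rangle \;\le\; \Delta\cdot \tfrac{2C^R}{B_p},
\]
because the gradient difference at a fixed point involves only the single differing sample, whose loss contribution is bounded in norm by $2C^R/B_p$ via $|\mathcal{L}'|\le 1$, $\|x_{ip}\|\le 1$, and $|y_{ip}|=1$ (Assumptions~1 and~3). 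Hence $\Delta\le 2C^R/(\rho B_p)$, and substituting the algorithmic choice $\zeta_p(t)=\rho B_p\alpha_p(t)/(2C^R)$ forces $\zeta_p(t)\,\Delta\le\alpha_p(t)$, establishing (\ref{equi_Theorem1_bound}) conditionally on $\mathcal{H}_t$ and hence unconditionally. The principal obstacle is this sensitivity step, and within it the subtlety that conditioning on past noise (which enters $L_p^{\mathrm{prim}}(t)$ through the shifted consensus target $f_p(t)+V_i(t)-\epsilon_p(t)$) must not inflate the bound; since $\hat f(D_p)$ and $\hat f(D'_p)$ are computed with identical histories, every history-dependent term cancels in the gradient difference and only the clean one-sample change remains to control.
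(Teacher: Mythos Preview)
Your proposal is correct and follows essentially the same route as the paper: bound the $\ell_2$-sensitivity of the primal minimizer via $\rho$-strong convexity of $L_p^{\mathrm{prim}}$ to obtain $\Delta\le 2C^R/(\rho B_p)$, then invoke the noise scale $\zeta_p(t)=\rho B_p\alpha_p(t)/(2C^R)$ so that the density ratio of the released $V_p(t+1)$ is bounded by $e^{\alpha_p(t)}$. You are somewhat more explicit than the paper in conditioning on the history and in separately dispatching the final DVP iteration via Theorem~1, but the substantive argument is identical.
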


%

\section{Performance Analysis}

In this section, we discuss the performance of Algorithm 2 and 3. We establish performance bounds for regularization functions with $l_2$ norm. 
Our analysis is based on the following assumptions:

\begin{asu} 
The data points $\{(x_{pi}, y_{pi})\}_{i=1}^{B_p}$ are drawn i.i.d. from a fixed but unknown probability distribution $\mathbbm{P}^{xy}(x_{pi}, y_{pi})$ at each node $p\in\mathcal{P}$. 
\end{asu} 
\begin{asu} 
$\epsilon_p(t)$ is drawn from (15) with the same $\alpha_p(t)=\alpha(t)$ for all $p\in\mathcal{P}$ at time $t\in\mathbb{Z}.$ 
\end{asu} 
We then define the expected loss of node $p$ using classifier $f_p$ as follows, under Assumption 4:
$
\hat{C}(f_p) :=C^R\mathbbm{E}_{(x,y)\sim \mathbbm{P}^{xy}}(\mathcal{L}(yf^Tx)),
$
and the corresponding expected objective function $\hat{Z}$ is:
$
\hat{Z}_p(f_p) := \hat{C}(f_p) + \rho R(f_p).
$
The performance of non-private non-distributed ERM classification learning has been already studied by, for example,  Shalev et al. in \cite{shalev2008svm} (also see the work of Chaudhuri  et al. in \cite{chaudhuri2011differentially}), which introduces a reference classifier $f^0$ with expected loss $ \hat{C}(f^0)$, and shows that if the number of data points is sufficiently large, then the actual expected loss of the trained $l_2$ regularized support vector machine (SVM) classifier $f_{SVM}$ satisfies
$
\hat{C}(f_{SVM})\leq \hat{C}^0 + \alpha_{acc},
$
where $\alpha_{acc}$ is the generalization error. We use a similar argument to study the accuracy of Algorithm 1. Let $f^0$ be the reference classifier of Algorithm 1. We quantify the performance of our algorithms with $f^*$ as the final output by the number of data points required to obtain  
$
\hat{C}(f^*) \leq \hat{C}^0+\alpha_{acc}.
$

However, instead of focusing on only the final output, we care about the learning performance at all iterations.
Let $f_p^{non}(t+1) = \arg\min_{f_p}L_p^N(t)$ be the intermediate updated classifier at $t$, and let $f^*=\arg\min_{f_p}Z_p(f_p|D_p)$ be the final output of Algorithm 1. From Theorem 9 (see Appendix A), the sequence $\{f_p^{non}(t)\}$ is bounded and converges to the optimal value $f^*$ as time $t\rightarrow \infty$. Note that $\{f_p^{non}(t)\}$ is a non-private classifier without added perturbations. Since the optimization is minimization, then there exists a constant $\Delta^{non}(t)$ at time $t$ such that:
$
\hat{C}(f_p^{non}(t)) - \hat{C}(f^*) \leq \Delta^{non}(t),
$ 
and substituting it to %
$
\hat{C}(f^*) \leq \hat{C}^0+\alpha_{acc},
$
yields:
\begin{equation}\label{section4_2}
\hat{C}(f_p^{non}(t)) \leq \hat{C}^0+\Delta^{non}(t) +\alpha_{acc}.
\end{equation}
Clearly, the above condition depends on the reference classifier $f^0$; actually, as shown later in this section, the number of data points depends on the $l_2$-norm $\parallel f^0 \parallel$ of the reference classifier. Usually, the reference classifier is chosen with an upper bound on $\parallel f^0 \parallel$, say $b^0$. Based on (\ref{section4_2}), we provide the following theorem about the performance of Algorithm 1.

\begin{theorem}
Let $R(f_p(t))=\frac{1}{2}\parallel f_p(t)\parallel^2$, and let $f^0$ such that $\hat{C}(f^0)=\hat{C}^0$ for all $p\in\mathcal{P}$ at time $t$, and $\delta >0$ is a positive real number. Let $f_p^{non}(t+1) = \arg\min_{f_p} L_p^N(f_p,t|D_p)$ be the output of Algorithm 1. If Assumption 1 and 4 are satisfied, then there exists a constant $\beta_{non}$ such that if the number of data points, $B_p$ in $D_p=\Big\{(x_{ip},y_{ip})\subset \mathbbm{R}^d \times \{-1,1\}\Big\}$ satisfy:
$
B_p>\beta_{non}\Bigg( \frac{C^R\parallel f^0 \parallel^2 \ln(\frac{1}{\delta}) }{\alpha_{acc}^2}  \Bigg),
$
then $f_p^{non}(t+1)$ satisfies:
$
\mathbbm{P}\big( \hat{C}(f_p^{non}(t+1))\leq \hat{C}^0+\alpha_{acc}+\Delta^{non}(t) \big)\geq 1-\delta.
$
for all $t\in\mathbb{Z}_+$.

\textbf{Proof: See Appendix D}.
\end{theorem}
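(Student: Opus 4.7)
The plan is to split the target bound into an optimization-error term and a generalization-error term and handle each separately. Writing
\[
\hat{C}(f_p^{non}(t+1)) - \hat{C}^0 = \bigl(\hat{C}(f_p^{non}(t+1)) - \hat{C}(f^*)\bigr) + \bigl(\hat{C}(f^*) - \hat{C}(f^0)\bigr),
\]
the first bracket is at most $\Delta^{non}(t)$ by the very definition of $\Delta^{non}(t)$ given in the paragraph preceding the theorem, which itself rests on the ADMM convergence hypothesis (Theorem 9 in Appendix A). So it remains to show that with probability at least $1-\delta$ the second bracket is at most $\alpha_{acc}$ whenever $B_p$ exceeds the stated threshold.

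For that second bracket I would follow the Shalev-Shwartz / Chaudhuri route for strongly convex, Lipschitz-loss ERM. Since $R(f)=\tfrac12\|f\|^2$ is 1-strongly convex, $Z_p(\cdot|D_p)$ is $\rho$-strongly convex, and optimality of $f^*$ yields
\[
\tfrac{C^R}{B_p}\sum_{i=1}^{B_p}\mathcal{L}(y_{ip}(f^*)^T x_{ip}) + \rho R(f^*) \leq \tfrac{C^R}{B_p}\sum_{i=1}^{B_p}\mathcal{L}(y_{ip}(f^0)^T x_{ip}) + \rho R(f^0).
\]
Assumption 1 ($|\mathcal{L}'|\leq 1$) together with Assumption 3 ($\|x_{ip}\|\leq 1$) makes $\mathcal{L}(y f^T x)$ 1-Lipschitz in $f$, and under Assumption 4 the two empirical averages concentrate around $\hat{C}(f^*)/C^R$ and $\hat{C}(f^0)/C^R$. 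Standard uniform convergence over the norm-ball $\{f:\|f\|\leq b^0\}$, or equivalently an algorithmic-stability argument exploiting $\rho$-strong convexity, then gives, with probability at least $1-\delta$,
\[
\hat{C}(f^*) \leq \hat{C}(f^0) + c\sqrt{\tfrac{C^R\|f^0\|^2 \ln(1/\delta)}{B_p}}
\]
for a universal constant $c$; strong convexity also keeps $\|f^*\|$ controlled by $\|f^0\|$, so $f^*$ indeed lies in the ball used by the uniform bound. Forcing the square-root term to be at most $\alpha_{acc}$ and solving for $B_p$ produces the stated threshold, with $\beta_{non}=c^2$ absorbing the universal constant. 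Chaining the two brackets then gives the theorem for every $t\in\mathbb{Z}_+$, since the random event depends only on the draw of $D_p$ and not on the iteration index.

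The hard step will be the generalization argument: because $f^*$ is itself data-dependent, a pointwise Hoeffding bound at $f^0$ does not suffice, and one needs either uniform convergence over the entire norm-ball or a stability-based replacement. Matching the precise $C^R\|f^0\|^2\ln(1/\delta)/\alpha_{acc}^2$ sample-complexity scaling is a matter of bookkeeping, but one must also verify that the 1-Lipschitz and bounded-second-derivative hypotheses in Assumption 1, combined with the unit-norm data in Assumption 3, are enough to instantiate the standard Lipschitz-loss generalization bound without an additional uniform boundedness hypothesis on $\mathcal{L}$ itself.
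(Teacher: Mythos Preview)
Your two–bracket split and the handling of the first bracket via $\Delta^{non}(t)$ are fine and match the paper's last step. The treatment of the second bracket, however, differs from the paper's and, as written, has a real gap.

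The paper does not bound $\hat C(f^*)-\hat C(f^0)$ directly. It works with the \emph{regularized} expected objective $\hat Z_p(f)=\hat C(f)+\rho R(f)$ and writes
\[
\hat C(f^*) = \hat C^0 + \bigl(\hat Z_p(f^*)-\hat Z_p(\hat f)\bigr) + \bigl(\hat Z_p(\hat f)-\hat Z_p(f^0)\bigr) + \tfrac{\rho}{2}\|f^0\|^2 - \tfrac{\rho}{2}\|f^*\|^2,
\]
with $\hat f=\arg\min_f \hat Z_p(f)$. The middle bracket is $\le 0$ by definition of $\hat f$; the first bracket is controlled by the Sridharan--Shalev-Shwartz--Srebro fast-rate inequality as $O\bigl(C^R\ln(1/\delta)/(B_p\rho)\bigr)$, using that $f^*$ is the empirical regularized minimizer so the empirical excess vanishes; and the last two terms are made $\le \alpha_{acc}/2$ by the explicit choice $\rho=\alpha_{acc}/\|f^0\|^2$. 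Solving $O\bigl(C^R\ln(1/\delta)/(B_p\rho)\bigr)\le \alpha_{acc}/2$ with this $\rho$ gives the stated threshold, and $\Delta^{non}(t)$ is then appended exactly as you do.

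The gap in your route is the displayed bound $\hat C(f^*)\le \hat C(f^0)+c\sqrt{C^R\|f^0\|^2\ln(1/\delta)/B_p}$: it cannot hold independently of $\rho$, since $f^*$ depends on $\rho$ and sending $\rho\to\infty$ forces $f^*\to 0$, making $\hat C(f^*)$ bounded away from $\hat C(f^0)$ no matter how large $B_p$ is. What is missing is exactly the regularizer-bias term $\rho\bigl(R(f^0)-R(f^*)\bigr)\le \tfrac{\rho}{2}\|f^0\|^2$ that your own optimality inequality produces when you pass from empirical to expected loss, together with the choice $\rho=O(\alpha_{acc}/\|f^0\|^2)$ needed to make it $\le \alpha_{acc}/2$. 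Once you restore that term and tune $\rho$, your argument can be completed, but it then coincides with the paper's decomposition with a slow $O(1/\sqrt{B_p})$ uniform bound in place of Sridharan's $O(1/(B_p\rho))$ fast rate; the paper's route has the additional advantage of never needing the uniform-over-a-ball step you yourself flagged as the hard part.
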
 
Note that $\alpha_{acc}\leq 1$ is required for most machine learning algorithms. In the case of SVM, if the constraints are $y_i f^Tx_i\leq c_{SVM}$, for $i = 1,\:,...,\:n$, where $n$ is the number of data points, then, classification margin is $c_{svm}/\parallel f \parallel$. Thus, if we want to maximization the margin $c_{SVM}/\parallel f^0 \parallel$ we need to choose large value of $\parallel f^0 \parallel$. Larger value of $\parallel f^0 \parallel$ is usually chosen for non-separable or with small margin. 
In the following section, we provide the performance guarantees of Algorithm 2 and 3. 

\subsection{Performance of Private Algorithms}
Similar to Algorithm 1, we solve an optimization problem minimizing $L_p^{dual}(f_p,t|D_p)$ at each iteration. Let $f_p(t)$ and $\lambda_p(t)$ be the primal and dual variables used in minimizing $L_p^{dual}(f_p,t|D_p)$ at iteration $t$, respectively. Suppose that starting from iteration $t$, the noise vector is static with $\epsilon_p(t)$ generated at iteration $t$. To compare our private classifier at iteration $t$ with a private reference classifier $f^0(t)$, we construct a corresponding algorithm, Alg-2, associated with Algorithm 2. However, starting from iteration $t+1$, the noise vector in Alg-2 $\epsilon_p(t')=\epsilon_p(t)$ for all $t'>t$. In other words, solving Alg-2 is equivalent to solving the optimization problem with the objective function $Z_p^{dual}(f_p,t|D_p, \epsilon^{pi}(t))$, $t\geq 0$ defined as follows:
$$
\begin{aligned}
Z_p^{dual}(f_p,t|D_p, \epsilon_p(t)) :=Z_p(f_p|D_p)+\frac{C^R}{B_p}\epsilon_p(t)f_p.
\end{aligned}
$$ 
Let $f'_p(t)$ and $\lambda'_p(t)$ be the updated variables of the ADMM-based algorithm minimizing $ Z_p^{dual}(f_p,t|D_p)$ at iteration $t$.
Then, Alg-2 can be interpreted as minimizing $Z_p^{dual}(f_p,t|D_p, \epsilon^{pi}(t))$ with initial condition as $f'_p(0)=f_p(t)$ and $\lambda'_p(0)=\lambda_p(t)$ for all $p\in\mathcal{P}$. Let $Z_p^{dual}(f_p,t|D_p, \epsilon^{pi}(t))$ be regarded as the associated objective function of Alg-2.

For PVP, we can also introduce a similar algorithm denoted as Alg-3. Let $\epsilon^{pi}(t) = \epsilon_p(t)-\epsilon_i(t)$, for $i\in\mathcal{N}_p$. Then, the associated objective function of Alg-3 denoted by $Z_p^{prim}(f_p,t|D_p, \epsilon^{pi}(t))$, $t\geq0$, is defined as follows:
$$
\begin{aligned}
Z_p^{prim}(f_p,t|D_p, \epsilon^{pi}(t)) &:= Z_p(f_p|D_p)\\
&\hspace{-25mm}-\eta \sum_{i\in \mathcal{N}_p}\Big((f_p -\frac{1}{2}(f_p(t)+f_i(t))^T\cdot(\epsilon^{pi}(t))+\frac{1}{4}\big(\epsilon^{pi}(t)\big)^2\Big).
\end{aligned}
$$
Since both $Z_p^{dual}(f_p,t|D_p, \epsilon^{pi}(t))$ and $Z_p^{prim}(f_p,t|D_p, \epsilon^{pi}(t))$ are real and convex, then, similar to Algorithm 1, the sequence $\{f_p(t)\}$ is bounded and $f_p(t)$ converges to $f^*_p(t)$, which is a limit point of $f_p(t)$. Thus, there exists a constant  $\Delta^{priv}_p(t) = \Delta^{dual}_p(t)$ or $\Delta^{prim}_p(t)$ given noise vector $\epsilon_p(t)$ such that
$
\hat{C}(f_p(t)) - \hat{C}(f^*_p(t))\leq \Delta^{priv}_p(t).
$
The performance analysis in Theorem 3 can also used in DVP and PVP. Specifically, the performance is measured by the number of data points, $B_p$, for all $p\in \mathcal{P}$ required to obtain 
$
\hat{C}(f_p(t)) \leq \hat{C}^0(t)+\alpha_{acc}+\Delta^{priv}_p(t).
$
We say that every learned $f_p(t)$ is $\alpha_{acc}$-optimal if it satisfies the above inequality.

Since in Alg-3, the perturbed primal variable $V_p(t')$ is equal to $f_p(t')$ plus a constant $\epsilon_p(t)$ generated by Algorithm 3 at iteration $t$, for $t'\geq 0 $, we can find a constant $\Delta_p^{primV}(t)$ such that
$
\hat{C}(V_p(t))-\hat{C}(V_p^*(t))\leq \Delta_p^{primV}(t).
$
Similarly, we measure the performance of $V_p$ by the number of data points, $B_p$, for all $p\in\mathcal{P}$ required to achieve
$
\hat{C}(V_p(t)) \leq \hat{C}^0(t)+\alpha_{acc}+\Delta^{primV}_p(t),
$
where $\hat{C}^0(t)=\hat{C}(f^0(t))$,and $f^0(t)$ is a reference classifier.

We now establish the performance bounds for Algorithm 2, DVP, which is summarized in the following theorem.

\begin{theorem}
Let $R(f_p(t))=\frac{1}{2}\parallel f_p(t)\parallel^2$, and $f^0_p(t)$ such that $\hat{C}(f^0_p(t))=\hat{C}^0(t)$ for all $p\in\mathcal{P}$, and a real number $\delta >0$. If Assumption 1, 4 and 5 are satisfied, then there exists a constant $\beta_{dual}$ such that if the number of data points, $B_p$ in $D_p=\Big\{(x_{ip},y_{ip})\subset \mathbbm{R}^d \times \{-1,1\}\Big\}$ satisfy:
{\small $$
\begin{aligned}
B_p > \beta_{dual}\max \Bigg( & \max_t \Big( \frac{\parallel f^0_p(t+1) \parallel d \ln(\frac{d}{\delta})}{\alpha_{acc}\alpha_p(t)}\Big),\\
& \hspace{-25mm} \max_t \Big(\frac{C^Rc_1\parallel f_p^0(t+1) \parallel^2}{\alpha_{acc}\alpha_p(t)}\Big), \max_t \Big(\frac{C^R\parallel f^0_p(t+1) \parallel^2 \ln(\frac{1}{\delta}) }{\alpha_{acc}^2}\Big) \Bigg),
\end{aligned}
$$}
then $f^*_p(t+1)$ satisfies:
$
\mathbbm{P}\big( \hat{C}(f^*_p(t+1))\leq \hat{C}^0(t+1)+\alpha_{acc} \big)\geq 1-2\delta.
$

\textbf{Proof: See Appendix E}.
\end{theorem}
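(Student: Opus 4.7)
The plan is to decompose the excess expected risk
\begin{equation*}
\hat{C}(f^*_p(t+1))-\hat{C}^0(t+1)
\end{equation*}
into a private-versus-non-private gap and a generalization gap, control each separately, and close with a union bound. Letting $f_p^{non}(t+1)$ denote the non-private iterate that Algorithm 1 would produce from the same $(f_p(t),\lambda_p(t))$ state, I would start from
\begin{equation*}
\hat{C}(f^*_p(t+1))-\hat{C}^0(t+1) \le \bigl[\hat{C}(f^*_p(t+1))-\hat{C}(f_p^{non}(t+1))\bigr] + \bigl[\hat{C}(f_p^{non}(t+1))-\hat{C}^0(t+1)\bigr].
\end{equation*}
Theorem 3 already controls the second bracket: requiring each half of $\alpha_{acc}$ to absorb it produces the third $\max$ term $C^R\lVert f^0\rVert^2\ln(1/\delta)/\alpha_{acc}^2$ and contributes one $\delta$ to the failure budget.

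For the first bracket I would exploit the linear dependence of $L_p^{dual}(t)$ on $\mu_p(t+1)$. Both $f^*_p(t+1)$ and $f_p^{non}(t+1)$ are unique minimizers of strongly convex smooth objectives that differ only through the extra linear term $(C^R/B_p)\epsilon_p(t+1)^{T} f_p$ coming from the dual perturbation and, in the regime $\hat{\alpha}_p\le 0$, the extra quadratic penalty $(\Phi/2)\lVert f_p\rVert^2$. Invoking strong convexity of the unperturbed objective with modulus at least $\rho+2\eta N_p+\Phi$ and comparing first-order optimality conditions, I expect a stability estimate of the form
\begin{equation*}
\lVert f^*_p(t+1)-f_p^{non}(t+1)\rVert \;\le\; \frac{(C^R/B_p)\,\lVert\epsilon_p(t+1)\rVert + \Phi\,\lVert f_p^{non}(t+1)\rVert}{\rho+2\eta N_p+\Phi}.
\end{equation*}
Since $|\mathcal{L}'|\le 1$ and $\lVert x_{ip}\rVert\le 1$, $\hat{C}$ is $C^R$-Lipschitz in $f$, so the resulting risk gap is bounded by $C^R$ times the right-hand side.

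A high-probability bound on $\lVert\epsilon_p(t+1)\rVert$ then follows from noting that the radial marginal of the density $\propto e^{-\zeta_p(t)\lVert\epsilon\rVert}$ on $\mathbb{R}^d$ is $\mathrm{Gamma}(d,\zeta_p(t))$; standard Gamma tail inequalities give $\mathbbm{P}[\lVert\epsilon_p(t+1)\rVert\ge d\ln(d/\delta)/\zeta_p(t)]\le\delta$, contributing the second $\delta$ to the failure budget. Substituting $\zeta_p(t)=\hat{\alpha}_p=\Theta(\alpha_p(t))$ and demanding the noise contribution to be small relative to $\alpha_{acc}/\lVert f^0\rVert$ produces the first $\max$ term $\lVert f^0\rVert d\ln(d/\delta)/(\alpha_{acc}\alpha_p(t))$. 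The $\Phi$ contribution is active only when $\hat{\alpha}_p\le 0$, where $\Phi=\Theta(c_1/\alpha_p(t))-\rho-2\eta N_p$; demanding $\Phi\lVert f^0\rVert^2$ to be negligible on the $\alpha_{acc}/C^R$ scale then produces the second $\max$ term $C^Rc_1\lVert f^0\rVert^2/(\alpha_{acc}\alpha_p(t))$.

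A union bound over the Gamma-tail event and the generalization event of Theorem 3 yields the claim with probability at least $1-2\delta$. The main difficulty I anticipate is bookkeeping in the $\Phi>0$ regime: the $(\Phi/2)\lVert f_p\rVert^2$ penalty shifts the minimizer away from the unpenalized non-private minimizer of $L_p^N$, so the stability bound must be applied to a $\Phi$-augmented reference and one must verify that replacing this reference by $f^0$ in the final risk inequality costs only an extra $\Phi\lVert f^0\rVert^2/2$, which is precisely what the second $\max$ term absorbs. Simultaneously tracking the strong-convexity modulus, the noise scale $\zeta_p(t)$, and the Lipschitz constant of $\hat{C}$ across the two regimes of $\hat{\alpha}_p$ while keeping all constants explicit is the most delicate part of the calculation.
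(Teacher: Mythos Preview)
Your overall strategy (strong-convexity stability under a linear perturbation, Gamma tail on $\lVert\epsilon_p\rVert$, then a generalization step) is in the same spirit as the paper's, but the execution differs at two places that matter, and one of them is a genuine gap.

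First, in this theorem $f^*_p(t+1)$ is \emph{not} the Algorithm~2 iterate $\arg\min_{f_p} L_p^{dual}(t)$; it is the minimizer of the frozen-noise objective $Z_p^{dual}(f_p,t|D_p)=Z_p(f_p|D_p)+\frac{C^R}{B_p}\epsilon_p(t)^Tf_p$. It therefore does not depend on $(f_p(t),\lambda_p(t))$ and does not involve the penalty $\Phi$ at all. The right non-private comparison point is $f^*=\arg\min_{f_p} Z_p(f_p|D_p)$, not the ADMM iterate $f_p^{non}(t+1)=\arg\min_{f_p} L_p^N(t)$. If you route the second bracket through Theorem~3 with $f_p^{non}(t+1)$, you inherit the extra $\Delta^{non}(t)$ term, which Theorem~4 does not carry; you would need the $\Delta$-free intermediate bound for $f^*$ instead. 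The paper in fact uses a different decomposition altogether, writing
\[
\hat C(f^*_p(t+1))=\hat C^0(t+1)+\bigl[\hat Z_p(f^*_p(t+1))-\hat Z_p(\hat f_p)\bigr]+\bigl[\hat Z_p(\hat f_p)-\hat Z_p(f^0_p)\bigr]+\tfrac{\rho}{2}\lVert f^0_p\rVert^2-\tfrac{\rho}{2}\lVert f^*_p\rVert^2,
\]
with $\hat f_p=\arg\min\hat Z_p$, and bounding the first bracket by combining the Sridharan et~al.\ empirical-to-population transfer with a lemma (the paper's Lemma~8) that controls the \emph{empirical} suboptimality $Z_p(f^*_p(t+1)|D_p)-Z_p(f^*|D_p)$ via exactly the strong-convexity perturbation idea you describe, but at the level of objective values rather than via Lipschitzness of $\hat C$.

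Second, your treatment of $\Phi$ is not how the middle term $C^Rc_1\lVert f^0\rVert^2/(\alpha_{acc}\alpha_p(t))$ arises. The paper does not analyze a $\Phi>0$ regime at all: it uses this sample-size condition together with $\rho\asymp\alpha_{acc}/\lVert f^0\rVert^2$ to force $\hat\alpha_p>\alpha_p(t)/5>0$, so that Algorithm~2 sets $\Phi=0$ and the noise scale satisfies $\zeta_p(t)=\Theta(\alpha_p(t))$. Under the stated lower bound on $B_p$ the $\Phi>0$ branch is never entered, so there is no $\Phi$-induced bias to bound; the role of that middle term is purely to pin down the noise level feeding into the Gamma tail.
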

\begin{corollary}
\textit{Let $f_p(t+1) = \arg\min_{f_p} L_p^{dual}(f_p,t|D_p)$ be the updated classifier of Algorithm 2 and let $f^0_p(t)$ be a reference classifier such that $\hat{C}(f^0_p(t)=\hat{C}^0(t)$. If all the conditions of Theorem 3 are satisfied,
then $f_p(t+1)$ satisfies
\begin{equation}\label{Coro_DVP_Conv}
\mathbbm{P}\big(\hat{C}(f_p(t+1))\leq \hat{C}^0(t)+\alpha_{acc}+\Delta_p^{dual}(t) \big)\geq 1-2\delta.
\end{equation}}
\end{corollary}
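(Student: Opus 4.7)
The plan is to derive Corollary 1 as a direct two-line consequence of Theorem 4, using the definition of the optimization gap $\Delta_p^{dual}(t)$ introduced just before the theorem. The ingredients I need are: (i) a deterministic comparison between the running iterate $f_p(t+1)$ and the limit point $f^*_p(t+1)$ of the auxiliary algorithm Alg-2 (whose noise vector is frozen at $\epsilon_p(t)$), and (ii) the probabilistic bound from Theorem 4 controlling $\hat{C}(f^*_p(t+1))$ against the reference $\hat{C}^0(\cdot)$.

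First I would invoke the discussion preceding Theorem 4: because $Z_p^{dual}(f_p,t\mid D_p,\epsilon_p(t))$ is strongly convex and real-valued under Assumptions 1 and 2, the Alg-2 iterates are bounded and converge to a limit $f^*_p(t+1)$, so the constant $\Delta_p^{dual}(t)$ is well defined and satisfies, by construction,
\[
\hat{C}(f_p(t+1))-\hat{C}(f^*_p(t+1))\leq \Delta_p^{dual}(t).
\]
This inequality is deterministic once $\epsilon_p(t)$ is drawn, so it carries no failure probability of its own.

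Next I would apply Theorem 4 to $f^*_p(t+1)$. Since the hypotheses of the corollary include the sample-size condition on $B_p$ and Assumptions 1, 4, and 5, Theorem 4 yields
\[
\Pr\!\left(\hat{C}(f^*_p(t+1))\leq \hat{C}^0(t+1)+\alpha_{acc}\right)\geq 1-2\delta.
\]
Chaining the two inequalities on the high-probability event gives
\[
\Pr\!\left(\hat{C}(f_p(t+1))\leq \hat{C}^0(t+1)+\alpha_{acc}+\Delta_p^{dual}(t)\right)\geq 1-2\delta,
\]
which matches \eqref{Coro_DVP_Conv} up to the bookkeeping of the reference classifier's iteration index (the statement writes $\hat{C}^0(t)$, but a reference classifier may be selected per iteration, so the two readings coincide).

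The proof is essentially bookkeeping, and I do not expect a hard step. The only mildly subtle point is confirming that the $1-2\delta$ confidence level transfers unchanged: this is fine precisely because $\Delta_p^{dual}(t)$ is, by its definition in Section 4, an almost-sure upper bound on the optimization gap given the realized noise $\epsilon_p(t)$, so no union bound over a second random event is required. If instead $\Delta_p^{dual}(t)$ had only been guaranteed with some probability $1-\delta'$, the analogous conclusion would still follow, but with confidence $1-2\delta-\delta'$; here no such inflation is needed.
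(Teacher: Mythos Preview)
Your proposal is correct and follows essentially the same approach as the paper: the paper's proof simply states the deterministic gap inequality $\hat{C}(f_p(t))-\hat{C}(f^*_p(t))\leq \Delta_p^{dual}(t)$, invokes the probabilistic bound on $\hat{C}(f^*_p(t+1))$ from Theorem~4, and chains the two to obtain \eqref{Coro_DVP_Conv}. Your additional remark that no union-bound inflation is needed (because the first inequality is deterministic given the realized noise) is correct and makes explicit a point the paper leaves implicit.
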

\begin{proof}

$
\hat{C}(f_p(t)) - \hat{C}(f^*_p(t))\leq \Delta^{dual}_p(t).
$
holds for $f_p(t)$ and $f^*_p(t)$
and from Theorem 3, 
$
\mathbbm{P}\big( \hat{C}(f^*_p(t+1))\leq \hat{C}^0(t+1)+\alpha_{acc} \big)\geq 1-2\delta.
$
Therefore, we can have (\ref{Coro_DVP_Conv}).
\end{proof}
Theorem 4 and Corollary 4.1 can guarantee the privacy defined in both Definition 1 and 2. The following theorem is used to analyze the performance bound of classifier $f_p(t+1)$ in (\ref{equifp_prim}), which minimizes $L_p^{prim}(t)$ that involves noise vectors from $V_p(t)$ perturbed at the previous iteration.

\begin{theorem}
Let $R(f_p(t))=\frac{1}{2}\parallel f_p(t)\parallel^2$, and $f^0_p(t)$ such that $\hat{C}(f^0_p(t))=\hat{C}^0(t)$, and a real number $\delta >0$. From Assumption 1, we have the loss function $\mathcal{L}(\cdot)$ is convex and differentiable with $\mathcal{L}'(\cdot)\leq 1$.
If Assumption 4 and 5 are satisfied, then there exists a constant $\beta^{A}_{prim}$ such that if the number of data points, $B_p$ in $D_p=\Big\{(x_{ip},y_{ip})\subset \mathbbm{R}^d \times \{-1,1\}\Big\}$ satisfies:
{\small$$
\begin{aligned}
B_p > \beta^A_{prim}\max\Bigg(&\max_t\Big(\frac{C^R\parallel f^0_p(t+1) \parallel^{3}\eta N_pd\ln(\frac{d}{\delta})}{\alpha^2_{acc}\alpha_p(t)}\Big),\\
&\max_t\Big(\frac{C^R\parallel f^0_p(t+1) \parallel^2 \ln(\frac{1}{\delta}) }{\alpha_{acc}^2}\Big) \Bigg),
\end{aligned}
$$
then $f^*_p(t+1)$ satisfies
$
\mathbbm{P}\big( \hat{C}(f^*_p(t+1))\leq \hat{C}^0(t+1)+\alpha_{acc} \big)\geq 1-2\delta.
$}
\textbf{Proof: See Appendix F}.
\end{theorem}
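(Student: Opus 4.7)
The plan is to mirror the three-step template of Theorem~4: decompose the excess risk as
$\hat{C}(f_p^*(t+1)) - \hat{C}^0(t+1) = [\hat{C}(f_p^*(t+1)) - \hat{C}(f_p^0(t+1))] + [\hat{C}(f_p^0(t+1)) - \hat{C}^0(t+1)]$, allocate half of the accuracy budget $\alpha_{acc}/2$ to each bracket, and conclude by a union bound over the generalization event and the noise-concentration event to obtain the $1-2\delta$ confidence. The second bracket is handled exactly as in Theorem~3 (and reused inside Theorem~4): since $f_p^0(t+1)$ is independent of the noise, a standard stability/generalization argument on the unperturbed $Z_p(\cdot|D_p)$ yields $|\hat{C}(f_p^0(t+1)) - \hat{C}^0(t+1)| \le \alpha_{acc}/2$ with probability at least $1-\delta$, provided $B_p$ exceeds a constant times $C^R\|f_p^0(t+1)\|^2\ln(1/\delta)/\alpha_{acc}^2$; this is precisely the second argument in the maximum that defines $\beta^A_{prim}$.

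For the first bracket, observe that the PVP associated objective satisfies $Z_p^{prim}(f_p,t|D_p,\epsilon^{pi}(t)) = Z_p(f_p|D_p) - \eta f_p^T \sum_{i\in\mathcal{N}_p}\epsilon^{pi}(t) + (\text{terms constant in } f_p)$. Because $R$ is 1-strongly convex and $\mathcal{L}$ is convex, $Z_p$ is $\rho$-strongly convex, and the standard perturbation-of-minimizers lemma applied to the first-order optimality conditions for $f_p^0(t+1)$ and $f_p^*(t+1)$ gives
$\|f_p^*(t+1) - f_p^0(t+1)\| \le \frac{\eta}{\rho}\bigl\|\sum_{i\in\mathcal{N}_p}\epsilon^{pi}(t)\bigr\| \le \frac{2\eta N_p}{\rho}\max_{j\in\{p\}\cup\mathcal{N}_p}\|\epsilon_j(t)\|$.
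The norm of a $d$-dimensional vector with density proportional to $e^{-\zeta_p(t)\|\epsilon\|}$ is Gamma-distributed; the same tail bound used for DVP, with $\zeta_p(t) = \rho B_p\alpha_p(t)/(2C^R)$ as prescribed by Algorithm~3, yields $\|\epsilon_j(t)\| \le 2 C^R d \ln(d/\delta)/(\rho B_p \alpha_p(t))$ with probability at least $1-\delta$. Substituting gives a high-probability bound on the parameter deviation that decays like $1/B_p$.

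To convert this parameter bound into a loss bound I would use the $C^R$-Lipschitz continuity of the empirical loss (from Assumption~1, $|\mathcal{L}'|\le 1$ and $\|x_{ip}\|\le 1$) together with the expansion $R(f_p^*) - R(f_p^0) = \tfrac{1}{2}(\|f_p^*\|^2 - \|f_p^0\|^2)$, keeping the linear piece (whose coefficient scales like $\rho\|f_p^0(t+1)\|$) and the quadratic remainder. Requiring the sum of these contributions to stay below $\alpha_{acc}/2$ and solving for $B_p$ produces the first argument of the maximum in $\beta^A_{prim}$. The main obstacle is precisely this bookkeeping: in PVP the noise coefficient is the graph-dependent $\eta N_p$, rather than the $C^R/B_p$ of DVP, which is what promotes the $\alpha_{acc}$ appearing in the DVP denominator to $\alpha_{acc}^2$ here and introduces the explicit $\eta N_p$ factor. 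Producing exactly the cubic $\|f_p^0(t+1)\|^3$ dependence (rather than a weaker power) requires simultaneously accounting for the local Lipschitz constant of the squared-norm regularizer near $f_p^*(t+1)$ (which grows like $\|f_p^0(t+1)\|$), the $\|f_p^0(t+1)\|$ inherited from the generalization control, and the quadratic-in-$\|\epsilon\|$ residual from expanding $R$; carefully splitting the $\alpha_{acc}$ budget across these three contributions, rather than naively, is what delivers the stated sample-complexity bound.
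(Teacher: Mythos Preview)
Your decomposition collapses before it starts: by the hypothesis of the theorem, $\hat{C}^0(t+1)$ is \emph{defined} to be $\hat{C}(f^0_p(t+1))$, so your second bracket $[\hat{C}(f^0_p(t+1))-\hat{C}^0(t+1)]$ is identically zero. There is nothing for a ``stability/generalization argument'' to do there, and this means the entire excess risk sits in your first bracket, which you then try to control by a perturbation argument that cannot work for the reason below.

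The perturbation-of-minimizers lemma compares two minimizers whose first-order conditions differ by a known linear term. The reference classifier $f^0_p(t+1)$ is an \emph{arbitrary} point with a prescribed expected loss; it does not satisfy any first-order optimality condition for $Z_p(\cdot\mid D_p)$ or for $Z_p^{prim}$, so the inequality $\|f^*_p(t+1)-f^0_p(t+1)\|\le \tfrac{\eta}{\rho}\|\sum_{i}\epsilon^{pi}(t)\|$ is unjustified. The correct comparison is between $f^*_p(t+1)=\arg\min Z_p^{prim}$ and the unperturbed empirical minimizer $f^*=\arg\min Z_p(\cdot\mid D_p)$; this is what the paper's Lemma~\ref{Lemma11} does, yielding $Z_p(f^*_p(t+1)\mid D_p)-Z_p(f^*\mid D_p)\le 16(C^R)^2\eta^2 N_p^2 d^2(\ln(d/\delta))^2/(\rho^3 B_p^2\alpha_p(t)^2)$ with probability at least $1-\delta$. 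What you are missing is the four-term decomposition through the population regularized minimizer $\hat{f}_p(t+1)=\arg\min \hat{Z}_p$: one writes $\hat{C}(f^*_p(t+1))=\hat{C}^0(t+1)+\big(\hat{Z}_p(f^*_p)-\hat{Z}_p(\hat{f}_p)\big)+\big(\hat{Z}_p(\hat{f}_p)-\hat{Z}_p(f^0_p)\big)+\tfrac{\rho}{2}\|f^0_p\|^2-\tfrac{\rho}{2}\|f^*_p\|^2$, drops the third term by optimality of $\hat{f}_p$, and bounds the second term by Sridharan et al.'s result, which converts empirical suboptimality $Z_p(f^*_p\mid D_p)-Z_p(f^*\mid D_p)$ into population suboptimality at the cost of an additive $\mathcal{O}(C^R\ln(1/\delta)/(B_p\rho))$ and a second failure probability $\delta$. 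This is where the $1-2\delta$ and the second argument of the max actually come from.

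Finally, your account of the cube $\|f^0_p(t+1)\|^3$ is off. It does not arise from juggling Lipschitz constants of $R$ and splitting the budget three ways; it falls out mechanically once you set $\rho=\alpha_{acc}/\|f^0_p(t+1)\|^2$ (to make $\tfrac{\rho}{2}\|f^0_p\|^2\le\alpha_{acc}/2$) and substitute into the $1/\rho^3$ factor of the Lemma~\ref{Lemma11} bound: requiring that term to be $O(\alpha_{acc})$ and taking a square root gives $B_p\gtrsim C^R\eta N_p d\ln(d/\delta)\|f^0_p\|^3/(\alpha_{acc}^2\alpha_p(t))$, exactly the first argument of the max.
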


Next, we establish the PVP performance bound of Algorithm 3. Theorem 6 and Corollary 6.1 shows the requirements under which the performance of the part 1 of Algorithm 3 is guaranteed. Corollary 6.2 combines the results from Theorem 5 and Corollary 6.2 to provide the performance bound of the part 2 of Algorithm 3.
\begin{theorem} 
Let $R(f_p(t))=\frac{1}{2}\parallel f_p(t)\parallel^2$, and $f^0_p(t)$ such that $\hat{C}(f^0_p(t))=\hat{C}^0(t)$, and $\delta >0$ is a positive real number. Let $f^*_p(t+1)=\arg\min_{f_p} Z_p^{prim}(t)$ be $\alpha_{acc}$-accurate according to Theorem 4.
In addition to Assumption 1, we also assume that $\mathcal{L}'$ satisfies: 
$
|\mathcal{L}'(a)-\mathcal{L}'(b)|\leq c_4|a - b|
$ for all pairs $(a,b)$ with a constant $c_4$. If Assumption 4 and 5 are satisfied, then there exists a constant $\beta^B_{prim}$ such that if the number of data points, $B_p$ in $D_p=\Big\{(x_{ip},y_{ip})\subset \mathbbm{R}^d \times \{-1,1\}\Big\}$ satisfies:
{\small \begin{equation} \label{Theo_6}
\begin{aligned}
B_p >& \beta^B_{prim}\max\Bigg(\max_t\Big(\frac{C^R\parallel f^0_p(t+1) \parallel^{3}\eta N_pd\ln(\frac{d}{\delta})}{\alpha^2_{acc}\alpha_p(t)}\Big), \\
&\max_t\Big(\frac{C^R\parallel f^0_p(t+1) \parallel^2 \ln(\frac{1}{\delta}) }{\alpha_{acc}^2}\Big),\\
& \max_t\Big(\frac{4C^B \parallel f^0(t+1) \parallel d \big(\ln(\frac{d}{\delta}) \big)^2 }{\alpha_{acc}\alpha_p(t)}\Big),
\end{aligned}
\end{equation}}
{\small$$
\begin{aligned}
&\max_t\Big(\frac{4\parallel f^0_p(t+1) \parallel^{3}\eta N_pd\ln(\frac{d}{\delta})}{\alpha^2_{acc}\alpha_p(t)}\Big),\\
& \max_t\Big(\frac{4\big(C^R \big)^{\frac{3}{2}}\parallel f^0_p(t+1) \parallel^2d \ln(\frac{d}{\delta})  }{\alpha_{acc}^{3/2}\alpha_p(t)} \Big)  \Bigg),
\end{aligned}
$$}

then $V^*_p(t+1)=f^*_p(t+1)+\epsilon_p(t+1)$ satisfies
$
\mathbbm{P}\big( \hat{C}(V^*_p(t+1))\leq \hat{C}^0(t+1)+\alpha_{acc} \big)\geq 1-3\delta.
$

\textbf{Proof: See Appendix G}.
\end{theorem}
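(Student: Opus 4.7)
The plan is to bound $\hat{C}(V^*_p(t+1))$ by writing
\[
\hat{C}(V^*_p(t+1)) - \hat{C}^0(t+1) = \bigl[\hat{C}(f^*_p(t+1)) - \hat{C}^0(t+1)\bigr] + \bigl[\hat{C}(V^*_p(t+1)) - \hat{C}(f^*_p(t+1))\bigr]
\]
and controlling the two brackets separately: the first by invoking Theorem 5 on the PVP intermediate classifier $f^*_p(t+1) = \arg\min_{f_p} Z_p^{prim}(t)$, and the second by a direct noise-perturbation argument that exploits the identity $V^*_p(t+1) = f^*_p(t+1) + \epsilon_p(t+1)$. Splitting the accuracy budget as $\alpha_{acc}/2+\alpha_{acc}/2$ between the two brackets and the confidence budget as $2\delta+\delta$ between Theorem 5 and the noise tail event, a union bound then delivers the $1-3\delta$ guarantee in the statement.

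For the first bracket, I would apply Theorem 5 with $\alpha_{acc}/2$ in place of $\alpha_{acc}$ and keep $\delta$ fixed. This produces the first two sample-complexity terms of (\ref{Theo_6}) (modulo absorbing the factor of two into $\beta^B_{prim}$) and yields $\hat{C}(f^*_p(t+1)) \leq \hat{C}^0(t+1) + \alpha_{acc}/2$ with probability at least $1-2\delta$.

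For the second bracket, since $\epsilon_p(t+1)$ is freshly drawn and independent of $f^*_p(t+1)$, I would expand $\mathcal{L}(yV^{*T}_p x)$ to second order around $yf^{*T}_p x$ along the direction $y\epsilon_p^T x$. Combining Assumption 1 ($|\mathcal{L}'|\leq 1$), the newly introduced Lipschitz condition $|\mathcal{L}'(a)-\mathcal{L}'(b)|\leq c_4|a-b|$, and $\parallel x \parallel \leq 1$ gives
\[
\hat{C}(V^*_p) - \hat{C}(f^*_p) \leq C^R \parallel \epsilon_p \parallel + \tfrac{C^R c_4}{2}\parallel \epsilon_p \parallel^2.
\]
The norm of the Laplace-type noise $\mathcal{K}_p(\epsilon)\propto e^{-\zeta_p(t)\parallel\epsilon\parallel}$ has a Gamma$(d,1/\zeta_p(t))$ marginal, and the standard Chernoff-style tail bound already used in the DVP proof yields $\parallel\epsilon_p(t+1)\parallel \leq d\ln(d/\delta)/\zeta_p(t)$ with probability at least $1-\delta$. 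Substituting $\zeta_p(t) = \rho B_p \alpha_p(t)/(2C^R)$ from Algorithm 3 and then demanding separately that $C^R\parallel\epsilon_p\parallel \leq \alpha_{acc}/4$ and $C^R c_4 \parallel\epsilon_p\parallel^2/2 \leq \alpha_{acc}/4$ produce the remaining three lower bounds in (\ref{Theo_6}); the $(\ln(d/\delta))^2$ factor comes from squaring the tail estimate in the quadratic remainder, and the $(C^R)^{3/2}/\alpha_{acc}^{3/2}$ scaling arises when an intermediate Cauchy--Schwarz or AM--GM step couples the linear noise term with the reference-classifier norm $\parallel f_p^0(t+1) \parallel$.

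The main obstacle will be tracking the precise dependence of the last three sample-complexity terms on $\parallel f_p^0(t+1)\parallel$, $\ln(d/\delta)$, and the various powers of $C^R$ and $\alpha_{acc}$; in particular, the unusual $\alpha_{acc}^{3/2}$ denominator suggests that the two Taylor terms cannot simply be bounded independently but must be combined with the strong-convexity-derived bound on $\parallel f_p^*(t+1) \parallel$ already guaranteed by the hypotheses of Theorem 5.
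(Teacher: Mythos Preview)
Your plan is sound and would deliver a valid $1-3\delta$ guarantee, but it takes a genuinely different route from the paper's proof, and the difference is instructive.

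The paper does \emph{not} split $\hat{C}(V^*_p)-\hat{C}^0$ into a Theorem~5 piece for $f^*_p$ plus a separate perturbation $\hat{C}(V^*_p)-\hat{C}(f^*_p)$. Instead it applies the same Shalev-Shwartz/Sridharan decomposition used in Theorems~3--5 \emph{directly to} $V^*_p$:
\[
\hat{C}(V^*_p)=\hat{C}^0+\bigl[\hat Z_p(V^*_p)-\hat Z_p(\hat f_p)\bigr]+\bigl[\hat Z_p(\hat f_p)-\hat Z_p(f^0_p)\bigr]+\tfrac{\rho}{2}\|f^0_p\|^2-\tfrac{\rho}{2}\|V^*_p\|^2,
\]
and controls the first bracket at the \emph{empirical} level. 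The key technical step is a separate Lemma~\ref{Lemma12}: because $f^*_p$ is the exact minimizer of $Z_p^{prim}$, the first-order term in the Mean Value expansion of $Z_p^{prim}(V^*_p)-Z_p^{prim}(f^*_p)$ vanishes, and one obtains a purely \emph{quadratic} bound $(\rho\tau+C^Rc_4)\|\epsilon_p\|^2$ (this is exactly where the Lipschitz hypothesis on $\mathcal{L}'$ and the curvature bound on $R$ enter). This is then added to the $Z_p(f^*_p)-Z_p(f^*)$ bound from the proof of Theorem~5 and lifted to the population objective via Sridharan et~al., which supplies the third~$\delta$.

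Your route works directly on the population loss $\hat C$, which is simpler and avoids the Sridharan step for the perturbation part; but since $f^*_p$ is not a stationary point of $\hat C$, the linear Taylor term $C^R\|\epsilon_p\|$ survives in your bound and produces a sample-complexity term of order $(C^R)^2\|f^0\|^2 d\ln(d/\delta)/(\alpha_{acc}^2\alpha_p)$ that differs from the ones actually stated. Relatedly, your intuition about the $\alpha_{acc}^{3/2}$ term is off: it does \emph{not} come from coupling with $\|f^*_p\|$ but falls out directly from the quadratic term $(\rho+C^Rc_4)\|\epsilon_p\|^2$ once $\rho=\alpha_{acc}/\|f^0\|^2$ and $\|\epsilon_p\|\lesssim C^Rd\ln(d/\delta)/(\rho B_p\alpha_p)$ are substituted and the resulting inequality is solved for $B_p$. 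The paper's stationarity trick is what makes terms~3--5 of the stated bound come out in the form they do.
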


\begin{figure*}%
\centering
\begin{subfigure}{.5\columnwidth}
\includegraphics[width=\columnwidth]{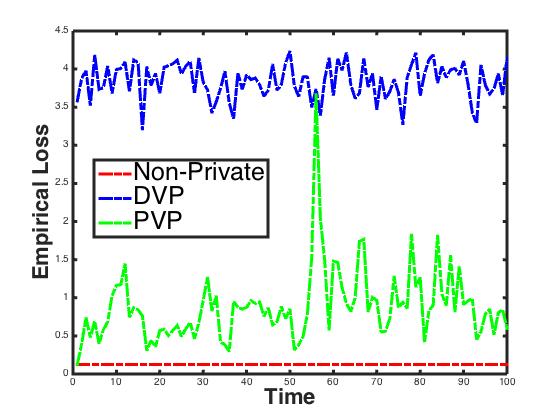}%
\caption{$\alpha_p(t) = 0.01$}%
\label{subfiga}%
\end{subfigure}\hfill%
\begin{subfigure}{.5\columnwidth}
\includegraphics[width=\columnwidth]{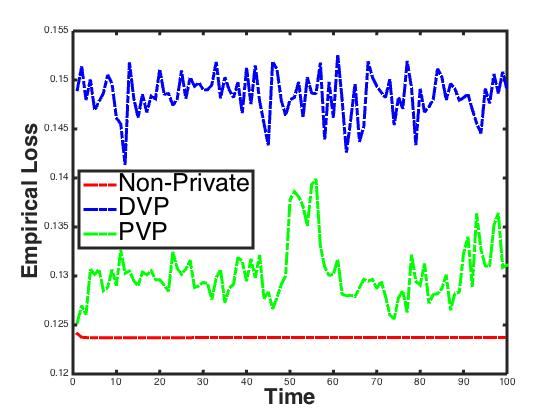}%
\caption{$\alpha_p(t) = 0.1$}%
\label{subfigb}%
\end{subfigure}\hfill%
\begin{subfigure}{.5\columnwidth}
\includegraphics[width=\columnwidth]{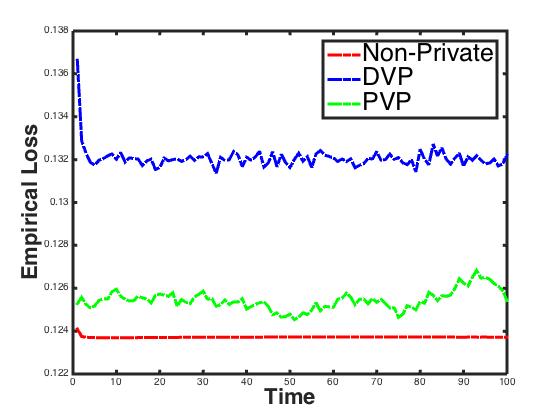}%
\caption{$\alpha_p(t) = 0.5$}%
\label{subfigc}%
\end{subfigure}\hfill%
\begin{subfigure}{.5\columnwidth}
\includegraphics[width=\columnwidth]{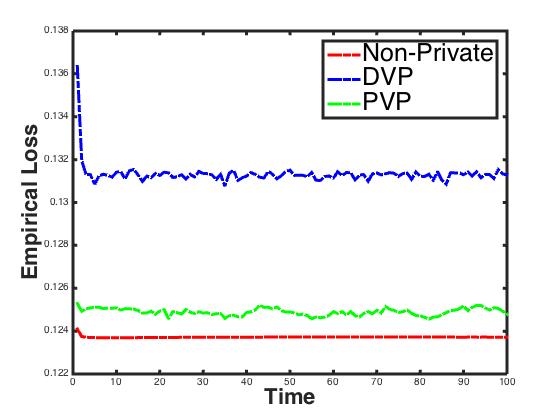}%
\caption{$\alpha_p(t) = 1$}%
\label{subfigc}%
\end{subfigure}\hfill%
\caption{Convergence of algorithms, at iteration $t=100$ (before the stop time) with different values of $\alpha_p(t)$. DVP with $\rho = 10^{-2.5}$ and $C^R=1750$; PVP with $\rho = 10^{-1}$ and $C^R=146$; Algorithm 1 (non-private) with $\rho = 10^{-10}$ and $C^R=1750$.}
\label{figabc}
\end{figure*}

\begin{figure*}%
\centering
\begin{subfigure}{.5\columnwidth}
\includegraphics[width=\columnwidth]{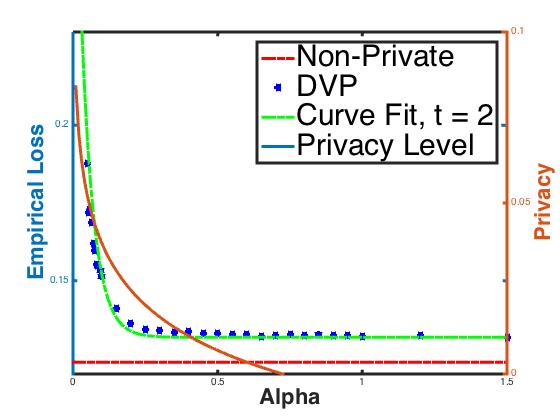}%
\caption{DVP: $t = 2$}%
\label{subfigb}%
\end{subfigure}\hfill%
\begin{subfigure}{.5\columnwidth}
\includegraphics[width=\columnwidth]{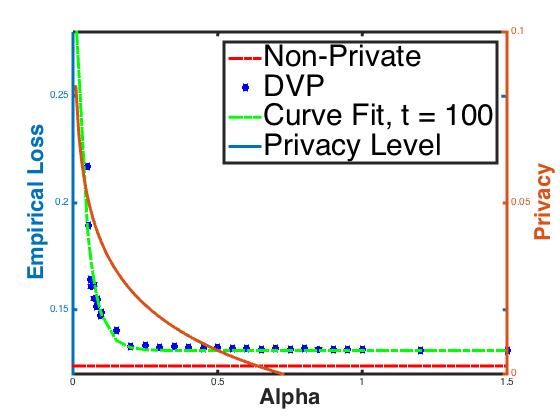}%
\caption{DVP: $t = 100$}%
\label{subfigc}%
\end{subfigure}\hfill%
\begin{subfigure}{.5\columnwidth}
\includegraphics[width=\columnwidth]{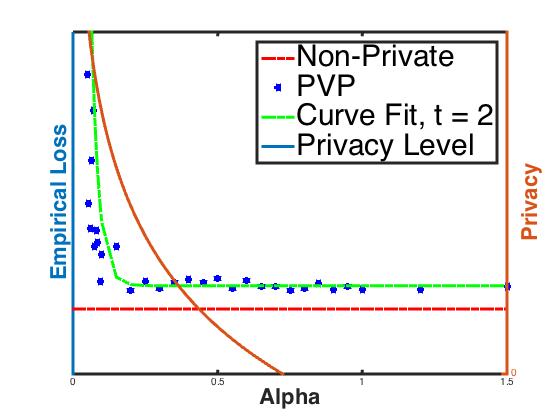}%
\caption{PVP: $t = 2$}%
\label{subfiga}%
\end{subfigure}\hfill%
\begin{subfigure}{.5\columnwidth}
\includegraphics[width=\columnwidth]{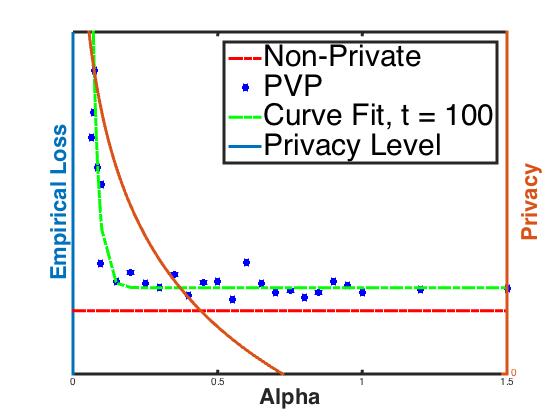}%
\caption{PVP: $t = 100$}%
\label{subfigc}%
\end{subfigure}\hfill%
\caption{Privacy-accuracy tradeoff.$(a)$-$(b)$: DVP, with $\omega_{p1} = 0.02$, $\omega_{p2}=6$, $\omega_{p3} = 9$, $\omega_{p4} = 1$ (before the stop time); $(c)$-$(d)$ PVP with $\omega_{p1} = 0.02$, $\omega_{p2}=6$, $\omega_{p3} = 9$, $\omega_{p4} = 1$ (before the stop time). }
\label{figabc}
\end{figure*}


%
\begin{corollary}
\textit{Let $f_p(t+1) = \arg\min_{f_p} L_p^{prim}(f_p,t|D_p)$ be the updated classifier of Algorithm 3, and let $f^0_p(t)$ be a reference classifier such that $\hat{C}(f^0_p(t)=\hat{C}^0(t)$. If all the conditions of Theorem 5 are satisfied,
then, $V_p(t+1) = f_p(t+1)+\epsilon_p(t+1)$ satisfies
\begin{equation}\label{Coro_PVP_Conv}
\mathbbm{P}\big(\hat{C}(V_p(t+1))\leq \hat{C}^0(t)+\alpha_{acc}+\Delta_p^{primV}(t) \big)\geq 1-3\delta.
\end{equation}}
\end{corollary}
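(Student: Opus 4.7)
The plan is to mirror the structure of Corollary 4.1, trading the dual-variable bound from Theorem 4 for the primal-variable bound of Theorem 6 (Theorem 5 in Algorithm 3 covers intermediate iterations, but the $1-3\delta$ in the conclusion signals that Theorem 6, which also carries the extra concentration step for $V^*_p$, is the one actually being invoked). The proof will be a short two-step argument: a deterministic convergence inequality already established in Section 4, combined with the high-probability sample-complexity bound of Theorem 6.

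First, I would recall the deterministic ingredient introduced right after the definition of $Z_p^{prim}(f_p,t|D_p,\epsilon^{pi}(t))$. Conditional on the noise vector $\epsilon_p(t+1)$ drawn by Algorithm 3, the auxiliary algorithm Alg-3 solves a strictly convex problem whose iterates converge to a limit point $V_p^*(t+1)$. This convergence, guaranteed by the same argument used for Algorithm 1 (Theorem 9), produces a nonnegative constant $\Delta_p^{primV}(t)$ with
\begin{equation*}
\hat{C}(V_p(t+1)) - \hat{C}(V_p^*(t+1)) \;\leq\; \Delta_p^{primV}(t),
\end{equation*}
and this inequality holds pointwise, i.e.\ for every realization of the sample and of the noise.

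Second, I would invoke Theorem 6. Under Assumptions 1, 4, and 5, and under the sample-size hypothesis~(\ref{Theo_6}), Theorem 6 supplies
\begin{equation*}
\Pr\bigl(\hat{C}(V_p^*(t+1)) \leq \hat{C}^0(t+1) + \alpha_{acc}\bigr) \;\geq\; 1 - 3\delta,
\end{equation*}
where the probability is taken jointly over the i.i.d.\ draw of $D_p$ and the noise $\epsilon_p(t+1)$. The hypotheses of the corollary (``all the conditions of Theorem 5/6 are satisfied'') match the hypotheses of Theorem 6 directly.

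Third, I would chain the two bounds on the event on which the Theorem 6 inequality holds: adding $\Delta_p^{primV}(t)$ to both sides of the Theorem 6 bound and using the deterministic inequality from the first step yields
\begin{equation*}
\hat{C}(V_p(t+1)) \;\leq\; \hat{C}^0(t+1) + \alpha_{acc} + \Delta_p^{primV}(t),
\end{equation*}
so this inclusion of events gives probability at least $1-3\delta$, which is precisely~(\ref{Coro_PVP_Conv}) (modulo the $\hat{C}^0(t)$ versus $\hat{C}^0(t+1)$ typographical ambiguity in the statement). I do not expect any genuine obstacle: the deterministic convergence step requires no new calculation beyond what Section 4 already records, and no additional union bound is needed because the $3\delta$ failure probability from Theorem 6 already accounts for all the randomness that enters $V_p^*(t+1)$. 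The only point worth being careful about is ensuring that $\Delta_p^{primV}(t)$ is defined on the same probability space as the Theorem 6 event, which is automatic since both are driven by the same noise draw $\epsilon_p(t+1)$ and dataset $D_p$.
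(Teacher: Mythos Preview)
Your proposal is correct and follows essentially the same two-step argument as the paper: invoke Theorem~6 for the high-probability bound on $V_p^*(t+1)$, then add the deterministic convergence gap $\hat{C}(V_p(t+1))-\hat{C}(V_p^*(t+1))\leq \Delta_p^{primV}(t)$ established earlier in Section~4. Your identification that Theorem~6 (not Theorem~5, as the corollary's hypothesis literally reads) is the source of the $1-3\delta$ is exactly right, and your remarks about the $\hat{C}^0(t)$ versus $\hat{C}^0(t+1)$ ambiguity and the shared probability space are more careful than the paper's own proof.
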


\begin{proof} 
From Theorem 6, $V_p^*(t+1)$ satisfies
$
\mathbbm{P}\big( \hat{C}(V^*_p(t+1))\leq \hat{C}^0(t+1)+\alpha_{acc} \big)\geq 1-3\delta,
$
and since 
$
\hat{C}(V_p(t+1))-\hat{C}(V_p^*(t+1))\leq \Delta_p^{primV}(t+1),
$
then, we have (\ref{Coro_PVP_Conv}).

\end{proof}
\begin{corollary}
\textit{Let $f^*_p$ be the final output classifier of Algorithm 3 at node $p$, and let $f^0_p(t)$ be a reference classifier such that $\hat{C}(f^0_p(t)=\hat{C}^0(t)$. If all the conditions of Theorem 4 and 6 are satisfied,
then, $f^*_p$ satisfies
$$
\mathbbm{P}\big(\hat{C}(f^*_p)\leq \hat{C}^0(t)+\alpha_{acc}+\Delta_p^{dual}(t) \big)\geq 1-5\delta.
$$}
\end{corollary}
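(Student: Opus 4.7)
The plan is to argue by decomposition and a union bound. Algorithm 3 produces its final classifier in two stages: the PVP intermediate iterations $t = 0,1,\dots,t_s-1$, which yield a pair $(f_p(t_s),\lambda_p(t_s))$, and the final DVP iteration described by \eqref{equiPrim_Last_up}--\eqref{equiPrim_Last_lambda_p}, which takes these as initial values and outputs $f^*_p = f_p(t_s+1)$. Since these two stages use independently drawn Laplace-type noise vectors, the randomness driving them is independent, and the overall output is a composition of one PVP event and one DVP event.

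First I would invoke Corollary 6.1 on the intermediate PVP stage: under the hypotheses of Theorem 6, the perturbed primal iterate $V_p(t_s)$ satisfies
$$\mathbbm{P}\big(\hat{C}(V_p(t_s))\leq \hat{C}^0(t_s)+\alpha_{acc}+\Delta_p^{primV}(t_s-1) \big)\geq 1-3\delta,$$
so the state handed to the final iteration is controlled with probability at least $1-3\delta$. Next I would apply Corollary 4.1 to the final DVP step, regarded as one ADMM iteration of Algorithm 2 started from $(f_p(t_s),\lambda_p(t_s))$: under the hypotheses of Theorem 4 (which are assumed to hold), the output $f_p(t_s+1)$ satisfies
$$\mathbbm{P}\big(\hat{C}(f_p(t_s+1))\leq \hat{C}^0(t_s+1)+\alpha_{acc}+\Delta_p^{dual}(t_s) \big)\geq 1-2\delta.$$

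Finally I would combine the two via a union bound. Because the noise $\epsilon_p$ used in the PVP loop is drawn independently from the noise used in the final DVP step, the complementary failure events have total probability at most $3\delta + 2\delta = 5\delta$. On the intersection of the two good events the final DVP output inherits the excess risk bound $\hat{C}^0(t)+\alpha_{acc}+\Delta_p^{dual}(t)$, giving the claimed $1-5\delta$ guarantee. The sample-size requirement is automatically met because we assumed the hypotheses of both Theorem 4 and Theorem 6, whose union of conditions on $B_p$ is satisfied simultaneously.

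The main obstacle I anticipate is showing that the state $(f_p(t_s),\lambda_p(t_s))$ inherited from PVP is a legitimate initial condition for the DVP bound in Corollary 4.1, i.e.\ that the analysis of Theorem 4 did not implicitly use properties of the initialization (such as $\lambda_p(0)=0$) beyond those already satisfied here. This is largely a sanity check: since Theorem 4 bounds each iterate of Algorithm 2 uniformly in $t$ given the per-iteration noise and the convexity guarantees of Appendix A, the argument goes through for any finite starting pair, and no additional hypotheses need be imposed.
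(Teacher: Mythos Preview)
Your proposal is correct and follows essentially the same approach as the paper: the paper's proof simply states that Theorem 6 guarantees the intermediate PVP iterations (contributing the $3\delta$ failure probability) and Theorem 4 guarantees the final DVP update (contributing the $2\delta$), and combining the two yields the $1-5\delta$ bound. Your version is more detailed---invoking Corollaries 6.1 and 4.1, spelling out the union bound, and discussing why the hand-off of $(f_p(t_s),\lambda_p(t_s))$ to the DVP step is admissible---but the structure is identical.
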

\begin{proof} 
All the conditions of Theorem 6 are satisfied to guarantee the privacy during the intermediate iterations. All the conditions of Theorem 4 are satisfied so that the final update is differential privacy is provided. Combining Theorem 4 and 6 yields the results.
\end{proof}
From Theorem 4 and 6, we can see that, for non-separable problems or ones  with a small margin, in which a larger $\parallel f_p^0(t)\parallel$ is used, the terms $\frac{1}{\alpha_{acc}}$ and $\parallel f_p^0(t)\parallel$ have a more significant influence on the requirement of datasets size for DVP than the PVP. Also, the performance of DVP is guaranteed with higher probability than PVP. Therefore, DVP is preferred for more difficult problems. 
Moreover, the privacy increases by trading the accuracy. It is essential to manage the tradeoff between the privacy and the accuracy, and this will be discussed in Section 5.

\section{Numerical Experiment}

In this section, we test Algorithm 2 and 3 with real world training dataset. The dataset used is the \textit{Adult} dataset from UCI Machine Learning Repository \cite{asuncion2007uci}, which contains demographic information such as age, sex, education, occupation, marital status, and native country.
In the experiments, we use our algorithm to develop a dynamic differential private logistic regression. The logistic regression, i.e., $\mathcal{L}_{LR}$ takes the following form: $\mathcal{L}_{LR}(y_{ip}f^T x_{ip})= log(1+exp(-y_{ip} f_p^T x_{ip}))$,
whose first-order derivative and the second-order derivative can be bounded as $|\mathcal{L}'_{LR}|\leq 1$ and $|\mathcal{L}''_{LR}|\leq \frac{1}{4}$, respectively, satisfying Assumption 3. Therefore, the loss function of logistic regression satisfies the conditions shown in Assumption 2 and 3. 
In this experiment, we set $R(f_p) = \frac{1}{2}\parallel f_p \parallel^2$, and $c_1 = \frac{1}{4}$. We can directly apply the loss function $\mathcal{L}_{LR}$ to Theorem 1 and 2 with $R(f)=\frac{1}{2}\parallel f_p\parallel^2$, and $c_1 = \frac{1}{4}$, and then it can provide $\alpha_p(t)$-differential privacy for all $t\in\mathbb{Z}$.

We also study the privacy-accuracy tradeoff of Algorithm 2 and 3. The privacy is quantified by the value of $\alpha_p(t)$. A larger $\alpha_p(t)$ implies that the ratio of the densities of the classifier $f_p(t)$ on two different data sets is larger, which implies a higher belief of the adversary when one data point in dataset $D$ is changed; thus, it provides lower privacy. However, the accuracy of the algorithm increases as $\alpha_p(t)$ becomes larger. As shown in Figure 3, a larger $\alpha_p(t)$ leads to faster convergence of the algorithms; moreover, from Figure 3, we can see that the DVP is slightly more robust to noise than is the primal case given the same value of $\alpha_p(t)$.
When $\alpha_p(t)$ is small, the model is more private but less accurate. Therefore, the utilities of privacy and accuracy need to satisfy the following assumptions:
\begin{asu} 
The utilities of privacy is monotonically increasing with respect to $\alpha_p(t)$ for every $p\in\mathcal{P}$ but accuracy is monotonically decreasing with respect to $\alpha_p(t)$ for every $p\in\mathcal{P}$.
\end{asu}

The quality of classifier is measured by the total empirical loss $\overline{C}(t)=\frac{C^R}{B_p}\sum_{i=1}^{B_{p}}\mathcal{L}(y_{ip} f_{p}(t)^T x_{ip})$. Let $L_{acc}(\cdot):\mathbb{R}_+\rightarrow\mathbb{R}$ represent the relationship between $\alpha_p(t)$ and $\overline{C}(t)$. The function $L_{acc}$ is obtained by curve fitting given the experimental data points $(\alpha_p(t), \overline{C}(t))$.
Let $U_{priv}(\alpha_p(t)): \mathbb{R}_+\rightarrow \mathbb{R}$ be the utility of privacy, same for every node $p\in\mathcal{P}$. Besides the decreasing monotonicity, $U_{priv}(\alpha_p(t))$ is assumed to be convex and doubly differentiable function of $\alpha_p(t)$. In our experiment, we model the utility of privacy as:
$U_{priv}(\alpha_p(t)) = \omega_{p1}\cdot\ln\frac{\omega_{p2}}{\omega_{p3}\alpha_p(t)+\omega_{p4}\alpha^2_p(t)}$,
where, $\omega_{pj}\in\mathbb{R}$ for $j=1,\:2,\:3,\:4$.
%
For training the classifier, we use a few fixed values of $\rho$ and test the empirical loss $\overline{C}(t)=\frac{C^R}{B_p}\sum_{i=1}^{B_p}\mathcal{L}_{LR}(t)$ of the classifier. Then, we select the value of $\rho$ that minimizes the empirical loss for a fixed $\alpha_{p}$ ($0.3$ in this experiment). We also test the non-private version of algorithm, and the corresponding minimum $\rho$ is obtained as the control.
We choose the corresponding optimal values of the regularization parameter $\rho$ as $10^{-10}$, $10^{-2.5}$ and $10^{-1}$ for Algorithm 1, 2 and 3, respectively. The values of $C^R$ are chosen as $1750$, $1750$ and $146$ for  Algorithm 1, 2 and 3, respectively. 
Figure 5 shows the convergence of DVP and PVP at different values of $\alpha_p(t)$ at a given iteration $t$.
Larger values of $\alpha_p$ yield better convergence for both perturbations. Moreover, the DVP has a smaller variance of empirical loss than the primal perturbation does. However, a larger $\alpha_p$ leads to poorer privacy. 
Figure 4 $(a)$-$(b)$ shows the privacy-accuracy tradeoff of DVP at different iterations. By curve fitting, we model the function 
$
L_{acc}(\alpha_p(t)) = c_4\cdot e^{-c_5\alpha_p(t)}+ c_6,
$
where $c_4$, $c_5, c_6 \in\mathbb{R}_+$. From the experimental results, we determine $c_4=0.2$, $c_5=25$, $c_6=\min_{t} \{\overline{C}(t)\}$; these values are applicable at all iteraions.
Figure 4 $(c)$-$(d)$ presents the privacy-accuracy tradeoff of PVP at different iterations. We model the function $L_{acc}$ in the same way as DVP. In our experiment, we choose $\omega_{p1}=0.02$,  $\omega_{p2}=6$, $\omega_{p3}=9$,  $\omega_{p4}=1$.
From Figure 3, we can see that the experimental results of $L_{acc}(\alpha_p(t))$ given $\{\alpha_p(t)\}$ for PVP experimences more oscillations than the DVP does. For iteration $t>1$,  $c_4=20$, $c_5=20$, $c_6=\frac{1}{81}\sum_{t=20}^{100} \overline{C}(t)$. As shown in Figure 3, the empirical loss of DVP is more robust to noise than the PVP for most values of $\alpha_p(t)$. Moreover, the dual perturbation yields a lower error rate for a large range of values of $\alpha_p(t)$, which implies a better management of tradeoff between privacy and accuracy. Figure 5 shows the privacy-accuracy tradeoff of the final optimum classifier in terms of the empirical loss and misclassification error rate (MER). The MER is determined by the fraction of times the trained classifier predicts a wrong label. We can see that PVP performs slightly better than DVP with respect to the empirical loss.

\begin{figure}%
\centering
\begin{subfigure}{.5\columnwidth}
\includegraphics[width=\columnwidth]{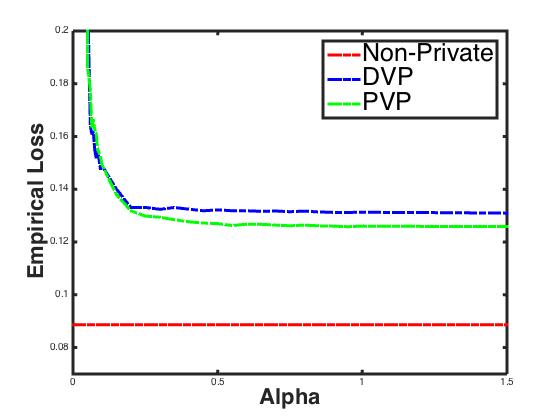}%
\caption{Empirical risk vs. $\alpha_p$.}%
\label{subfigb}%
\end{subfigure}\hfill%
\begin{subfigure}{.5\columnwidth}
\includegraphics[width=\columnwidth]{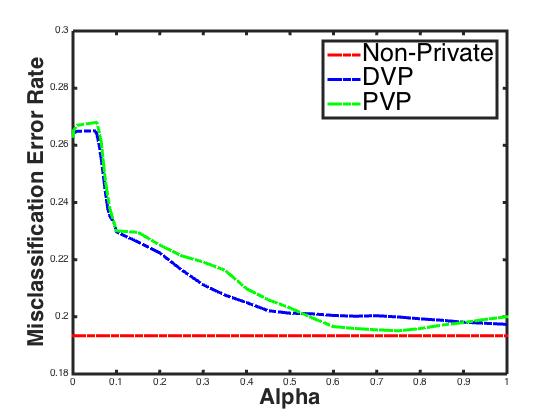}%
\caption{Misclassification error rate vs. $\alpha_p$.}%
\label{subfigc}%
\end{subfigure}\hfill%
\caption{Privacy-accuracy tradeoff. (a): Empirical risk vs. $\alpha_p$ of final optimum output. (b): Misclassifications error rate vs. $\alpha_p$ of iteration 100.}
\label{figabc}
\end{figure}

\section{Conclusion}
In this work, we have developed two ADMM-based algorithms to solve a centralized regularized ERM in a distributed fashion while providing $\alpha$-differential privacy for the ADMM iterations as well as the final trained output. Thus, the sensitive information stored in the training dataset at each node is protected against both the internal and the external adversaries.

Based on distributed training datasets, Algorithm 2 perturbs the dual variable $\lambda_p(t)$ for every node $p\in\mathcal{P}$ at iteration $t$; For the next iteration, $t+1$, the perturbed version of $\lambda_p(t)$ is involved in the update of primal variable $f_p(t+1)$. Thus, the perturbation created at time $t$ provides privacy at time $t+1$. In Algorithm 3, we perturb the primal variable $f_p(t)$, whose noisy version is then released to the neighboring nodes. Since the primal variables are shared among all the neighboring nodes, at time $t$, the noise directly involved in the optimization of parameter update comes from multiple nodes; as a result, the updated primal variable has more randomness than the dual perturbation case. 

In general, the accuracy decreases as privacy requirements are more stringent. The tradeoff between the privacy and accuracy is studied. 
Our experiments on real data from UCI Machine Learning Repository  show that dual variable perturbation is more robust to the noise than the primal variable perturbation. The dual variable perturbation outperforms the primal case at balancing the privacy-accuracy tradeoff as well as learning performance.

\appendices

\section{Proof of Theorem 1}
\begin{proof}(\textbf{Theorem 1})

Let $f_p(t+1)$ be the optimal primal variable with zero duality gap. 
From the Assumption 1 and 2, we know that both the loss funciton $\mathcal{L}$ and the regularizer $R(\cdot)$ are differentiable and convex, and by using the Karush-Kuhn-Tucker (KKT) optimality condition (stationarity), we have the relationship between the noise $\epsilon_p(t)$ and the optimal primal variable $f_p(t+1)$ as:
\begin{equation}\label{appendixNFp}
\begin{aligned}
\epsilon_p(t)=& -\sum_{i=1}^{B_{p}}y_{ip}\mathcal{L}'(y_{ip} f_{p}(t+1)^T x_{ip})x_{ip}-\frac{B_p}{C^R}\rho\nabla R(f_p)\\
&-\frac{2B_p}{C^R}\lambda_{p}(t)-\frac{B_p}{C^R}(\Phi+2\eta N_{p})f_{p}(t+1)\\
&+\frac{B_p\eta}{C^R}\sum_{i\in \mathcal{N}_p}(f_p(t)+f_i(t)).
\end{aligned}
\end{equation}
Under Assumption 1, the augmented Lagrange function $L^{dual}_p(t)$ is strictly convex, thus there is a unique value of $f_p(t+1)$ for fixed $\epsilon_p(t)$ and dataset $D_p$. The equation (\ref{appendixNFp}) shows that for any value of $f_p(t+1)$, we can find a unique value of $\epsilon_p(t)$ such that $f_p(t+1)$ is the minimizer of $L_p^{dual}$. Therefore, given a dataset $D_p$, the relation between $\epsilon_p(t)$ and $f_p(t+1)$ is bijective.

Let $D_p$ and $D'_p$ be two datasets with $H_d(D_p,D'_p)=1$, $(x_i, y_i)\in D_p$ and $(x'_i,y'_i)\in D'_p$ are the corresponding two different data points. Let two matrices $\textbf{J}_{f}(\epsilon_{p}(t)|D_p)$ and $\textbf{J}_{f}(\epsilon'_p(t)|D'_p)$ denote the Jacobian matrices of mapping from $f_p(t+1)$ to $\epsilon_p(t)$ and $\epsilon'_p(t)$, respectively. 
Then, transformation from noise $f_p(t+1)$ to $\epsilon_p(t)$ by Jacobian yields: 
\begin{equation}
\begin{aligned}
\frac{Q(f_p(t+1)|D_{p})}{Q(f_p(t+1)|D'_p)}=\frac{q (\epsilon_p(t)|D_{p})}{q (\epsilon'_p(t)|D'_p)}\frac{|\det(\textbf{J}_{f}(\epsilon_p(t)|D_p))|^{-1}}{|\det(\textbf{J}_{f}(\epsilon'_p(t)|D'_p))|^{-1}},
\end{aligned}
\end{equation}
where $q (\epsilon_{p}(t)|D_{p})$ and $q (\epsilon'_p(t)|D'_p)$ are the densities of $\epsilon_p(t)$ and $\epsilon'_p(t)$, respectively, given $f_p(t+1)$ when the datasets are $D_{p}$ and $D'_p$, respectively. 

Therefore, in order to prove the ratio of conditional densities of optimal primal variable is bounded as:
$
\frac{Q(f_p(t)|D)}{Q(f_p(t)|D'_p)}\leq e^{\alpha_p(t)},
$
we have to show:
$
\begin{aligned}
\frac{q (\epsilon_p(t)|D_{p})}{q (\epsilon'_p(t)|D'_p)}&\cdot\frac{|\det(\textbf{J}_{f}(\epsilon_p(t)|D_p))|^{-1}}{|\det(\textbf{J}_{f}(\epsilon'_p(t)|D'_p))|^{-1}}\leq e^{\alpha_p(t)}.
\end{aligned}
$
We first bound the ratio of the determinant of Jacobian matrices, and then the ratio of conditional densities of the noise vectors.

Let $x^{a}$ be the $a$-th element of the vector $x$, and $(a,b)$. Let $\textbf{E} \in \mathbbm{R}^{d\times d}$ be a matrix, then let $\textbf{E}^{(a,b)}$ denote the $(a,b)$-th entry of the matrix $\textbf{E}$. Thus, the $(m, n)$-th entry of $\textbf{J}_{f}(\epsilon_p(t))$ is:
{\small $$
\begin{aligned}
\textbf{J}_{f}(\epsilon_p(t))^{(m,n)}=& -\sum_{i=1}^{B_{p}}(y_{i}^2\mathcal{L}''(y_{i}f_p(t+1)^Tx_{i})x_{i}^{(m)}x_{i}^{(n)}\\
& -\frac{B_p}{C^R}\rho\nabla^2 R(f_p(t+1))^{(m,n)}\\
&-\frac{B_p}{C^R}(\Phi+2\eta N_{p})\mathbbm{1}(j=k).\\
\end{aligned}
$$}
Let $\textbf{J}_f^{0}(x_i,y_i) = (y_{i}^2\mathcal{L}''(y_{i}f_p(t+1)^Tx_{i})x_{i}x_{i}^T$, then the Jacobian matrix can be expressed as:
$$
\begin{aligned}
\textbf{J}_f(\epsilon_p(t)|D_p) =& -\sum_{i=1}^{B_{p}}\textbf{J}_f^{0}(x_i,y_i)-\frac{B_p}{C^R}\rho\nabla^2 R(f_p(t+1))\\
&-\frac{B_p}{C^R}(\Phi+2\eta N_{p})\textbf{I}_d.\\
\end{aligned}
$$

Let $\textbf{M}=\textbf{J}_f^{0}(x'_i,y'_i)-\textbf{J}_f^{0}(x_i,y_i)$, and $\textbf{H}=-\textbf{J}_f(\epsilon_p(t)|D_p)$, and thus $ \textbf{J}_f(\epsilon_p(t)|D'_p)=-(\textbf{M}+\textbf{H})$.
Let $h_j(\textbf{W})$ be the $j$-th largest eigenvalue of a symmetric matrix $\textbf{W}\in\mathbbm{R}^{d\times d}$ with rank $\theta$. Then, we have the following fact:
$
\det(\textbf{I}+\textbf{W})=\prod_j^{\theta} (1+h_j(\textbf{W})).
$
Since the matrix $x_ix_i^T$ has rank 1, matrix $\textbf{M}$ has rank at most 2; thus matrix $\textbf{H}^{-1}\textbf{M}$ has rank at most 2; therefore, we have:
$$
\begin{aligned}
\det(\textbf{H}+\textbf{M})&=\det(\textbf{H})\cdot \det(\textbf{I}+\textbf{H}^{-1}\textbf{M})\\
&=\det(\textbf{H})\cdot(1+h_1(\textbf{H}^{-1}\textbf{M}))(1+h_2(\textbf{H}^{-1}\textbf{M}).
\end{aligned}
$$
Thus, the ratio of determinants of the Jacobian matrices can be expressed as:
{\small $$ 
\begin{aligned}
\frac{|\det(\textbf{J}_{f}(\epsilon_p(t)|D_p))|^{-1}}{|\det(\textbf{J}_{f}(\epsilon'_p(t)|D'_p))|^{-1}}=&\frac{|\det(\textbf{H}+\textbf{M})|}{|\det(\textbf{H})|}\\
=&|\det(\textbf{I}+\textbf{H}^{-1}\textbf{M})|\\
=&(1+h_1(\textbf{H}^{-1}\textbf{M}))(1+h_2(\textbf{H}^{-1}\textbf{M})\\
=&|1+h_1(\textbf{H}^{-1}\textbf{M})+h_2(\textbf{H}^{-1}\textbf{M})\\
&+h_1(\textbf{H}^{-1}\textbf{M})h_2(\textbf{H}^{-1}\textbf{M})|.
\end{aligned}
$$ }

Based on Assumption 2, all the eigenvalues of $\nabla^2 R(f_p(t+1))$ is greater than 1 \cite{narayanan2008robust}. Thus, from Assumption 1, matrix $\textbf{H}$ has all eigenvalues at least $\frac{B_p}{C^R}\big(\rho+\Phi+2\eta N_{p}\big)$. Therefore, $|h_1(\textbf{H}^{-1}\textbf{M})|\leq \frac{|h_i(\textbf{M} )|}{\frac{B_p}{C^R}\big(\rho+\Phi+2\eta N_{p}\big)}$.
Let $\sigma_{i}(\textbf{M})$ be the non-negative singular value of the symmetric matrix $\textbf{M}$. According to \cite{chilstrom2013singular}, we have the inequality 
$
\sum_{i}|h_{i}(\textbf{M})|\leq \sum_{i}\sigma_{i}(\textbf{M}).
$
Thus, we have
$
|h_{1}(\textbf{M})|+|h_{2}(\textbf{M})|\leq \sigma_{1}(\textbf{M})+\sigma_{2}(\textbf{M}).
$
Let $\parallel \textit{X} \parallel_{\Sigma} = \sum_{i}\sigma_{i}$ be the trace norm of $\textit{X}$. Then, according to the \textit{trace norm inequality}, we have 
$
\parallel \textbf{M} \parallel_{\Sigma} \leq \parallel \textbf{J}^0(x'_i,y'_i)  \parallel_{\Sigma}+\parallel -\textbf{J}^0(x_i,y_i) \parallel_{\Sigma}.
$
As a result, based on the upper bounds from Assumption 1 and 3, we have:
$$
\begin{aligned}
|h_{1}(\textbf{M})|+|h_{2}(\textbf{M})|&\leq \parallel \textbf{J}^0(x'_i,y'_i)  \parallel_{\Sigma}+\parallel -\textbf{J}^0(x_i,y_i) \parallel_{\Sigma}\\
&\leq |(y_{i}^2\mathcal{L}''(y_{i}f_p(t+1)^Tx_{i})|\cdot \parallel x_{i}\parallel\\
&+|(y_{i}^{'2}\mathcal{L}''(y'_{i}f_p(t+1)^Tx'_{i})|\cdot \parallel x'_{i}\parallel\leq 2c_1,
\end{aligned}
$$
which follows $h_1(\textbf{M})h_2(\textbf{M})\leq c_1^2$. Finally, the ratio of determinants of Jacobian matrices is bounded as:
\begin{equation}
\begin{aligned}
\frac{|\det(\textbf{J}_{f}(\epsilon_p(t)|D_p))|^{-1}}{|\det(\textbf{J}_{f}(\epsilon'_p(t)|D'_p))|^{-1}} &\leq (1+\frac{c_1}{ \frac{B_p}{C^R}\big(\rho+\Phi+2\eta N_{p}\big))  })^2= e^{\overline{\alpha}},
\end{aligned}
\end{equation}
where $\overline{\alpha} = \ln\Big(1+\frac{c_1}{ \frac{B_p}{C^R}\big(\rho+\Phi+2\eta N_{p}\big)}\Big)^2$.

Now, we bound the ratio of densities of $\epsilon_p(t)$. Let $sur(E)$ be the surface area of the sphere in $d$ dimension with radius $E$, and $sur(E)=sur(1)\cdot E^{d-1}$. We can write:
\begin{equation}
\begin{aligned}
\frac{q (\epsilon_{p}(t)|D_{p})}{q (\epsilon'_{p}|D'_p)} &= \frac{\mathcal{K}(\epsilon_p(t))\frac{\parallel \epsilon_{p}(t) \parallel^{d-1}}{sur(\parallel \epsilon_{1}(t) \parallel)}}{\mathcal{K}(\epsilon'_p(t))\frac{\parallel \epsilon'_{p}(t) \parallel^{d-1}}{sur(\parallel \epsilon'_{p}(t) \parallel)}} \leq e^{\zeta_p(t)(\parallel \epsilon'_{p}(t) \parallel - \parallel \epsilon_{p}(t) \parallel)}\leq e^{\hat{\alpha}_p },
\end{aligned}
\end{equation}
where $\hat{\alpha}_p$ is a constant satisfying the above inequality.
Since we want to bound the ratio of densities of $f_p(t+1)$ as 
$\frac{Q(f_p(t+1)|D_{p})}{Q(f_p(t+1)|D'_p)} \leq e^{\alpha_p(t)},$
we need
$
\hat{\alpha}_p  \leq \alpha_p(t) - \overline{\alpha}.
$
For non-negative $\Phi$, let 
$
\hat{\alpha}_p  = \alpha_p(t) - \ln\Big(1+\frac{c_1}{ \frac{B_p}{C^R}\big(\rho+2\eta N_{p}\big)}\Big)^2.
$
If $\hat{\alpha}_p >0$, then we fix $\Phi = 0$, and thus $\hat{\alpha}_p  = \alpha_p(t) - \overline{\alpha}$. Otherwise, let $\Phi= \frac{c_1}{\frac{B_p}{C^R}(e^{\alpha_p(t)/4}-1)}-\rho-2\eta N_p$, and $\hat{\alpha}_p  = \frac{\alpha_p(t)}{2}$, then $\hat{\alpha}_p  = \alpha_p(t) - \overline{\alpha}$. Therefore, we can have 
$\frac{|\det(J_{f}(b_1|D_p))|^{-1}}{|\det(J_{f}(b_2|D'_p))|^{-1}} \leq e^{\alpha_p(t) - \hat{\alpha}_p }.$
From the upper bounds stated in Assumption 1 and 3, the $l_2$ norm of the difference of $\epsilon_{1}$ and $\epsilon_{2}$ can be bounded as:
$$
\begin{aligned}
\parallel \epsilon'_{p}(t)-\epsilon_{p}(t) \parallel =& \sum_{i=1}^{B_{p}}\parallel y_{ip}\mathcal{L}'(y'_{ip} f_{p}(t+1)^T x'_{ip})x'_{ip}\\
&-(y_{ip}\mathcal{L}'(y_{ip} f_{p}(t+1)^T x_{ip})x_{ip}\parallel \leq 2.
\end{aligned}
$$
Thus,$ \parallel \epsilon'_{p}(t) \parallel - \parallel \epsilon_{p}(t) \parallel \leq \parallel \epsilon'_{p}(t)-\epsilon_{p}(t) \parallel \leq 2$. Therefore, by selecting $\zeta_p(t) = \frac{\hat{\alpha}_p}{2} $, we can bound the ratio of conditional densities of $f_p(t+1)$ as
$\frac{Q(f_p(t+1)|D_{p})}{Q(f_p(t+1)|D'_p)} \leq e^{\alpha_p(t)},$
and prove that the DVP can provide $\alpha_p(t)$-differential privacy. 
\end{proof} 


\section{Proof of Theorem 2}

\begin{proof}(\textbf{Theorem 2})

Let $D_p$ and $D'_p$ be two datasets with $H_d(D_p,D'_p)=1$. Since only $V_p(t)$ is released, then our target is to prove 
$
\frac{Q(V_p(t+1)|D_{p})}{Q(V_p(t+1)|D'_p)}\leq e^{\alpha_p(t)}.
$
From (\ref{equiVp}), we have:
$
\frac{Q(V_p(t+1)|D_{p})}{Q(V_p(t+1)|D'_p)}=\frac{\mathcal{K}(\epsilon_p(t))}{\mathcal{K}(\epsilon'_p(t))} = \frac{e^{-\zeta_p(t) \parallel \epsilon_p(t) \parallel}}{e^{-\zeta_p(t) \parallel \epsilon'_p(t) \parallel}}.
$
Therefore, in order to make the model to provide $\alpha_{p}(t)$-differential privacy, we need to find a $\zeta_p(t)$ that satisfies
\begin{equation}\label{appendixTheta}
\zeta_p(t)(\parallel \epsilon_p(t) \parallel - \parallel \epsilon'_p(t) \parallel)\leq \alpha_{p}(t).
\end{equation}
Let $V^A=\arg\min_{V_p}L_p^{prim}(t|D_p)$, and $V^B=\arg\min_{V_p}L_p^{prim}(t|D'_p)$, where $L_{prim}(t|D)$ is the augmented Lagrange function for PVP given dataset $D$.

Let $F$, $G$ be defined at each node $p\in\mathcal{P}$ as:
$F(V_p(t)) = L_p^{prim}(t|D_p),$ and 
$G(V_p(t)) = L_p^{prim}(t|D'_p) - L_p^{prim}(t|D_p)$, respectively.
Thus, $G(V_p) = \frac{C^R}{B_p}\sum_{i=1}^{B_{p}}(\mathcal{L}(y'_{ip} V_{p}^T x'_{ip})-\mathcal{L}(y_{ip}V_{p}^T x_{ip})).$
According to Assumption 2, we can imply that $L_p^{prim}(t|D_p)=F(V_p(t))$ and $L_p^{prim}(t|D'_p)=F(V_p(t))+G(V_p(t))$ are both $\rho$-strong convex. 
Differentating $G(V_p(t))$ with respect to $V_p(t)$ gives:
$
\begin{aligned}
\nabla G(V_p) =\frac{C^R}{B_p}(y'_{ip}\mathcal{L}'(y'_{ip} V_{p}^T x'_{ip})x'_{ip}
-(y_{ip}\mathcal{L}(y_{ip}V_{p}^T x_{ip})x_{ip} .
\end{aligned}
$
From Assumption 1 and 3, $\parallel\nabla G(V_p)\parallel\leq \frac{2C^R}{B_p}$. From definitions of $V^A$ and $V^B$, we have:
$\nabla F(V^A)=\nabla F(V^B)+\nabla F(V^B)=0$.
From Lemma 14 in \cite{shalev2007online} and the fact that $F(\cdot)$ is $\rho$-strongly convex, weh have the following inequality:
$
\langle\nabla F(V^A)-F(V^B), V^A-V^B\rangle \geq \rho \parallel V^A-V^B \parallel^2;
$
therefore, Cauchy-Schwarz inequality yields:
$$
\begin{aligned}
&\parallel V^A-V^B \parallel\cdot \parallel \nabla G(V^B) \parallel 
\geq (V^A-V^B)^T\nabla G(V^B)\\
& = \langle\nabla F(V^A)-F(V^B), V^A-V^B\rangle
\geq \rho \parallel V^A-V^B \parallel^2.
\end{aligned}
$$
Dividing both sides by $\rho \parallel V^A-V^B \parallel$ gives:
\begin{equation}\label{appendixV_V}
\parallel V^A-V^B \parallel\leq \frac{1}{\rho}\parallel \nabla G(V^B) \parallel \leq \frac{2C^R}{\rho B_p}.
\end{equation}
From (\ref{equiVp}), we have $ \parallel V^A-V^B \parallel\leq \frac{1}{\rho}\parallel \nabla G(V^B) \parallel = \parallel \epsilon_p(t) - \epsilon'_p(t) \parallel .$
Thus, we can bound 
$$
\begin{aligned}
\zeta_p(t)(\parallel \epsilon_p(t) \parallel - \parallel \epsilon'_p(t) \parallel)&\leq \zeta_p(t) (\parallel \epsilon_p(t) -\epsilon'_p(t) \parallel)\leq\frac{2C^R}{B_p\rho}\zeta_p(t)
\end{aligned}
$$
Therefore, by choosing $\zeta_p(t) = \frac{\rho B_p\alpha_p(t)}{2C^R}$, the inequality (\ref{appendixTheta}) holds; thus PVP is dynamic $\alpha_p$-differential private at each node $p$.

\end{proof}

\section{Proof of Theorem 3} 
\begin{proof} \textbf{(Theorem 3)}

Let
$
\hat{f}_p(t+1) = \arg\min_{f_p} \hat{Z}_p(f_p, t),
$ and
$
f^* = \arg\min_{f_p} Z_p(f_p,t|D_p)
$. Let $f^p(t+1) $ be the actual estimated optimum obtained using Algorithm 1. We assume that $f^p(t+1) $ is very close to the actually so that $Z_p(f^p(t+1)|D_p)- Z_p(f^*|D_p) \rightarrow 0$.
For the non-private ERM,  \cite{shalev2007online} and \cite{shalev2008svm} show that for a specific reference classifier $f^0$ at time $t+1$ such that $\hat{C}(f^0) = \hat{C}^0$, we have:
{\small $$
\begin{aligned}
\hat{C}(f^p(t+1))=&\hat{C}^0
+\big(\hat{Z}_p(f^p(t+1),t) -\hat{Z}_p(\hat{f}_p(t+1),t)\big)\\
+& \big( \hat{Z}_p(\hat{f}_p(t+1),t) - \hat{Z}_p(f^0,t) \big)\\
+& \frac{\rho}{2}\parallel f^0\parallel^2 - \frac{\rho}{2}\parallel f^p(t+1) \parallel^2.
\end{aligned}
$$}
From Sridharan et al. \cite{sridharan2009fast}, we have, with probability at least $1-\delta$
{\small $$
\begin{aligned}
\hat{Z}_p(f^p,t) -\hat{Z}_p(\hat{f}_p(t+1),t) &\leq 2\big(Z_p(f^p(t+1)|D_p)- Z_p(f^*|D_p) \big)\\
&+  \mathcal{O}\Big( C^R\frac{\ln(\frac{1}{\delta})}{B_p\rho}\Big).
\end{aligned}
$$}
Since $Z_p(f^p(t+1)|D_p)- Z_p(f^*|D_p)\rightarrow 0$, then,
$
\begin{aligned}
\hat{Z}_p(f^p(t+1),t) -\hat{Z}_p(\hat{f}_p(t+1),t) \leq  \mathcal{O}\Big(C^R \frac{\ln(\frac{1}{\delta})}{B_p\rho}\Big).
\end{aligned}
$
If we choose $\rho\leq \frac{\alpha_{acc}}{\parallel f^0 \parallel^2}$, then,
$
\frac{\rho}{2}\parallel f^0 \parallel^2 - \frac{\rho}{2}\parallel f^p(t+1) \parallel^2\leq\frac{\alpha_{acc}}{2}.
$
Thus,
$
\begin{aligned}
\hat{C}(f^p(t+1))\leq&\hat{C}^* + \mathcal{O}\Big( C^R\frac{\ln(\frac{1}{\delta})}{B_p\rho}\Big)+\frac{\alpha_{acc}}{2}.
\end{aligned}
$
Therefore, we can find the value of $B_p$ by solving
$
\mathcal{O}\Big(C^R \frac{\ln(\frac{1}{\delta})}{B_p\rho}\Big)+\frac{\alpha_{acc}}{2} \leq \alpha_{acc}
$,
we obtain:
$
B_p>\beta_{non}\max\Bigg( \frac{C^R\parallel f^0 \parallel^2 \ln(\frac{1}{\delta}) }{\alpha_{acc}^2} \Bigg).
$

If we determine different reference classifier $f^0_p(t+1)$ at different time, then we need to find the maximum value across the time and among different value of $\parallel f^0\parallel$:
{\small $$
B_p>\beta_{non}\max\Bigg( \max_t\Big(\frac{C^R\parallel f^0\parallel^2 \ln(\frac{1}{\delta}) }{\alpha_{acc}^2}\Big)  \Bigg).
$$}
Let $f_p^{non}(t+1) = \arg\min_{f_p}L_p^N(t)$. Since
$
\hat{C}(f_p^{non}(t+1)) = \hat{C}(f^p(t+1))+\Delta^{non}(t),
$
then,
$
\hat{C}(f_p^{non}(t+1)) \leq \hat{C}^0(t+1)+\alpha_{acc}+\Delta^{non}(t),
$
with probability no less than $1-\delta$.
\end{proof}

\section{}

In this appendix, we provide the proof of Theorem 4 with  the help of Lemma \ref{Lemma7} and \ref{Lemma8}, which are also proved later.

\begin{proof}\textbf{(Theorem 4)} 
First we define $\hat{f}_p(t+1)$ and $f^*$ in the same way as in Appendix C. We also define  
$
f^*_p(t+1) = \arg\min_{f_p} Z_p^{dual}(f_p,t|D_p),
$
and $\hat{C}(f^0_p(t))=\hat{C}^0(t)$ at time $t$. 
We use the analysis of \cite{shalev2007online} and \cite{shalev2008svm} (also see the work of Chaudhuri  et al. in \cite{chaudhuri2011differentially}), and have the following: 
{\small \begin{equation}\label{appendixProof_Th4}
\begin{aligned}
\hat{C}(f^*_p(t+1)) =& \hat{C}^0(t+1)
+\big( \hat{Z}_p(f^*_p(t+1),t) -\hat{Z}_p(\hat{f}_p(t+1),t)  \big)\\
+& \big( \hat{Z}_p(\hat{f}_p(t+1),t) - \hat{Z}_p(f^0_p(t+1),t) \big)\\
+& \frac{\rho}{2}\parallel f^0_p(t+1) \parallel^2 - \frac{\rho}{2}\parallel f^*_p(t+1) \parallel^2.
\end{aligned}
\end{equation} }
Now we bound each terms in the right hand side of (\ref{appendixProof_Th4}) as follows.
From Assumption 1, we have $\mathcal{L}'\leq c_1$. By choosing $B_p> \frac{5c_1C^R\parallel f^0_p(t+1) \parallel^2}{\alpha_{acc}\alpha_p(t)}$, and $\rho> \frac{\alpha_{acc}}{2\parallel f^0_p(t+1) \parallel^2}$,
and since $\alpha_p(t)\leq 1$, we have:
{\small $$
\begin{aligned}
\hat{\alpha}_p  =& \alpha_p(t) - \ln\Big(1+\frac{c_1}{ \frac{B_p}{C^R}\big(\rho+2\eta N_{p}\big)}\Big)^2
>\alpha_p(t) - \ln(1+\frac{c_1C^R}{B_p\rho})^2 \\
>& \alpha_p(t) - \ln(1+\frac{2\alpha_p(t)}{5})^2
>\alpha_p(t)-\frac{4\alpha_p(t)}{5} = \frac{\alpha_p(t)}{5}.
\end{aligned}
$$}
Then, according to Algorithm 2, we choose the corresponding $\zeta_p(t) = \frac{\alpha_p(t)}{4}$ because $\hat{\alpha}_p >0$.
Let $\Lambda$ be the event 
$$
\begin{aligned}
\Lambda:=& \Big\{ Z_p(f^*_p(t+1)|D_p)\leq Z_p(f^*|D_p)+\frac{16d^2\Big( \ln(\frac{d}{\delta})\Big)^2}{\rho B_p^2\alpha_p(t)^2}\Big\}.
\end{aligned}
$$
Since $\hat{\alpha}_p>\frac{\alpha_p(t)}{2}>0$, and applying Lemma \ref{Lemma8} yields
$
\begin{aligned}
\mathbb{P}_{\epsilon_p(t)}\Big( \Lambda \Big)\geq 1-\delta.
\end{aligned}
$
From the work of Sridharan et al.  in \cite{sridharan2009fast}, the following inequality holds with probability $1-\delta$ 
{\small $$
\begin{aligned}
&\hat{Z}_p(f^*_p(t+1))-\hat{Z}_p(\hat{f}_p(t+1)) \\
&\leq  2\Big(Z_p(f^*_p(t+1)|D_p)
-Z_p(f^*|D_p)\Big)+ \mathcal{O}\Big( \frac{\ln(\frac{1}{\delta})}{B_p\rho}\Big)\\
&\leq\frac{32d^2\Big( \ln(\frac{d}{\delta})\Big)^2}{\rho B_p^2\alpha_p(t)^2}+\mathcal{O}\Big( \frac{\ln(\frac{1}{\delta})}{B_p\rho}\Big).
\end{aligned}
$$}
The big-$\mathcal{O}$ notation hides the numerical constants, which depend on the derivative of the loss function and the upper bounds of the data points shown in Assumption 3.
Combining the above two processes, $\hat{Z}_p(f^*_p(t+1))-\hat{Z}_p(\hat{f}_p(t+1)) $ is bounded as shown above with probability $1-2\delta$.

From the definitions of $f^0_p(t+1)$ and $\hat{f}_p(t+1)$, we obtain $\hat{Z}_p(\hat{f}_p(t+1),t)-\hat{Z}_p(f^0_p(t+1),t)<0$. Since $P\geq 1$, then by selecting $\rho = \frac{\alpha_{acc}}{\parallel f^0_p(t+1) \parallel^2}$, we can bound 
$
\frac{\rho}{2}\parallel f^0_p(t+1) \parallel^2 - \frac{\rho}{2}\parallel f^*_p(t+1) \parallel^2 \leq \frac{\alpha_{acc}}{2}.
$
Therefore, from (47), we have:
{\small $$
\begin{aligned}
\hat{C}(f^*_p(t+1)) \leq & C^0_{E} + \frac{32d^2\Big( \ln(\frac{d}{\delta})\Big)^2}{\rho B_p^2\alpha_p(t)^2}+ \mathcal{O}\Big( C^R\frac{\ln(\frac{1}{\delta})}{B_p\rho}\Big) + \frac{\alpha_{acc}}{2},
\end{aligned}
$$}
with $\rho = \frac{6\alpha_{acc}}{\parallel f^0_p(t+1) \parallel^2}$.
The lower bounds of $B_p$ is determined by solving the following:
$$
\begin{aligned}
\frac{32d^2\Big( \ln(\frac{d}{\delta})\Big)^2}{\rho B_p^2\alpha_p(t)^2}+ \mathcal{O}\Big(C^R \frac{\ln(\frac{1}{\delta})}{B_p\rho}\Big)+\frac{\alpha_{acc}}{2} \leq \alpha_{acc}.
\end{aligned}
$$

\end{proof}
\begin{lemma}\label{Lemma7}
\textit{Let $Z$ be a gamma random variable with density function $\Gamma(k,\theta)$, where $k$ is an integer, and let $\delta >0$. Then, we have:
$$
\mathbbm{P}(Z<k\theta\ln(\frac{k}{\delta}))\geq 1-\delta.
$$}
\end{lemma}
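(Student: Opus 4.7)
The plan is to exploit the representation of a Gamma random variable with integer shape parameter $k$ as a sum of $k$ i.i.d. exponential random variables. Write $Z = \sum_{i=1}^{k} X_i$, where each $X_i$ has density $\tfrac{1}{\theta}e^{-x/\theta}$ on $[0,\infty)$, consistent with the scale parametrization of $\Gamma(k,\theta)$ so that $\mathbbm{E}[Z]=k\theta$. This reduces the desired tail bound on $Z$ to a question about the individual exponential summands, which have an elementary closed-form tail.

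The key observation is a simple contrapositive: if every $X_i < \theta\ln(k/\delta)$, then $Z = \sum_i X_i < k\theta\ln(k/\delta)$. Hence
\begin{equation*}
\{Z \geq k\theta\ln(k/\delta)\} \;\subseteq\; \bigcup_{i=1}^{k}\{X_i \geq \theta\ln(k/\delta)\}.
\end{equation*}
Applying a union bound together with the standard exponential tail $\mathbbm{P}(X_i \geq a) = e^{-a/\theta}$, evaluated at $a = \theta\ln(k/\delta)$, gives
\begin{equation*}
\mathbbm{P}\bigl(Z \geq k\theta\ln(k/\delta)\bigr) \;\leq\; k \cdot e^{-\ln(k/\delta)} \;=\; \delta,
\end{equation*}
and taking complements yields the claimed inequality.

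The argument is essentially routine, so I do not anticipate any substantive obstacle; the only thing to be careful about is fixing the parametrization convention of $\Gamma(k,\theta)$ (scale versus rate), since the statement of the lemma would change by a factor of $\theta$ or $1/\theta$ otherwise. Once that is pinned down, no Chernoff bound or sharper concentration machinery is needed: the naive union bound already produces exactly the $k\theta\ln(k/\delta)$ threshold required for the downstream use of the lemma in the proof of Theorem 4 (through the event $\Lambda$ defined via Lemma 8).
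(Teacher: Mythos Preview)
Your proposal is correct and follows essentially the same approach as the paper: decompose $Z$ as a sum of $k$ i.i.d.\ exponentials with scale $\theta$, use the inclusion $\{X_i<\theta\ln(k/\delta)\ \forall i\}\subseteq\{Z<k\theta\ln(k/\delta)\}$, and invoke the exponential tail $\mathbbm{P}(X_i\geq\theta\ln(k/\delta))=\delta/k$. The only cosmetic difference is that the paper multiplies the complementary probabilities using independence to get $(1-\delta/k)^k\geq 1-\delta$, whereas you take the union bound on the complement to get $\leq k\cdot\delta/k=\delta$; both yield the same conclusion.
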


\begin{proof}\textbf{(Lemma \ref{Lemma7})} Since $Z$ is a gamma random variable $\Gamma(k,\theta)$, then we can express $Z$ as 
$
Z = \sum_{i=1}^kZ_i,
$
where $\{Z_i\}_{i=1}^k$ are independent exponential random variables with density function \textit{Exp}$(\frac{1}{\theta})$; thus, for each $Z_i$ we have:
$
\mathbbm{P}(Z_i\leq \theta \ln(\frac{k}{\delta})) = 1-\frac{\delta}{k}.
$
Since $\{Z_i\}_{i=1}^k$ are independent, we have:
$$
\begin{aligned}
\mathbbm{P}(Z<k\theta\ln(\frac{k}{\delta}))=\prod_{i=1}^k\mathbbm{P}(Z_i\leq \theta ln(\frac{k}{\delta}))
=(1-\frac{\delta}{k})^k \geq 1-\delta.
\end{aligned}
$$

\end{proof}

\begin{lemma} \label{Lemma8}
\textit{Let $\hat{\alpha}_p>0$, and $f^*_p(t+1)=\arg\min_{f_p} Z^{dual}_p(f_p,t|D_p)$, and $f^*=\arg\min_{f_p} Z_{p}(f_p|D_p)$. Let $\Lambda$ be the event 
$$
\begin{aligned} 
\Lambda:=\Big\{ Z_{p}(f^*_p(t+1)|D_p)\leq Z_{p}(f^*|D_p) 
+\frac{16d^2\Big( \ln(\frac{d}{\delta})\Big)^2}{\rho B_p^2\alpha_p(t)^2}\Big\}.
\end{aligned}
$$
Under Assumption 1 and 2, we have:
{\small $
\begin{aligned}
\mathbb{P}_{\epsilon_p(t)}\Big( \Lambda \Big)\geq 1-\delta.
\end{aligned}
$}
The probability $\mathbb{P}_{\epsilon_p(t)}$ is taken over the noise vector $\epsilon_p(t)$.
}
\end{lemma}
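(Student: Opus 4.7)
The plan is to control the sub-optimality $Z_p(f^*_p(t+1)|D_p) - Z_p(f^*|D_p)$ by combining a first-order optimality argument with the strong convexity of $Z_p$ and a tail bound on the noise norm $\|\epsilon_p(t)\|$. Since $f^*_p(t+1)$ minimizes $Z_p^{dual}$ while $f^*$ minimizes $Z_p$, and these two objectives differ only by a known linear function of $\epsilon_p(t)$, the gradient of $Z_p$ at $f^*_p(t+1)$ is completely determined by $\epsilon_p(t)$, and $\rho$-strong convexity then translates a gradient-norm bound into an objective-value bound.

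Concretely, first I would use stationarity of $f^*_p(t+1)$ for $Z_p^{dual}(f_p,t|D_p,\epsilon_p(t)) = Z_p(f_p|D_p) + \frac{C^R}{B_p}\epsilon_p(t)^T f_p$ to conclude $\nabla Z_p(f^*_p(t+1)) = -\frac{C^R}{B_p}\epsilon_p(t)$. Second, I would note that $Z_p(\cdot|D_p)$ is $\rho$-strongly convex: $\rho R$ is $\rho$-strongly convex by Assumption 2, and the loss term is convex by Assumption 1. The standard inequality $g(x) - g(x^*) \leq \frac{1}{2\rho}\|\nabla g(x)\|^2$ for a $\rho$-strongly convex $g$ with minimizer $x^*$ then yields
\begin{equation*}
Z_p(f^*_p(t+1)|D_p) - Z_p(f^*|D_p) \leq \frac{1}{2\rho}\|\nabla Z_p(f^*_p(t+1))\|^2 = \frac{(C^R)^2}{2\rho B_p^2}\|\epsilon_p(t)\|^2.
\end{equation*}
Third, I would control $\|\epsilon_p(t)\|$ probabilistically. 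Because $\mathcal{K}_p(\epsilon)\propto e^{-\zeta_p(t)\|\epsilon\|}$ is spherically symmetric on $\mathbb{R}^d$, a polar change of variables shows that the radial density is proportional to $r^{d-1}e^{-\zeta_p(t)r}$, so $\|\epsilon_p(t)\|$ is $\Gamma(d,1/\zeta_p(t))$-distributed. Lemma 7 then gives $\mathbb{P}_{\epsilon_p(t)}\left(\|\epsilon_p(t)\| < \frac{d}{\zeta_p(t)}\ln(d/\delta)\right) \geq 1-\delta$. Choosing $\zeta_p(t) = \alpha_p(t)/4$ from the $\hat{\alpha}_p>0$ branch of Algorithm 2 and squaring produces $\|\epsilon_p(t)\|^2 \leq 16 d^2 (\ln(d/\delta))^2/\alpha_p(t)^2$ on that high-probability event. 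Substituting into the strong-convexity bound gives the defining inequality of $\Lambda$, after absorbing the $C^R$-dependent prefactor into the constant.

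The main obstacle I expect is the clean identification of $\|\epsilon_p(t)\|$ as a $\Gamma(d,1/\zeta_p(t))$ random variable, which is what licenses the appeal to Lemma 7; once that reduction is in hand, the remaining pieces are essentially routine. A minor bookkeeping point is that the displayed bound does not carry the $(C^R)^2/2$ prefactor produced by my strong-convexity calculation, which I read as absorption of problem-dependent constants into the numeric $16$; tracking this will be part of the formal write-up.
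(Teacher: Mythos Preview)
Your argument is correct and reaches the same endpoint as the paper, but the path to the bound on $Z_p(f^*_p(t+1)|D_p)-Z_p(f^*|D_p)$ differs. You go directly through the Polyak--\L{}ojasiewicz inequality: from stationarity you read off $\nabla Z_p(f^*_p(t+1))=-\tfrac{C^R}{B_p}\epsilon_p(t)$ and then apply $g(x)-g(x^*)\le \tfrac{1}{2\rho}\|\nabla g(x)\|^2$. The paper instead first uses optimality of $f^*_p(t+1)$ for $Z_p^{dual}$ to get the linear comparison $Z_p(f^*_p(t+1))\le Z_p(f^*)+\tfrac{C^R}{B_p}\epsilon_p(t)^T(f^*-f^*_p(t+1))$, then bounds $\|f^*-f^*_p(t+1)\|\le \tfrac{C^R\|\epsilon_p(t)\|}{B_p\rho}$ via the strong-convexity perturbation argument of Appendix~C (the Shalev-Shwartz stability lemma), and finally applies Cauchy--Schwarz. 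Both routes yield a bound of order $(C^R)^2\|\epsilon_p(t)\|^2/(\rho B_p^2)$; yours is a step shorter and avoids the auxiliary distance estimate, while the paper's version produces $\|f^*-f^*_p(t+1)\|$ as a reusable by-product. From that point on the two proofs coincide: $\|\epsilon_p(t)\|\sim\Gamma(d,1/\zeta_p(t))$ and Lemma~7 gives the tail bound. Your remark about the $(C^R)^2$ prefactor is well taken; the paper is not consistent about tracking it either, and the displayed constant $16$ should be read as absorbing it.
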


\begin{proof} \textbf{(Lemma \ref{Lemma8})}
Since $\hat{\alpha}_p>0$, $\Phi = 0$, $f^*_p(t+1)=\arg\min_{f_p} Z^{dual}_p(f_p,t|D_p)$ can be expressed as:
$
f^*_p(t+1) = \arg\min_{f_p} \Big( Z_p(f_p|D_p) + 2\epsilon_p(t)^Tf_p \Big).
$
Thus, we have:
$$
\begin{aligned}
Z_p(f^*_p(t+1)|D_p)\leq Z_p(f^*|D_p)
+\frac{C^R}{B_p}\epsilon_p(t)^T(f^*- f^*_p(t+1)).
\end{aligned}
$$
Firstly, we bound the $l_2$-norm $\parallel f^*- f^*_p(t+1) \parallel$. We use the similar procedure to establish (\ref{appendixV_V}) in Appendix C by setting $F(Y) = Z_{p}(Y|D_p)$ and $G(Y) = \frac{C^R}{B_p}\epsilon_p(t)$; thus, based on Assumption 1 and 2, we have:
$$
\begin{aligned}
\parallel f^*- f^*_p(t+1) \parallel\leq \frac{1}{\rho}\parallel \nabla\big(2\epsilon_p(t)^Tf_p\big)\parallel
\leq\frac{C^R\parallel \epsilon_p(t) \parallel}{B_p\rho}.
\end{aligned}
$$
Cauchy-Schwarz inequality yields:
{\small $$
\begin{aligned}
&Z_{p}(f^*_p(t+1)|D_p) -Z_{p}(f^*|D_p)\leq \parallel Z_{p}(f^*_p(t+1)|D_p) -Z_{p}(f^*|D_p) \parallel \\
&\leq \frac{2}{B_p}\parallel\epsilon_p(t)^T(f^*- f^*_p(t+1) \parallel\leq \frac{\big(C^R\big)^2\parallel \epsilon_p(t) \parallel^2}{B_p^2\rho}.
\end{aligned}
$$}
Since the noise vector $\epsilon_p(t)$ is drawn from 
$
\mathcal{K}_p(\epsilon) \sim e^{-\zeta_p(t) \parallel \epsilon \parallel},
$
then $\parallel \epsilon_p(t) \parallel$ is drawn from $\Gamma(d,\frac{1}{\zeta_p(t)})=\Gamma(d,\frac{2}{\hat{\alpha}_p})$. Then, by using Lemma \ref{Lemma7} with $\parallel \epsilon_p(t) \parallel\leq \frac{2d\ln(\frac{d}{\delta})}{\hat{\alpha}_p}$, we have:
{\small $$
\begin{aligned}
\Lambda:=\Big\{ Z_{p}(f^*_p(t+1)|D_p)\leq Z_{p}(f^*|D_p)-\frac{4d^2\Big( \ln(\frac{d}{\delta})\Big)^2}{\rho B_p^2\alpha_p(t)^2}.
\end{aligned}
$$}
with probability no less than $1-\delta$.
\end{proof}

\section{}
We prove Theorem 5 here. This appendix also shows Lemma\ref{Lemma11} that are used in the proof of Theorem 5.
\begin{proof} \textbf{(Theorem 5)}
We define $\hat{f}_p(t+1)$ and $f^*$ as in the proof of Theorem 4 in Appendix D, and we also define 
$
f^*_p(t+1) = \arg\min_{f_p} Z_p^{prim}(f_p,t|D_p).
$
Let $\hat{C}(f^0_p(t))=\hat{C}^0(t)$ at time $t$. 
As in Appendix D, we again use the analysis of \cite{shalev2007online} and \cite{shalev2008svm} (also see the work of Chaudhuri  et al. in \cite{chaudhuri2011differentially}), and have the follows 
{\small \begin{equation}\label{appendixProof_Th5}
\begin{aligned}
\hat{C}(f^*_p(t+1)) =& \hat{C}(f^0_p(t+1)) 
+\big( \hat{Z}_p(f^*_p(t+1),t) \\
-&\hat{Z}_p(\hat{f}_p(t+1),t)  \big)\\
+&\big( \hat{Z}_p(\hat{f}_p(t+1),t) - \hat{Z}_p(f^0_p(t+1),t) \big)\\
+& \frac{\rho}{2}\parallel f^0_p(t+1) \parallel^2 - \frac{\rho}{2}\parallel f^*_p(t+1) \parallel^2.
\end{aligned}
\end{equation}}

According to Theorem 2, we choose $\zeta_p(t)=\frac{\rho B_p\alpha_p(t)}{2C^R}> 0$. Thus, applying Lemma \ref{Lemma11}, we have:
$$
\begin{aligned}
Z_p(f^*_p(t+1)|D_p) -Z_p(f^*|D_p)\leq\frac{16\big(C^R\big)^2\eta^2N_p^2 d^2 \big(\ln(\frac{d}{\delta}) \big)^2}{\rho^3 B_p^2 \alpha_p(t)^2},
\end{aligned}
$$
with probability no smaller than $1-\delta$. 
Then, we use the result of Sridharan et al. in \cite{sridharan2009fast}, with probability no smaller than $1-\delta$:
{\small $$
\begin{aligned}
&\hat{Z}_p(f^*_p(t+1))-\hat{Z}_p(\hat{f}_p(t+1)) \leq  2\Big(Z_p(f^*_p(t+1)|D_p)\\
-&Z_p(f_p^*(t+1)|D_p)\Big)+ \mathcal{O}\Big( \frac{\ln(\frac{d}{\delta})}{B_p\rho}\Big)\\
\leq& \frac{32\big(C^R\big)^2\eta^2N_p^2 d^2 \big(\ln(\frac{d}{\delta}) \big)^2}{\rho^3 B_p^2 \alpha_p(t)^2}+ \mathcal{O}\Big( \frac{\ln(\frac{1}{\delta})}{B_p\rho}\Big).
\end{aligned}
$$}
Combining the above two processes, we have the probability no smaller than $1-2\delta$.

In order to bound the last two terms in (\ref{appendixProof_Th5}), we select $\rho = \frac{\alpha_{acc}}{\parallel f^0_p(t+1) \parallel^2}$; as a result, 
$
\begin{aligned}
\frac{\rho}{2}\parallel f^0_p(t+1) \parallel^2 - \frac{\rho}{2}\parallel f^*_p(t+1) \parallel^2 \leq \frac{\alpha_{acc}}{2}.
\end{aligned}
$
From the definitions of $ \hat{f}_p(t+1)$ and $f^0_p(t+1) $, we have: $
\hat{Z}_p(\hat{f}_p(t+1),t) - \hat{Z}_p(f^0_p(t+1),t)\leq 0.
$
The value of $B_p$ is determined such that
$
\hat{C}(f^*_p(t+1)) \leq \hat{C}^* +\alpha_{acc}.
$
Therefore, we find the bounds of $B_p$ by solving
$$
\begin{aligned}
    \frac{32\big(C^R\big)^2\eta^2N_p^2 d^2 \big(\ln(\frac{d}{\delta}) \big)^2}{\rho^3 B_p^2 \alpha_p(t)^2}
+ \mathcal{O}\Big( \frac{C^R\ln(\frac{1}{\delta})}{B_p\rho}\Big) &+\frac{\alpha_{acc}}{2}\leq \alpha_{acc},
\end{aligned}
$$
with  $\rho = \frac{\alpha_{acc}}{\parallel f_p^0(t+1) \parallel^2}$.
\end{proof}

The following Lemma is analogous to Lemma \ref{Lemma7}. 
\begin{lemma}\label{Lemma11}
\textit{Let $\zeta_p(t)>0$, and $f^*_p(t+1)=\arg\min_{f_p} Z_p^{prim}(f_p,t|D_p)$, and $f^*=\arg\min_{f_p} Z_{p}(f_p|D_p)$. Suppose that the noise vector $\epsilon_t(t)$ generated at time $t$ has the same value of $\alpha_p(t)$ for all $p\in \mathcal{P}$.
Let $\Lambda$ be the event 
$$
\begin{aligned}
\Lambda:=& \Big\{ Z_p(f^*_p(t+1)|D_p)\leq Z_p(f^*|D_p) \\
&+\frac{16\big(C^R\big)^2\eta^2N_p^2 d^2 \big(\ln(\frac{d}{\delta}) \big)^2}{\rho^3 B_p^2 \alpha_p(t)^2}\Big\}.
\end{aligned}
$$
If the loss function $\mathcal{L}$ is convex and differentiable with $|\mathcal{L}|\leq 1$, then, we have:
{\small $
\begin{aligned}
\mathbb{P}_{\epsilon_p(t)}\Big( \Lambda \Big)\geq 1-\delta.
\end{aligned}
$ }
The probability $\mathbb{P}_{\epsilon_p(t)}$ is taken over the noise vector $\epsilon_p(t)$.
}

\end{lemma}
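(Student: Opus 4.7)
The plan is to mirror the proof of Lemma 8, adapted to the different form of the perturbation in the primal case. First, I would set $F(f_p)=Z_p(f_p|D_p)$ and $G(f_p)=Z_p^{prim}(f_p,t|D_p)-F(f_p)$. Inspecting the definition of $Z_p^{prim}$ shows that only the linear piece $-\eta f_p^T\sum_{i\in\mathcal{N}_p}\epsilon^{pi}(t)$ of $G$ depends on $f_p$; the remaining terms, including $\frac{1}{4}(\epsilon^{pi}(t))^2$ and the inner products with $\frac{1}{2}(f_p(t)+f_i(t))$, are constants in the minimization. Thus $\nabla G(f_p)=-\eta\sum_{i\in\mathcal{N}_p}\epsilon^{pi}(t)$ is independent of $f_p$. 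Assumption 2 makes $F$ $\rho$-strongly convex, the same property that drove the proof of Theorem 2.

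Next, by optimality $\nabla F(f^*)=0$ and $\nabla F(f^*_p(t+1))=-\nabla G$. Repeating the strong-convexity (Lemma 14 of Shalev-Shwartz) argument already used in the proof of Theorem 2 yields
\[\parallel f^*-f^*_p(t+1)\parallel\leq \frac{\parallel\nabla G\parallel}{\rho}.\]
Combining the optimality of $f^*_p(t+1)$ for $F+G$ with Cauchy--Schwarz gives
\[F(f^*_p(t+1))-F(f^*)\leq G(f^*)-G(f^*_p(t+1))\leq \parallel\nabla G\parallel\cdot\parallel f^*-f^*_p(t+1)\parallel\leq \frac{\parallel\nabla G\parallel^2}{\rho}.\]
The numerator is controlled via the triangle inequality by $\parallel\nabla G\parallel\leq\eta\sum_{i\in\mathcal{N}_p}\parallel\epsilon^{pi}(t)\parallel\leq 2\eta N_p\max_{k\in\{p\}\cup\mathcal{N}_p}\parallel\epsilon_k(t)\parallel$.

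To finish, I would apply Lemma 7 to bound the noise magnitudes. Algorithm 3 sets $\zeta_p(t)=\rho B_p\alpha_p(t)/(2C^R)$, and Assumption 5 forces this scale to be identical across nodes, so every $\parallel\epsilon_k(t)\parallel$ is distributed as $\Gamma(d,\,2C^R/(\rho B_p\alpha_p(t)))$. Lemma 7 then gives $\parallel\epsilon_k(t)\parallel\leq 2C^R d\ln(d/\delta)/(\rho B_p\alpha_p(t))$ with high probability. Squaring and substituting produces exactly the claimed bound $16(C^R)^2\eta^2 N_p^2 d^2(\ln(d/\delta))^2/(\rho^3 B_p^2\alpha_p(t)^2)$, matching the right-hand side in the event $\Lambda$.

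The main technical subtlety I anticipate is the presence of $N_p+1$ independent noise vectors inside $\max_k\parallel\epsilon_k(t)\parallel$. A strict union bound would introduce an additional $\ln(N_p+1)$ factor inside the logarithm that is not reflected in the lemma's stated bound; I would either absorb it into the implicit numerical constant, or invoke Lemma 7 at level $\delta/(N_p+1)$ per noise vector and reinterpret $\delta$ as the joint failure probability of the high-probability event over all neighbor noises. Either convention preserves the advertised $d\ln(d/\delta)$ scaling and hence the stated form of the bound.
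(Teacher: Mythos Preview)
Your overall architecture---set $F=Z_p$, identify the linear perturbation $G$, use $\rho$-strong convexity to bound $\|f^*-f^*_p(t+1)\|$ by $\|\nabla G\|/\rho$, then Cauchy--Schwarz to get $F(f^*_p(t+1))-F(f^*)\le\|\nabla G\|^2/\rho$, then a tail bound on the noise norms via Lemma~7---is exactly the route the paper takes.

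The one place you diverge is in controlling $\|\nabla G\|$. You pass to $2\eta N_p\max_k\|\epsilon_k(t)\|$ and then worry (correctly) that a union bound over $N_p+1$ noise vectors would inject an extra $\ln(N_p{+}1)$ inside the logarithm. The paper sidesteps this by \emph{not} taking a maximum: it writes $\|\epsilon^{pi}(t)\|\le\|\epsilon_p(t)\|+\|\epsilon_i(t)\|$ and then invokes the additivity of the gamma family---if $X_j$ are independent $\Gamma(\beta_j,\theta)$ with common scale $\theta$, then $\sum_j X_j\sim\Gamma(\sum_j\beta_j,\theta)$. Since Assumption~5 forces all $\zeta_k(t)$ to coincide, $\|\epsilon_p(t)\|+\|\epsilon_i(t)\|$ is a single $\Gamma(2d,\,2C^R/(\rho B_p\alpha_p(t)))$ random variable, and Lemma~7 applies once, yielding $\|\epsilon_p(t)\|+\|\epsilon_i(t)\|\le 4C^R d\ln(d/\delta)/(\rho B_p\alpha_p(t))$ with probability at least $1-\delta$. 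This is precisely how the paper obtains the stated constant $16$ without the union-bound overhead you anticipated. (The paper is itself informal in collapsing the sum over $i\in\mathcal{N}_p$ to $N_p$ times a single pair, but the gamma-additivity device is the intended mechanism and answers your ``main technical subtlety.'')
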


\begin{proof}\textbf{(Lemma \ref{Lemma11})}

Let $\epsilon^{pi}(t)=\epsilon_p(t)-\epsilon_i(t)$ with probability density $P_{\epsilon^{pi}}$. 
Let $f^*_p(t+1)=\arg\min_{f_p} Z_p^{prim}(f_p,t|D_p)$, and it can be expressed as:
$
\begin{aligned}
&f^*_p(t+1) =\arg\min_{f_p}\Big( Z_p(f_p|D_p)- Y_p\Big),
\end{aligned}
$
where $Y_p = \eta \sum_{i\in \mathcal{N}_p}\Big((f_p -\frac{1}{2}(f_p(t)+f_i(t))^T\cdot(\epsilon^{pi}(t))+\frac{1}{4}\big(\epsilon^{pi}(t)\big)^2\Big)$.
Thus, we have:
$$
\begin{aligned}
Z_p(f^*_p(t+1)&|D_p)\leq Z_p(f^*|D_p)
-\eta\sum_{i\in \mathcal{N}_p}(f^*-f^*_p(t+1))^T\cdot\epsilon^{pi}.
\end{aligned}
$$

Firstly, we bound the $l_2$-norm $\parallel f^*- f^*_p(t+1) \parallel$. We use the similar procedure to establish (46) in Appendix D by setting $F(\cdot) = Z_p(\cdot|D_p)$ and $G(\cdot) = \eta \sum_{i\in \mathcal{N}_p}\big(\epsilon^{pi}\big)^T(\cdot)$; thus, based on Assumption 1 and 2, we have:
{\small $$
\begin{aligned}
&\parallel f^*- f^*_p(t+1) \parallel\leq \frac{1}{\rho}\parallel\sum_{i\in \mathcal{N}_p} \nabla(\eta N_p(f^*_p(t+1))^T\epsilon^{pi})\parallel\\
&\leq \sum_{i\in \mathcal{N}_p}\frac{\eta\parallel \epsilon^{pi}(t) \parallel}{\rho} =\sum_{i\in \mathcal{N}_p}\frac{\eta\Big(\parallel \epsilon_p(t) -\epsilon_j(t)\parallel\Big) }{\rho}\\
&\leq \sum_{i\in \mathcal{N}_p}\frac{\eta\Big(\parallel \epsilon_p(t) \parallel+\parallel \epsilon_j(t) \parallel\Big)}{\rho}.
\end{aligned}
$$ }
Since $\alpha_p(t)$ is the same for all $p\in \mathcal{P}$ at time $t$, $\zeta_j(t)=\frac{\rho B_p \alpha_p(t)}{2C^R}$ for all $j\in\mathcal{P}$. Since $\epsilon_j(t)$ is drawn from (15), then, $ \parallel \epsilon_p(t) \parallel \sim \Gamma(d,\frac{1}{\zeta_p(t)})$ for all $p\in\mathcal{P}$. Let 
$
\parallel \epsilon_{pi} \parallel^{\oplus} = \parallel \epsilon_p(t) \parallel+ \parallel\epsilon_i(t) \parallel.
$
Thus, 
{\small $$
\begin{aligned}
\parallel f^*- f^*_p(t+1) \parallel \leq \sum_{i\in\mathcal{N}_p}\frac{\eta\Big(\parallel \epsilon_{pi} \parallel^{\oplus}\Big)}{\rho}=\frac{\eta N_p\Big(\parallel \epsilon_{pi} \parallel^{\oplus}\Big)}{\rho} .
\end{aligned}
$$}
Cauchy-Schwarz inequality yields:
{\small $$
\begin{aligned}
&Z_p(f^*_p(t+1)|D_p)-Z_p(f^*|D_p) \\
&\leq \parallel Z_p(f^*_p(t+1)|D_p)-Z_p(f^*|D_p)  \parallel \leq \frac{\eta^2 N_p^2\Big(\parallel \epsilon_{pi} \parallel^{\oplus}\Big)^2}{\rho},
\end{aligned}
$$}
From the fact that if $\{X_j\}_{j=1}^{K}$ are independent gamma random variables with density $\Gamma(\beta_j, h)$, then $X = \sum_{j=1}^{K}X_j$ is a gamma random variable with $\Gamma(\sum_{j}^K\beta_j, h)$, we have $P_{ \parallel\epsilon^{pj} \parallel}=\Gamma(2d,\frac{2C^R}{\rho B_p\alpha_p(t)})$. Applying Lemma \ref{Lemma7} with $\parallel \epsilon^{pj}(t) \parallel^{\oplus}\leq \frac{4C^Rd\ln(\frac{d}{\delta})}{\rho B_p\alpha_p(t)}$ yields:
$$
\begin{aligned}
&Z_p(f^*_p(t+1)|D_p) -Z_p(f^*|D_p)\leq \frac{16\big(C^R\big)^2\eta^2N_p^2 d^2 \big(\ln(\frac{d}{\delta}) \big)^2}{\rho^3 B_p^2 \alpha_p(t)^2}
\end{aligned}
$$
with probability no smaller than $1-\delta$.
\end{proof}

\section{}
Theorem 6 is proved in this appendix based on Lemma \ref{Lemma12}.

\begin{proof} \textbf{(Theorem 6)}
We use 
$
\hat{f}_p(t+1),
$
$
f^*,
$
and 
$
f^*_p(t+1),
$
defined in the proof of Theorem 5 in Appendix E.
Now we use a reference $f^0_p(t)$ such that $\hat{C}(f^*_p(t)) = \hat{C}^0(t)$ be the reference at time $t+1$. 
We use the analysis of \cite{shalev2007online} and \cite{shalev2008svm} (also see the work of Chaudhuri  et al. in \cite{chaudhuri2011differentially}), and have the follows, 
{\small \begin{equation} 
\begin{aligned}
\hat{C}(V^*_p(t+1)) =& \hat{C}(f^0_p(t+1))
+ \big( \hat{Z}_p(V^*_p(t+1),t) -\hat{Z}_p(\hat{f}_p(t+1),t)  \big)\\
+& \big( \hat{Z}_p(\hat{f}_p(t+1),t) - \hat{Z}_p(f^0_p(t+1),t) \big)\\
+& \frac{\rho}{2}\parallel f^0_p(t+1) \parallel^2 - \frac{\rho}{2}\parallel V^*_p(t+1) \parallel^2.
\end{aligned}
\end{equation}}
If $R(f_p(t)) = \frac{1}{2}\parallel f_p(t) \parallel ^2$, then, $\parallel \nabla^2 R(f_p(t)) \parallel\leq 1$. Thus, we can apply Lemma \ref{Lemma12} with $\tau = 1$:
{\small $$
\begin{aligned}
& Z_p^{prim}(V^*_p(t+1),t|D_p)-Z_p^{prim}(f^*_p(t+1),t|D_p)\\
&\leq \frac{4 \big(C^R\big)^2 d^2\Big(\rho+ c_4C^R \Big)\Big( \ln(\frac{d}{\delta})\Big)^2}{\rho^2  B_p^2\alpha_p(t)^2},
\end{aligned}
$$  }
with probability $\geq 1-\delta$ over the noise. In the proof of Theorem 5, we have, with probability $1-\delta$
,
$$
\begin{aligned}
Z_p(f^*_p(t+1)|D_p) -Z_p(f^*|D_p)\leq \frac{4\eta^2N_p^2 d^2 \big(\ln(\frac{d}{\delta}) \big)^2}{\rho^3 B_p^2 \alpha_p(t)^2}.
\end{aligned}
$$
Therefore, with probability $1-2\delta$, we have
{\small $$
\begin{aligned}
&Z_p(V^*_p(t+1)|D_p) -Z_p(f^*|D_p)\\
&\leq \frac{4\eta^2N_p^2 d^2 \big(\ln(\frac{d}{\delta}) \big)^2}{\rho^3 B_p^2 \alpha_p(t)^2}+ \frac{4d^2\Big(\rho + c_4 \Big)\Big( \ln(\frac{d}{\delta})\Big)^2}{\rho^2  B_p^2\alpha_p(t)^2}.
\end{aligned}
$$ }

Sridharan et al. in \cite{sridharan2009fast} shows, with probability $1-\delta$,
{\small $$
\begin{aligned}
&\hat{Z}_p(V^*_p(t+1))-\hat{Z}_p(\hat{f}_p(t+1)) \\
&\leq 2\Big(Z_p^{prim}(V_p(t+1),t|D_p)-Z_p^{prim}(f^*_p(t+1),t|D_p)\Big)\\
&+ \mathcal{O}\Big( C^R\frac{\ln(\frac{d}{\delta})}{B_p\rho}\Big)\\
\leq& \frac{8 \big(C^R\big)^2 d^2\Big(\rho+ c_4C^R \Big)\Big( \ln(\frac{d}{\delta})\Big)^2}{\rho^2  B_p^2\alpha_p(t)^2}+\frac{8\eta^2N_p^2 d^2 \big(\ln(\frac{d}{\delta}) \big)^2}{\rho^3 B_p^2 \alpha_p(t)^2}\\
&+ \mathcal{O}\Big(C^R \frac{\ln(\frac{1}{\delta})}{B_p\rho}\Big).
\end{aligned}
$$ }
Combining the above two inequalities, we have the probability no smaller than $1-3\delta$.

Since $\hat{f}_p(t+1) = \arg\min_{f_p} \hat{Z}_p(f_p,t)$, then, $\big( \hat{Z}_p(\hat{f}_p(t+1),t) - \hat{Z}_p(f^0_p(t+1),t)\leq 0$. For the last two terms, we select $\rho = \frac{\alpha_{acc}}{\parallel f_p^0(t+1) \parallel^2}$ to make them bounded by $\frac{\alpha_{acc}}{2}$.

The value of $B_p$ is determined by solving 
{\small $$
\begin{aligned}
    &\frac{8 \big(C^R\big)^2 d^2\Big(\rho+ c_4C^R \Big)\Big( \ln(\frac{d}{\delta})\Big)^2}{\rho^2  B_p^2\alpha_p(t)^2}+\frac{8\eta^2N_p^2 d^2 \big(\ln(\frac{d}{\delta}) \big)^2}{\rho^3 B_p^2 \alpha_p(t)^2}   \\
&+ \mathcal{O}\Big(C^R \frac{\ln(\frac{1}{\delta})}{B_p\rho}\Big) +\frac{\alpha_{acc}}{2} = \alpha_{acc},
\end{aligned}
$$ }
with  $\rho = \frac{\alpha_{acc}}{\parallel f_p^0(t+1) \parallel^2}$, such that
$
\mathbbm{P}\big( \hat{C}(V^*_p(t+1))\leq \hat{C}^0(t+1)+\alpha_{acc} \big)\geq 1-3\delta.
$
However, the accuracy of $V^*_p(t+1) $ depends on $f^*_p(t+1)$, thus we also have to make 
$
\mathbbm{P}\big( \hat{C}(f^*_p(t+1))\leq \hat{C}^0(t+1)+\alpha_{acc} \big)\geq 1-2\delta.
$
Combining the result of Theorem 5, we arrive at (\ref{Theo_6}).

\end{proof}


%
\begin{lemma}\label{Lemma12}
Assume $R(f_p(t))$ is doubly differentiable w.r.t. $f_p(t)$ with $\parallel \nabla^2 R(f_p(t))\parallel\leq \tau$ for all $f_p(t)$. Suppose the loss function $\mathcal{L}$ is differentiable,  $\mathcal{L}'$ is continuous, and satisfies 
$
|\mathcal{L}'(a)-\mathcal{L}'(b)|\leq c_4|a - b|
$ for all pairs $(a,b)$ with a constant $c_4$. 
Let $f^*_p(t+1)=\arg\min_{f_p} Z_p^{prim}(f_p,t|D_p)$, 
and $V^*_p(t+1) =f^*_p(t+1)+\epsilon_p(t)$, where the noise vector $\epsilon_p(t)$ is drawn from (15) with the same $\alpha_p(t)$ for all $p\in \mathcal{P}$ at time $t$. Let $\Lambda$ be the event 
$$
\begin{aligned}
\Lambda:=& \Big\{ Z_p^{prim}(V^*_p(t+1),t|D_p)\leq Z_p^{prim}(f^*_p(t+1),t|D_p) +\psi \Big\}
\end{aligned}
$$ 
where $\psi = \frac{4 \big(C^R\big)^2 d^2\Big(\rho\tau + c_4C^R \Big)\Big( \ln(\frac{d}{\delta})\Big)^2}{\rho^2  B_p^2\alpha_p(t)^2}$.
Under Assumption 1 and 2, we have:
$
\begin{aligned}
\mathbb{P}_{\epsilon_p(t)}\Big( \Lambda \Big)\geq 1-\delta.
\end{aligned}
$
The probability $\mathbb{P}_{\epsilon_p(t)}$ is taken over the noise vector $\epsilon_p(t)$.

\end{lemma}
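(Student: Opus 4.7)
\textbf{(Proof Sketch for Lemma \ref{Lemma12})}

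The plan is to compare $Z_p^{prim}$ at the minimizer $f^*_p(t+1)$ to the value at the perturbed point $V^*_p(t+1) = f^*_p(t+1) + \epsilon_p(t)$ via a second-order Taylor expansion, and then apply the tail bound for the gamma-distributed noise norm (Lemma \ref{Lemma7}).

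First, I would observe that the extra terms in $Z_p^{prim}$ (beyond $Z_p$) are linear in $f_p$, so $\nabla^2 Z_p^{prim}(f_p) = \nabla^2 Z_p(f_p) = \frac{C^R}{B_p}\sum_{i=1}^{B_p}\mathcal{L}''(y_{ip} f_p^T x_{ip}) y_{ip}^2 x_{ip}x_{ip}^T + \rho \nabla^2 R(f_p)$. Under the hypothesis $|\mathcal{L}'(a)-\mathcal{L}'(b)| \le c_4|a-b|$, continuity of $\mathcal{L}'$ implies $|\mathcal{L}''| \le c_4$. Combined with the hypotheses $\|\nabla^2 R\| \le \tau$, $\|x_{ip}\|\le 1$, and $|y_{ip}|=1$ (Assumption 3), this gives the operator-norm bound $\|\nabla^2 Z_p^{prim}(u)\| \le c_4 C^R + \rho \tau$ uniformly in $u$.

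Next, since $f^*_p(t+1)$ minimizes $Z_p^{prim}(\cdot,t|D_p)$, stationarity gives $\nabla Z_p^{prim}(f^*_p(t+1))=0$. Taylor's theorem (with integral remainder or mean-value form) then yields, for some point $\xi$ on the segment between $f^*_p(t+1)$ and $V^*_p(t+1)$,
\begin{equation*}
Z_p^{prim}(V^*_p(t+1)) - Z_p^{prim}(f^*_p(t+1)) \;=\; \tfrac{1}{2}\,\epsilon_p(t)^T \nabla^2 Z_p^{prim}(\xi)\,\epsilon_p(t) \;\le\; \tfrac{1}{2}(\rho\tau + c_4 C^R)\,\|\epsilon_p(t)\|^2.
\end{equation*}

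It remains to bound $\|\epsilon_p(t)\|$. By Theorem 2, the noise density parameter is $\zeta_p(t)=\rho B_p \alpha_p(t)/(2C^R)$, so $\|\epsilon_p(t)\|$ follows the gamma distribution $\Gamma(d, 2C^R/(\rho B_p \alpha_p(t)))$. Applying Lemma \ref{Lemma7}, with probability at least $1-\delta$ over $\epsilon_p(t)$,
\begin{equation*}
\|\epsilon_p(t)\| \;\le\; \frac{2 C^R d \ln(d/\delta)}{\rho B_p \alpha_p(t)}.
\end{equation*}
Squaring and combining with the Taylor bound yields the claimed $\psi$ (up to a harmless constant), i.e., $Z_p^{prim}(V^*_p(t+1)) - Z_p^{prim}(f^*_p(t+1)) \le \psi$ with probability at least $1-\delta$.

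The main subtlety, and the only real obstacle, is verifying that the Hessian of $Z_p^{prim}$ can be bounded uniformly independent of the noise vectors $\epsilon^{pi}$ that appear inside the definition of $Z_p^{prim}$. This is resolved by the observation above that those terms enter only linearly in $f_p$ and therefore do not contribute to $\nabla^2 Z_p^{prim}$; the rest of the argument is a straightforward second-order expansion followed by the tail bound from Lemma \ref{Lemma7}.
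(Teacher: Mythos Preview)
Your proposal is correct and follows essentially the same approach as the paper. Both arguments rely on (i) the observation that the $\epsilon^{pi}$-dependent terms in $Z_p^{prim}$ are linear in $f_p$ and hence do not affect curvature, (ii) stationarity $\nabla Z_p^{prim}(f^*_p(t+1))=0$, (iii) a curvature bound $\rho\tau + c_4 C^R$, and (iv) the gamma tail bound from Lemma~\ref{Lemma7} with $\zeta_p(t)=\rho B_p\alpha_p(t)/(2C^R)$. The only cosmetic difference is that the paper applies the mean value theorem to $Z_p^{prim}$ and then bounds $\nabla Z_p^{prim}(\text{midpoint})-\nabla Z_p^{prim}(f^*_p(t+1))$ via the Lipschitz assumption on $\mathcal{L}'$ and a second mean value theorem on $\nabla R$, whereas you package the same estimate as a single second-order Taylor step with a Hessian bound; your version even picks up an extra factor $1/2$, which is harmless since it only strengthens the inequality relative to the stated $\psi$.
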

\begin{proof}\textbf{(Lemma \ref{Lemma12})}
From Assumption 3, we know that the data points in dataset $D_p$ satisfy: $\parallel x_{ip}\parallel\leq 1$, and $|y_{ip}|=1$. From Assumption 1 and 2, $R(\cdot)$ and $\mathcal{L}$ are differentiable. Suporse $R(\cdot)$ is doubly differentiable and $\nabla^2 R(\cdot)\leq \tau$.
Let $0\leq\varphi\leq 1$, then, the \textit{Mean Value Theorem} and Cauchy-Schwarz inequality give:
{\small $$
\begin{aligned}
&Z_p^{prim}(V^*_p(t+1),t|D_p) - Z_p^{prim}(f^*_p(t+1),t|D_p) \\
& = (V^*_p(t+1)-f^*_p(t+1))^T\nabla Z_p^{prim}\Big( \varphi f^*_p(t+1) \\
&+  (1-\varphi)V^*_p(t+1)  \Big)\leq \parallel V^*_p(t+1) - f^*_p(t+1) \parallel \\
& \cdot \parallel \nabla Z_p^{prim}\Big( \varphi f^*_p(t+1) +  (1-\varphi)V^*_p(t+1)  \Big) \parallel.
\end{aligned}
$$}
Let $\epsilon^{pi}(t)=\epsilon_p(t)-\epsilon_i(t)$. From the definition of $Z_p^{prim}(f_p,t|D_p)$, we have:
{\small $$
\begin{aligned}
&Z_p^{prim}(f_p,t|D_p) =Z_p(f_p|D_p)\\
-&\eta \sum_{i\in \mathcal{N}_p}\Big((f_p -\frac{1}{2}(f_p(t)+f_i(t))^T\cdot(\epsilon^{pi}(t))
+\frac{1}{4}(\epsilon^{pi}(t))^2 \Big).
\end{aligned}
$$}
Taking the derivative of $Z_p^{prim}$ w.r.t. $f_p$ gives
$$
\begin{aligned}
\nabla Z_p^{prim}(f_p, t|D_p) = & \frac{C^R}{B_p}\sum_{i=1}^{B_p}y_{ip}\mathcal{L}'(y_{ip}f_p^Tx_{ip})x_{ip} \\
&+ \rho \nabla R(f_p) - \eta\sum_{j\in \mathcal{N}_p}\epsilon^{pi}(t).
\end{aligned}
$$
Since $\nabla Z_p^{prim}(f^*_p(t+1), t|D_p)=0$, then, we have:
{\small $$
\begin{aligned}
&\nabla Z_p^{prim}\Big( \varphi f^*_p(t+1) +  (1-\varphi)V^*_p(t+1)  |D_p\Big)\\
& = \nabla Z_p^{prim}(f^*_p(t+1), t|D_p) - \rho\Big(\nabla R(f^*_p(t+1)) - \nabla R\big(\varphi f^*_p(t+1) \\
&+  (1-\varphi)V^*_p(t+1)\big) \Big)-\frac{C^R}{B_p}\sum_{i=1}^{B_p} \Bigg(y_{ip}\Big(   \mathcal{L}'(y_{ip}f^*_p(t+1)^Tx_{ip})\\
& -\mathcal{L}'(y_{ip}\big( \varphi f^*_p(t+1) +  (1-\varphi)V^*_p(t+1) \big)^Tx_{ip}) \Big)x_{ip}\Bigg).
\end{aligned}
$$}
$$
\begin{aligned}
\textrm{Let~}  T =& y_{ip}\Big(   \mathcal{L}'(y_{ip}f^*_p(t+1)^Tx_{ip})\\
& -\mathcal{L}'(y_{ip}\big( \varphi f^*_p(t+1) +  (1-\varphi)V^*_p(t+1) \big)^Tx_{ip}) \Big)x_{ip}.
\end{aligned}
$$
Based on the condition on the loss function:
$
|\mathcal{L}'(a)-\mathcal{L}'(b)|\leq c_4|a - b|,
$ we can bound $T$ as follows:
{\small $$
\begin{aligned}
T\leq & |y_{ip}|\parallel x_{ip} \parallel \\
&\cdot |\mathcal{L}'(y_{ip}f^*_p(t+1)^Tx_{ip})\\
& -\mathcal{L}'(y_{ip}\big( \varphi f^*_p(t+1) +  (1-\varphi)V^*_p(t+1) \big)^Tx_{ip}) |\\
\leq& |y_{ip}|\parallel x_{ip} \parallel\cdot c_4\cdot |y_{ip}(1-\varphi)(f^*_p(t+1)-V^*_p(t+1))^Tx_{ip}|\\
\leq & c_4\cdot (1-\varphi)|y_{ip}|^2\parallel x_{ip} \parallel ^2 \parallel f^*_p(t+1)-V^*_p(t+1) \parallel \\
\leq& c_4\cdot (1-\varphi) \parallel f^*_p(t+1)-V^*_p(t+1) \parallel .
\end{aligned}
$$}
Since we assume $R(\cdot)$ is doubly differentiable, we then apply the 
\textit{Mean Value Theorem}:
$$
\begin{aligned}
& \parallel \nabla R(f^*_p(t+1)) - \nabla R\big(\varphi f^*_p(t+1) +  (1-\varphi)V^*_p(t+1)\big) \parallel\\
&\leq (1-\varphi) \parallel f^*_p(t+1) -V^*_p(t+1) \parallel\cdot \parallel \nabla^2R(\xi) \parallel,
\end{aligned}
$$
where $\xi \in \mathbbm{R}^d$. Therefore, we have
{\small $$
\begin{aligned}
&\nabla Z_p^{prim}\Big( \varphi f^*_p(t+1) +  (1-\varphi)V^*_p(t+1)  |D_p\Big)\\
& \leq (1-\varphi) \parallel f^*_p(t+1) -V^*_p(t+1) \parallel\cdot \rho\cdot \parallel \nabla^2R(\xi) \parallel \\
&+ C^R c_4\cdot (1-\varphi) \parallel f^*_p(t+1)-V^*_p(t+1) \parallel \\
&\leq (1-\varphi)\cdot \parallel f^*_p(t+1)-V^*_p(t+1) \parallel\Big(\rho\tau + C^Rc_4 \Big)\\
& \leq \parallel f^*_p(t+1)-V^*_p(t+1) \parallel\Big(\rho\tau + C^Rc_4 \Big).
\end{aligned}
$$}
Since $ f^*_p(t+1)-V^*_p(t+1)=\epsilon_p(t)$, with density $\Gamma(d,\frac{2C^R}{\rho B_p\alpha_p(t)})$ then, we can apply Lemma 10 to  $\parallel f^*_p(t+1)-V^*_p(t+1) \parallel $. Thus,  with $\parallel f^*_p(t+1)-V^*_p(t+1) \parallel\leq \frac{2C^Rd\ln(\frac{d}{\delta})}{\rho B_p\alpha_p(t)}$, we have:
{\small $$
\begin{aligned}
&Z_p^{prim}(V^*_p(t+1),t|D_p) -Z_p^{prim}(f^*_p(t+1),t|D_p)\\
&\leq \frac{4 \big(C^R\big)^2 d^2\Big(\rho\tau + c_4C^R \Big)\Big( \ln(\frac{d}{\delta})\Big)^2}{\rho^2  B_p^2\alpha_p(t)^2},
\end{aligned}
$$}
with probability no less than $1-\delta$.

\end{proof}

\renewcommand\refname{Reference}
\bibliographystyle{plain} 
\bibliography{Private_Up_1-QZ_V1}

\end{document}